\newcommand{\define}[1]{{\bf \boldmath{#1}}}
\newtheorem{theorem}{Theorem}
\newtheorem{proposition}[theorem]{Proposition}
\newtheorem{definition}[theorem]{Definition}
\def\eqref#1{equation~\ref{#1}}
\def\1{\bm{1}}
\def\eps{{\epsilon}}
\def\rmB{{\mathbf{B}}}
\def\rmP{{\mathbf{P}}}
\def\rmT{{\mathbf{T}}}
\def\rmX{{\mathbf{X}}}
\def\mA{\bm{A}}
\def\mB{\bm{B}}
\def\mI{{\bm{I}}}
\def\mL{{\bm{L}}}
\def\mP{{\bm{P}}}
\def\mT{{\bm{T}}}
\def\mX{\bm{X}}
\DeclareMathAlphabet{\mathsfit}{\encodingdefault}{\sfdefault}{m}{sl}
\SetMathAlphabet{\mathsfit}{bold}{\encodingdefault}{\sfdefault}{bx}{n}
\def\gB{{\mathcal{B}}}
\def\gC{{\mathcal{C}}}
\def\gF{{\mathcal{F}}}
\def\gG{{\mathcal{G}}}
\def\gK{{\mathcal{K}}}
\def\gN{{\mathcal{N}}}
\def\gO{{\mathcal{O}}}
\def\gX{{\mathcal{X}}}
\def\sC{{\mathbb{C}}}
\def\sN{{\mathbb{N}}}
\def\sR{{\mathbb{R}}}
\newcommand{\nup}{\gN_{\uparrow}}
\newcommand{\ndown}{\gN_{\downarrow}}
\newcommand{\da}{\downarrow}
\newcommand{\ua}{\uparrow}
\newtheorem{lemma}[theorem]{Lemma}
\newtheorem{corollary}[theorem]{Corollary}
\newtheorem{example}[theorem]{Example}
\renewcommand{\eqref}[1]{(\ref{#1})}
\newcommand*{\ldblbrace}{\{\mskip-5mu\{}
\newcommand*{\rdblbrace}{\}\mskip-5mu\}}
\def\mcirc{\mathbin{\scalerel*{\bigcirc}{t}}}
\newcommand{\FF}{\mathcal{F}}
\newcommand{\Hilb}{\mathrm{Hilb}}
\newcommand{\tte}{\texttt{e}}
\newcommand{\first}[1]{\textbf{\textcolor{red}{#1}}}
\newcommand{\second}[1]{\textbf{\textcolor{violet}{#1}}}
\newcommand{\third}[1]{\textbf{\textcolor{black}{#1}}}
\title{Weisfeiler and Lehman Go Cellular: CW Networks}
\author{%
  Cristian Bodnar\thanks{Authors contributed equally.}  \\
  University of Cambridge\\
  \texttt{cb2015@cam.ac.uk} \\
   \And
   Fabrizio Frasca$^*$ \\
   Imperial College London \& Twitter \\
   \texttt{ffrasca@twitter.com} \\
   \AND
   Nina Otter \\
   UCLA \\
   \texttt{otter@math.ucla.edu} \\
   \And
   Yu Guang Wang \\
   MPI-MIS, SJTU \& UNSW \\
   \texttt{yuguang.wang@unsw.edu.au} \\
   \AND
   Pietro Li\`{o} \\
   University of Cambridge \\
   \texttt{pl219@cam.ac.uk} \\
   \And
   Guido Mont\'{u}far \\
   MPI-MIS \& UCLA \\
   \texttt{montufar@math.ucla.edu} \\
   \And
   Michael Bronstein \\
   Imperial College London \& Twitter \\
   \texttt{mbronstein@twitter.com} \\
}
\begin{document}

\maketitle

\begin{abstract}
Graph Neural Networks (GNNs) are limited in their expressive power, struggle with long-range interactions and lack a principled way to model higher-order structures.
These problems can be attributed to the strong coupling between the computational graph and the input graph structure.
The recently proposed Message Passing Simplicial Networks naturally decouple these elements by performing message passing on the clique complex of the graph. Nevertheless, these models can be severely constrained by the rigid combinatorial structure of Simplicial Complexes (SCs). 
In this work, we extend recent theoretical results on SCs to regular Cell Complexes, topological objects that flexibly subsume SCs and graphs. 
We show that this generalisation provides a powerful set of graph ``lifting'' transformations, each leading to a unique hierarchical message passing procedure. The resulting methods, which we collectively call CW Networks (CWNs), are strictly more powerful than the WL test and not less powerful than the 3-WL test. In particular, we demonstrate the effectiveness of one such scheme, based on rings, when applied to molecular graph problems. The proposed architecture benefits from provably larger expressivity than commonly used GNNs, principled modelling of higher-order signals and from compressing the distances between nodes. We demonstrate that our model achieves state-of-the-art results on a variety of molecular datasets.
\end{abstract}

\section{Introduction}

The operations performed by message passing Graph Neural Networks (GNNs) emulate the structure of the input graph. While this property has clear computational advantages, it brings with it a series of fundamental limitations. As observed by \citet{GIN} and \citet{morris2019weisfeiler} the local neighbourhood aggregations used by GNNs are at most as powerful as the Weisfeiler-Lehman (WL) test \citep{weisfeiler1968reduction} in distinguishing non-isomorphic graphs. Therefore, GNNs fail to detect certain higer-order meso-scale structures such as cliques or (induced) cycles~\citep{ARVIND202042, chen2020can}, which are particularly important in applications dealing with social and biological networks or molecular graphs. At the same time, many such layers have to be stacked to make long-range interactions in the graph possible. Besides the computational burden this incurs, deep GNNs typically come with additional problems such as over-smoothing \citep{li2018deeper} and over-squashing \citep{alon2020bottleneck} of the node representations.  

To address these problems, we propose a novel message passing procedure based on (regular) cell complexes, also known as CW complexes\footnote{We use these terms interchangeably. For the latter, the C stands for ``closure-finite'', and the W for ``weak'' topology. The term was coined by 
\citet{bams/1183513543}.}, topological objects that form the building block of algebraic topology \citep{hatcher_book}. When paired with a theoretically-justified ``lifting'' transformation augmenting the graph with higher-dimensional constructs called ``cells'', our method results in a multi-dimensional and hierarchical message passing procedure over the input graph. Our approach generalises and subsumes the recently proposed Message Passing Simplicial Networks (MPSNs) \citep{bodnar2021weisfeiler}, which operate on simplicial complexes (SCs), topological generalisations of graphs. However, SCs have a rigid combinatorial structure that significantly limits the range of lifting transformations one could use to meaningfully modulate the message passing procedure. In contrast, we show that cell complexes, which in turn generalise simplicial complexes and come with additional flexibility, allow one to construct new and better ways of decoupling the input and computational graphs. 

\paragraph{Main Contributions} To summarise, we propose a message passing scheme operating on regular cell complexes. We call this family of models CW Networks (CWNs) and study their expressive power using a cellular version of the WL test. We show that for an entire class of ``lifting'' transformations CWNs are at least as powerful as the WL test. Furthermore, we prove that for some of the maps in this class, CWNs can be strictly more powerful than WL, Simplicial WL (SWL) and also not less powerful than 3-WL. We also express the fundamental symmetries of these models and show how they can be seen as generalised convolutional operators on cell complexes. Experimentally, we focus our attention on a particular ``lifting'' map based on induced cycles. When applied to molecular graphs, it leads to an intuitive hierarchical message passing procedure involving the atoms, the bonds between them and the chemical rings of the molecules. We demonstrate that this provably powerful approach obtains state-of-the-art results on popular large-scale molecular graph datasets and other related tasks. To the best of our knowledge, this is the first work proposing a cell complex representation for molecules. Our code is available at \url{https://github.com/twitter-research/cwn}. 

\section{Background}

\begin{definition}[\citet{HGh19}]
A \define{regular cell complex} (Figure \ref{fig:cell_def}) is a topological space $X$ together with a partition $\{X_\sigma\}_{\sigma\in P_X}$ of subspaces $X_\sigma$ of $X$ called \define{cells}, and such that
\begin{enumerate}[leftmargin=*,topsep=0pt,itemsep=-0.5ex]
    \item  For each $x\in X$ there exists an open neighbordhood of $x$ that intersects finitely many cells.
\item  For all $\sigma,\tau$ we have that $X_\tau\cap \overline{X_\sigma}\ne \emptyset $ iff $X_\tau\subseteq \overline{X_\sigma}$, where $\overline{X_\sigma}$ is the closure of a cell. 
\item Every cell is homeomorphic to $\mathbb{R}^n$ for some $n$.
\item  ({\em Regularity}) For every $\sigma\in P_X$ there is a homeomorphism $\phi$ of a closed ball in $\mathbb{R}^{n_\sigma}$ to $\overline{X_\sigma}$ such that the restriction of $\phi$ to the  interior of the ball is a homeomorphism onto $X_\sigma$.
\end{enumerate}
\end{definition}

\begin{wrapfigure}{r}{0.35\textwidth}
    \centering
    \vspace{-15pt}
    \begin{subfigure}[b]{1.0\linewidth}
        \centering
        \includegraphics[width=\textwidth]{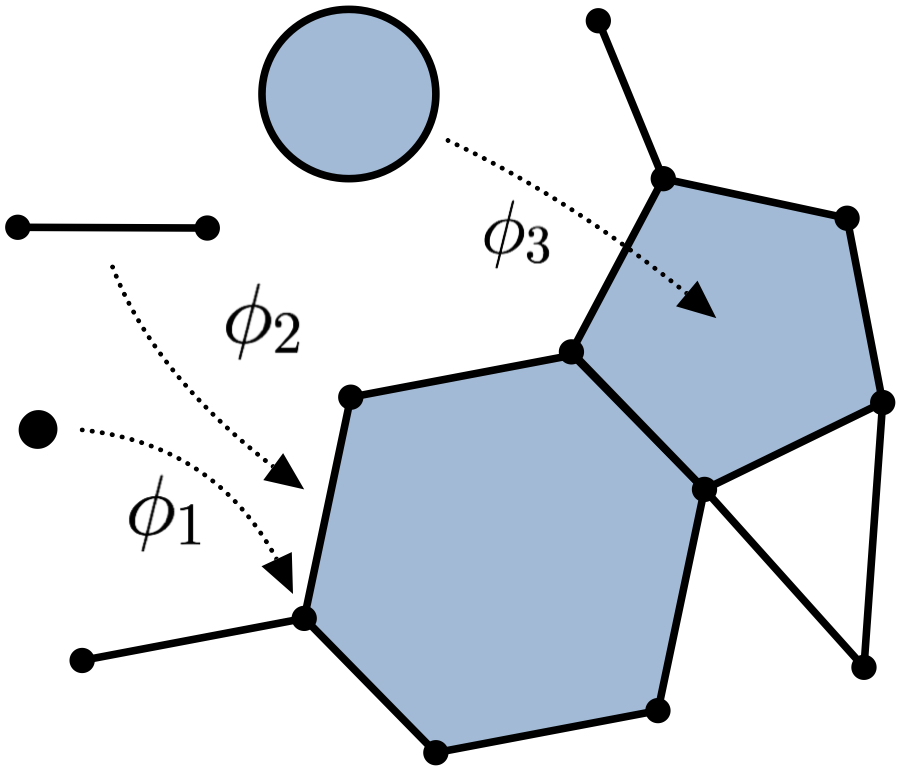}
        \label{fig:y equals x}
     \end{subfigure}\hspace{3mm}
    \vspace{-23pt}
    \caption{A cell complex $X$ and the corresponding homeomorphisms to the closed balls for three cells of different dimensions in the complex.}
    \label{fig:cell_def}
    \vspace{-20pt}
\end{wrapfigure}



We note that by  condition $(2)$ the indexing set $P_X$ has a poset structure $\tau\leq \sigma \Leftrightarrow X_\tau\subseteq \overline{X_\sigma}$,
while condition $(4)$ guarantees that this poset structure encodes all the topological information about $X$. Thus, we can identify a regular cell complex $X$ with this poset, called \define{face poset} of $X$. We also use $\tau < \sigma$ for the strict version of this partial order.  

Intuitively, one constructs a cell complex through a hierarchical gluing procedure. One starts with a set of vertices (0-cells). Then edges (1-cells) are attached to these by gluing the endpoints of closed line segments to them. We have now only described a (multi) graph. However, one can generalise this even further by taking a two-dimensional closed disk and glue its boundary (i.e. a circle) to any simple cycle in the (multi) graph previously built as in Figure \ref{fig:gluing}. While we are generally not concerned with dimensions above two, this can be further generalised by gluing the boundary of $n$-dimensional balls to certain $(n-1)-$cells in the complex. 

\begin{figure}
    \centering
     \begin{minipage}{.63\textwidth}
         \includegraphics[width=0.48\linewidth]{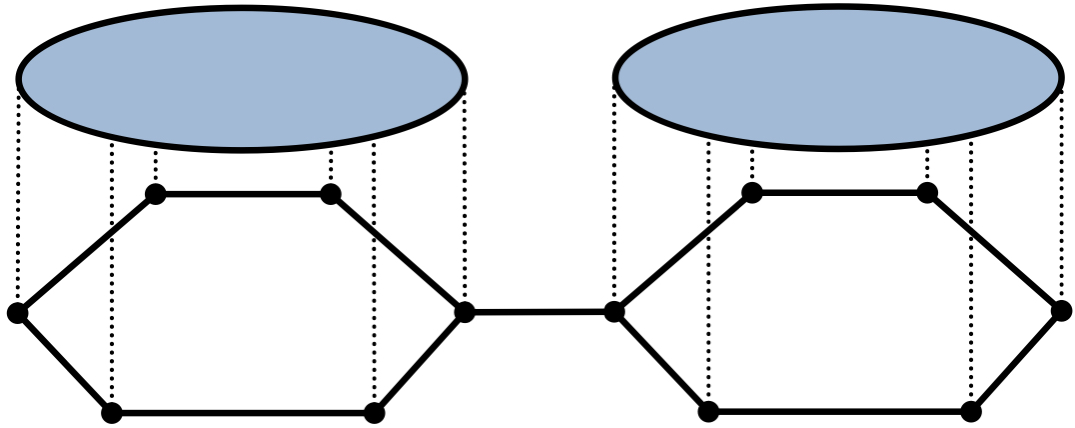}
         \hfill
         \includegraphics[width=0.48\linewidth]{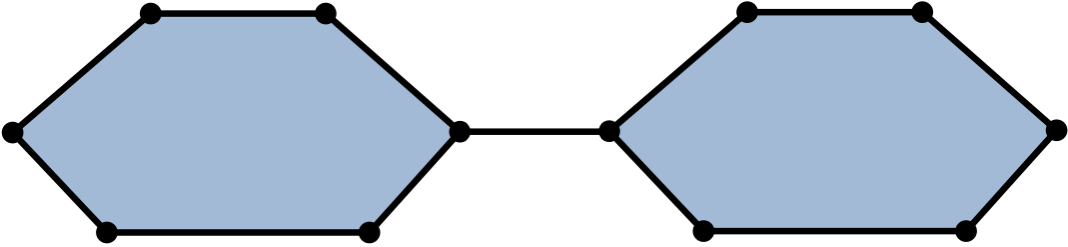}
         \caption{Closed two-dimensional disks are glued to the boundary of the rings present in the graph (left). The result is a 2D regular cell complex (right).}
        \label{fig:gluing}
     \end{minipage}
     \hfill
     \begin{minipage}{0.33\textwidth}
         \centering
         \includegraphics[width=0.35\linewidth]{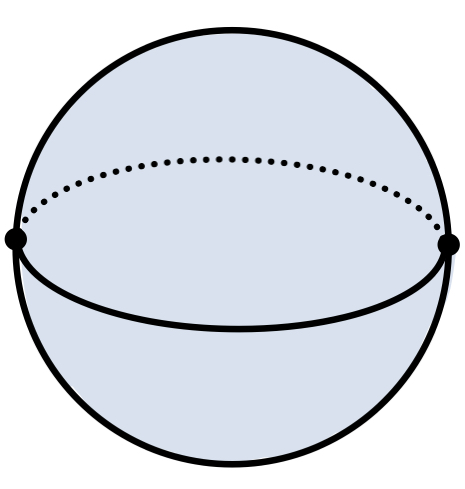}
        \hspace{10pt}
         \includegraphics[width=0.35\linewidth]{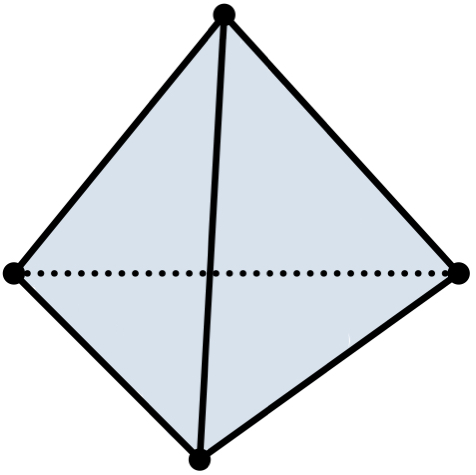}
         \caption{A sphere and an empty tetrahedron. The latter is also a simplicial complex.}
        \label{fig:1}
     \end{minipage}
     \vspace{-10pt}
\end{figure}

Consider the examples in Figure \ref{fig:1}. The shown sphere is a cell complex obtained from  two 0-cells (i.e.\ vertices), to which two 1-cells (i.e.\ edges), which form the equator, were attached. The boundary of two 2-dimensional disks (i.e.\ the two hemispheres) were glued to the equator to form a sphere. The second example is a tetrahedron with empty interior. It is a particular type of cell complex called a \emph{simplicial complex} (SC). The only 2-cells it allows are triangle-shaped. More generally, the $n$-dimensional cells of SCs are $n$-simplices, which makes them slightly more rigid structures.


\begin{definition}
The $\mathbf{k}$\textbf{-skeleton} of a cell complex $X$, denoted $X^{(k)}$, is the subcomplex of $X$ consisting of cells of dimension at most $k$.
\end{definition}

This definition is useful for referring for certain parts of the complex. For instance, $X^{(0)}$ contains the vertices in the complex, while $X^{(1)}$ contains the vertices \emph{and} the edges (i.e. the underlying graph). 

The combinatorial structure of the complex can be more compactly described by an incidence relation we call the \emph{boundary relation}, whose reflexive and transitive closure gives the partial order defined above. The boundary relation describes what cells are on the boundary of other cells. For instance, the edges of the sphere in Figure \ref{fig:1} are on the boundary of the 2-cells forming the two hemispheres. 

\begin{definition}
\label{def:boundary_rel}
We have the \textbf{boundary relation} $\sigma \prec \tau$ iff $\sigma < \tau$ and there is no cell $\delta$ such that $\sigma < \delta < \tau$.
\end{definition}

We can use this to define the four types of (local) adjacencies present in cell complexes. These adjacencies will be the fundamental building block of our message passing procedure. To explain these in more familiar terms, for each adjacency, we exemplify how it shows up in graphs.   

\begin{definition}[Cell complex adjacencies]
\label{def:adj}
For a cell complex $X$ and a cell $\sigma \in P_X$, we define:
\begin{enumerate}[leftmargin=*, topsep=0pt,itemsep=-0.5ex]
    \item The boundary adjacent cells $\gB(\sigma) = \{ \tau \mid \tau \prec \sigma \}$. These are the lower-dimensional cells on the boundary of $\sigma$. For instance, the boundary cells of an edge are its vertices.
    \item The co-boundary adjacent cell $\gC(\sigma) = \{ \tau \mid \sigma \prec \tau \}$. These are the higher-dimensional cells with $\sigma$ on their boundary. For instance, the co-boundary cells of a vertex are the edges it is part of.
    \item The lower adjacent cells $\ndown(\sigma) = \{ \tau \mid \exists \delta$ such that $\delta \prec \sigma$ and $\delta \prec \tau \}$. These are the cells of the same dimension as $\sigma$ that share a lower dimensional cell on their boundary. The line graph adjacencies between the edges are a classic example of this. 
    \item The upper adjacent cells $\nup(\sigma) = \{ \tau \mid \exists \delta$ such that $\sigma \prec \delta$ and $\tau \prec \delta \}$. These are the cells of the same dimension as $\sigma$ that are on the boundary of the same higher-dimensional cell as $\sigma$. The typical graph adjacencies between vertices are the canonical example here. 
\end{enumerate}
\end{definition}

\section{Cellular Weisfeiler Lehman}

\paragraph{Overview} The results in this section show how one can transform graphs into higher-dimensional cell complexes in such a way that performing colour refinement on the resulting cell complexes makes it easier to test their isomorphism. The message passing model from Section \ref{sec:CWN_MMP} will take advantage of these theoretical results. All proofs can be found in Appendix \ref{app:proofs_cellular}. 

\begin{definition}
Let $c$ be a colouring of the cells in a complex $X$ with $c_\sigma$ denoting the colour assigned to cell $\sigma \in P_X$. Define $\gB(\sigma, \tau) := \gB(\sigma) \cap \gB(\tau)$ and $\gC(\sigma, \tau) := \gC(\sigma) \cap \gC(\tau)$. We define the following multi-sets of colours:
\begin{enumerate}[leftmargin=*, topsep=0pt,itemsep=-0.5ex]
    \item The colours of the boundary cells of $\sigma$: $c_\gB(\sigma) = \ldblbrace c_\tau \mid \tau \in \gB(\sigma) \rdblbrace$. 
    \item The colours of the co-boundary cells of $\sigma$: $c_\gC(\sigma) = \ldblbrace c_\tau \mid \tau \in \gC(\sigma) \rdblbrace$.
    \item The lower adjacent colours of $\sigma$: $c_\da(\sigma) = \ldblbrace (c_\tau, c_\delta) \mid \tau \in \ndown(\sigma)$ and $\delta \in \gB(\sigma, \tau) \rdblbrace$.
    \item The upper adjacent colours of $\sigma$: $c_\ua(\sigma) = \ldblbrace (c_\tau, c_\delta) \mid \tau \in \nup(\sigma)$ and $\delta \in \gC(\sigma, \tau) \rdblbrace$.
\end{enumerate}
\end{definition}

Note that unlike in graphs and simplicial complexes, the sets $\gB(\sigma, \tau)$ and $\gC(\sigma, \tau)$ can have more than one element. For instance, two (closed) 2-cells might intersect in more than one edge (e.g. the two hemispheres in Figure \ref{fig:1}), and conversely, two edges might be on the boundary of the same two 2-cells. This illustrates the more flexible combinatorial structure of cell complexes. 

\paragraph{Cellular WL (CWL)} We consider CWL, a colour refinement scheme for cell complexes that generalises the Simplicial WL~\citep{bodnar2021weisfeiler} and WL~\citep{weisfeiler1968reduction} tests. We use $c_\sigma^t$ to refer to the colour assigned by CWL to cell $\sigma$ at iteration $t$ of the algorithm. When the input is a simplicial complex, this recovers the SWL algorithm. A step of the algorithm is graphically depicted in Figure \ref{fig:CWL} for a single cell. 
\begin{enumerate}[leftmargin=*]
    \item Given a regular cell complex $X$, all the cells $\sigma$ are initialised with the same colour. 
    \item Given the colour $c_\sigma^t$ of cell $\sigma$ at iteration $t$, we compute the colour of cell $\sigma$ at the next iteration  $c_\sigma^{t+1}$ by injectively mapping the multi-sets of colours belonging to the adjacent cells of $\sigma$ using a perfect HASH function:  $c_\sigma^{t+1} = \mathrm{HASH}\big(c_\sigma^t, c_\gB^t(\sigma), c_\gC^t(\sigma), c_\da^t(\sigma), c_\ua^t(\sigma)\big). $
    \item The algorithm stops when a stable colouring is reached. Two cell complexes are considered non-isomorphic if their colour histograms are different. Otherwise, the test is inconclusive. 
\end{enumerate}

First, we state the following theorem from \citet{bodnar2021weisfeiler} involving SWL and simplicial complexes. This theorem shows that on simplicial complexes, certain adjacencies can be pruned without affecting the non-isomorphic SCs that can be distinguished. This has important computational implications. 

\begin{theorem}
SWL without coboundary and lower-adjacencies has the same expressive power in distinguishing non-isomorphic simplicial complexes as SWL with the complete set of adjacencies. 
\end{theorem}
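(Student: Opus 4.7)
The plan is to establish equivalence in two directions. One direction — that SWL with the complete set of adjacencies is at least as powerful as the reduced version — is immediate, since the reduced scheme aggregates strictly less information at each step. For the reverse direction, I would show that on simplicial complexes the coboundary and lower-adjacency multisets can be reconstructed from the boundary and upper-adjacency multisets via two structural facts specific to SCs.

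First, for $c^t_\gC(\sigma)$: in a simplicial complex every $\delta \in \gC(\sigma)$ is an $(n{+}1)$-simplex (with $n = \dim \sigma$) having exactly $n{+}2$ boundary simplices, one of which is $\sigma$. Hence $\delta$ contributes exactly $n{+}1$ pairs $(c^t_\tau, c^t_\delta)$ to $c^t_\ua(\sigma)$, one for each other face $\tau \prec \delta$. Since the dimension of $\sigma$ is encoded in $c^t_\sigma$ after a single refinement step (cells of different dimensions already have boundary multisets of different cardinalities after one iteration), projecting $c^t_\ua(\sigma)$ onto its second component and dividing multiplicities by $n{+}1$ recovers $c^t_\gC(\sigma)$.

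Second, for $c^t_\da(\sigma)$: in an SC any two distinct $n$-simplices share at most one $(n{-}1)$-face, so $|\gB(\sigma,\tau)| \le 1$ whenever $\sigma, \tau$ are lower-adjacent, and the multiset decomposes as $c^t_\da(\sigma) = \ldblbrace (c^t_\tau, c^t_\delta) \mid \delta \in \gB(\sigma),\ \tau \in \gC(\delta) \setminus \{\sigma\} \rdblbrace$. The $c^t_\delta$ part is read off from $c^t_\gB(\sigma)$, while the $c^t_\tau$ for $\tau \in \gC(\delta)$ form the multiset $c^t_\gC(\delta)$ (with one copy of $c^t_\sigma$ removed), which by the first observation is recoverable from $c^t_\ua(\delta)$.

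With these recoveries in hand, I would proceed by induction on the iteration $t$, showing that the reduced and full colourings induce the same partition of cells at every step; at the inductive step, the two facts above express the full aggregate $(c^t_\sigma, c^t_\gB, c^t_\gC, c^t_\da, c^t_\ua)$ as a function of reduced aggregates at $\sigma$ and at its boundary cells, while conversely the reduced aggregate is trivially a restriction of the full one. The main obstacle is that the recovery of $c^t_\da(\sigma)$ uses $c^t_\ua(\delta)$ for $\delta \in \gB(\sigma)$, which is not strictly local information at $\sigma$ in a single step; the cleanest workaround is to phrase the argument at the level of stable colourings and to construct a colour-preserving bijection between the cells of two SCs from the stable reduced colouring, then use the recoveries to show this bijection must also preserve the stable full colouring, so the two schemes produce identical colour histograms at convergence.
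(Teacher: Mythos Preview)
Your two structural recoveries are correct and match the paper's argument (which proves the statement only for the more general CWL case, Theorem~\ref{thm:sparse_cwl}; the SWL version is quoted from prior work). The paper likewise recovers $c_\gC(\sigma)$ from the second component of $c_\ua(\sigma)$ by dividing out the fixed multiplicity, and recovers $c_\da(\sigma)$ via the coboundaries of the boundary cells $\delta\in\gB(\sigma)$.

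Where you diverge is in handling the non-locality you correctly flag. The paper does \emph{not} claim that the reduced and full partitions coincide at every step (they need not), nor does it pass to stable colourings. Instead it proves a time-shifted refinement: first $b^{t+1}\sqsubseteq a^t$ to drop coboundaries (Lemma~\ref{lemma:drop_cofaces}), then $b^{2t+1}\sqsubseteq a^t$ to drop lower adjacencies, the factor of two reflecting that one step of lower-adjacency information is simulated by two steps of boundary-then-upper propagation. This yields a clean finite-time statement. Your stable-colouring route is also valid, but the ``colour-preserving bijection'' is doing no work: it carries no structural data, so you cannot apply the recoveries to it. What you actually need is the refinement $b^\infty\sqsubseteq a^t$ for all $t$, proved by induction on $t$: if $b^\infty_\sigma=b^\infty_\tau$ then stability gives $b^\infty_\gB(\sigma)=b^\infty_\gB(\tau)$, hence a multiset bijection on boundary cells preserving $b^\infty$-colours; for each matched pair stability again gives equal $b^\infty_\ua$-multisets, and your recoveries then yield $a^t_\da(\sigma)=a^t_\da(\tau)$. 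State it this way and drop the bijection framing.
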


It is not immediately clear whether an equivalent theorem would also hold for cell complexes. This is because cells, unlike simplices, can have widely different shapes and, as described above, the adjacencies between them take more complicated forms. Nevertheless, we show that a positive result can be obtained. 

\begin{theorem}
\label{thm:sparse_cwl}
CWL without coboundary and lower-adjacencies has the same expressive power in distinguishing non-isomorphic cell complexes as CWL with the complete set of adjacencies. 
\end{theorem}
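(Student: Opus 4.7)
The plan is to prove refinement in both directions between the stable colourings of sparse CWL (using only $c^t_\gB$ and $c^t_\ua$) and full CWL (additionally using $c^t_\gC$ and $c^t_\da$). One direction is immediate: sparse CWL uses a subset of the full CWL information at every step, so the full colouring refines the sparse colouring, and any pair of complexes that sparse CWL distinguishes is also distinguished by full CWL. For the converse, I would show by induction on the full-CWL iteration count $t$ that there exists $T(t)$ for which the sparse colouring after $T(t)$ iterations refines the full colouring after $t$ iterations, globally; monotonicity of colour refinement allows us to consume additional sparse iterations freely.

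For the inductive step, $c^{t+1}_\sigma = \mathrm{HASH}(c^t_\sigma, c^t_\gB(\sigma), c^t_\gC(\sigma), c^t_\da(\sigma), c^t_\ua(\sigma))$, so three of the five arguments are recovered from sparse colours at step $T(t)+1$ directly by the inductive hypothesis. The work is to reconstruct $c^t_\gC(\sigma)$ and $c^t_\da(\sigma)$ from sparse information. For the coboundary multiset, I would use that regularity forces the boundary sphere $S^n$ of every $(n+1)$-cell $\delta$ to admit a regular CW structure with at least two $n$-cells; consequently, every $\delta \in \gC(\sigma)$ is witnessed in $c^{T(t)+1}_\ua(\sigma)$ with multiplicity exactly $|\gB(\delta)|-1 \geq 1$. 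The cardinality $|\gB(\delta)|$ is encoded in the sparse colour of $\delta$ through the size of the multiset $c^{T(t)}_\gB(\delta)$, and $\sigma$ has access to this refined colour at iteration $T(t)+2$ via the second component of the upper-adjacency pairs. Grouping pairs by colour class of the second component and dividing by this colour-determined multiplicity recovers $c^t_\gC(\sigma)$ exactly.

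The reconstruction of $c^t_\da(\sigma)$ is the main technical obstacle. A pair $(c^t_\tau, c^t_\delta)$ in this multiset corresponds to some $\delta \in \gB(\sigma)$ and some $\tau \in \gC(\delta) \setminus \{\sigma\}$ of the same dimension as $\sigma$, so per boundary cell $\delta$ the contribution is the pairing of $c^t_\delta$ with $c^t_\gC(\delta)$ after removing one copy of the known colour $c^t_\sigma$. My plan is to apply the coboundary reconstruction pointwise at each $\delta$: the sparse colour of $\delta$ at iteration $T(t)+2$ determines $c^t_\gC(\delta)$, and since boundary cells are visible to $\sigma$ through $c_\gB$, one further sparse iteration allows $\sigma$'s colour to read off $c^t_\gC(\delta)$ for every $\delta \in \gB(\sigma)$. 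Combining with $c^t_\sigma$ and $c^t_\delta$ then yields $c^t_\da(\sigma)$. The subtlety, absent in the simplicial case, is that in regular cell complexes $\gB(\sigma,\tau)$ and $\gC(\sigma,\tau)$ need not be singletons, so the division-by-multiplicity steps must use colour-encoded cardinalities rather than a fixed combinatorial constant depending only on dimension; verifying that this bookkeeping remains consistent in the cellular setting is the crux of the argument. Setting $T(t+1) = T(t) + 3$ closes the induction and, together with the easy direction, proves the equivalence.
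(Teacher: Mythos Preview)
Your proposal is correct and rests on the same combinatorial core as the paper's argument: after one sparse iteration the colour of any cell encodes its boundary size (the paper isolates this as Proposition~\ref{prop:cwl_face_id}), and each coboundary cell $\delta$ of $\sigma$ is witnessed in the second component of $c_\ua(\sigma)$ with multiplicity exactly $|\gB(\delta)|-1$, a quantity recoverable from the colour of $\delta$. The organisations differ, however. The paper proceeds in two stages: a lemma first drops $c_\gC$ alone, showing $b^{t+1}\sqsubseteq a^t$ by the division-by-multiplicity argument you describe; the main proof then drops $c_\da$ via a contradiction argument --- assuming a discrepancy in the lower-adjacency multisets at some colour pair $(\sC_0,\sC_1)$, it builds auxiliary colourings $A_X(\delta)$ counting $\sC_0$-coloured cofaces of each $\delta$ and $C_X(\sigma)$ collecting these sizes over $\gB(\sigma)$, and shows that $b^{2t+2}$ already refines this auxiliary colouring, forcing a contradiction two steps later. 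You instead run a single induction with step $T(t{+}1)=T(t)+3$, reconstructing $c^t_\gC(\sigma)$ directly and then obtaining $c^t_\da(\sigma)$ by applying the same coboundary reconstruction pointwise at each $\delta\in\gB(\sigma)$ and reading the result back through $c_\gB$. Your route is more constructive and avoids the auxiliary colourings and contradiction; the paper's staged decomposition yields a slightly tighter linear iteration bound, though this is immaterial for the theorem as stated.
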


We note this does not mean that the removed adjacencies are completely redundant in practice. Even if they are not needed from a (theoretical) colour refinment perspective, they might still include important inductive biases that make them suitable for certain tasks. 

\begin{figure}
    \centering
    \includegraphics[width=0.9\textwidth]{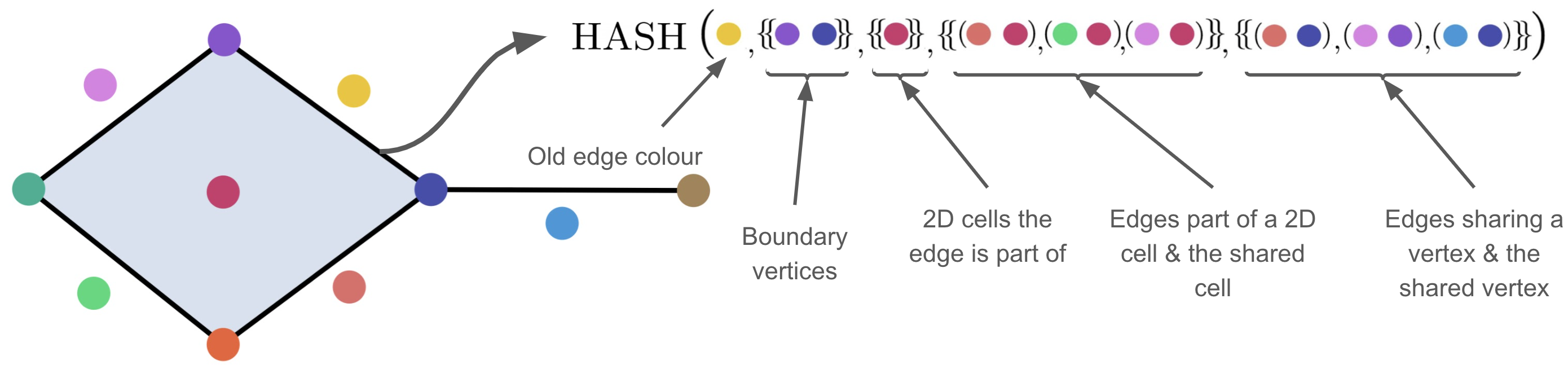}
    \caption{The CWL colouring procedure for the yellow edge of the cell complex. All cells have been assigned unique colours to aid the visualisation of the adjacencies. Note that the yellow edge aggregates long-range information from the light green edge.}
    \label{fig:CWL}
\end{figure}

We are now interested in examining various procedures for mapping, or ``lifting'', graphs into the space of regular cell complexes. Such a procedure can be used to test the isomorphism of two graphs by performing colour refinement on the cell complexes they are mapped to. The hope is that CWL applied to these cell complexes is more powerful than WL applied to the initial graphs. We will later show that for a wide range of transformations, this is indeed the case. We start by rigorously defining what we mean by a  ``lifting''.

\begin{definition}
A \textbf{cellular lifting map} is a function $f: \gG \to \gX$ from the space of graphs $\gG$ to the space of regular cell complexes $\gX$ with the property that two graphs $G_1, G_2$ are isomorphic iff the cell complexes $f(G_1), f(G_2)$ are isomorphic.  
\end{definition}

This property ensures that testing the isomorphism of the two cell complexes is equivalent to testing the isomorphism in the input graphs. This would not be the case if two non-isomoprhic graphs were mapped to the same cell complex.

\begin{example}
\label{ex:clique_complex}
It can be verified that the function mapping each graph to its clique complex (i.e.\ every $(k+1)$-clique in the graph becomes a $k$-simplex) is a cellular lifting map. 
\end{example}

The clique complex lifting map from Example \ref{ex:clique_complex} has been used by \citet{bodnar2021weisfeiler} to show that SWL is strictly more powerful than WL. We restate this result:
\begin{theorem}
SWL with clique complex lifting is strictly more powerful than WL. 
\end{theorem}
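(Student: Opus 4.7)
The plan is to prove the theorem in two parts: (i) SWL with clique-complex lifting is at least as powerful as WL, and (ii) there is a pair of graphs on which WL fails but SWL on the clique complex succeeds.

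For (i), I would induct on the refinement step $t$ to prove that the SWL colouring of the $0$-cells of the clique complex of a graph $G$ is at least as fine as the WL colouring of the corresponding vertices of $G$. The key structural observation is that the $1$-skeleton of the clique complex of $G$ coincides with $G$ itself, so for each vertex $v$ the upper-adjacent cells $\nup(v)$ in the complex are exactly the WL-neighbours of $v$, each witnessed by the unique edge (1-simplex) in $\gC(v,u)$. Consequently the component $c_\ua^t(v)$ of the SWL update determines the multi-set $\ldblbrace c^t_u : u \sim v\rdblbrace$ used by WL, and injectivity of HASH propagates the refinement from step $t$ to step $t{+}1$. A short auxiliary argument shows that after a single SWL iteration cells of different dimensions receive distinct colours (their HASH inputs disagree already in $|c_\gB|$), so the multi-set of $0$-cell colours can be read off from the full SWL colour histogram. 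Hence if the SWL histograms on the clique complexes of $G_1, G_2$ agree at every iteration, the WL vertex-colour histograms of $G_1, G_2$ also agree at every iteration, giving the ``at least as powerful'' direction.

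For (ii), I would exhibit $G_1 = K_{3,3}$ and $G_2$ equal to the triangular prism (two triangles joined by a perfect matching). Both graphs are $3$-regular with six vertices and nine edges, so starting from the uniform colouring $1$-WL produces a constant colouring on both after every iteration and hence cannot distinguish them. Their clique complexes, however, have different $2$-skeletons: $K_{3,3}$ is triangle-free and contributes no $2$-simplices, whereas the prism contributes the two $2$-simplices corresponding to its triangular faces. The two complexes therefore have different total cell counts ($15$ versus $17$), so their SWL colour histograms already disagree at iteration $0$, yielding the strict separation.

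The main obstacle in part (i) is making precise that the additional SWL update components at a $0$-cell (the boundary and lower-adjacent multi-sets) do not weaken the refinement relative to WL. For a $0$-cell the boundary and lower-adjacent multi-sets are empty and the co-boundary multi-set only contributes additional information about incident edges, so injectivity of HASH preserves the inductive invariant. Part (ii) is routine once the example is fixed, and one could equally well take any pair of non-isomorphic graphs with matching WL stable colourings but differing triangle counts (for instance, strongly regular graphs with identical parameters but different triangle statistics).
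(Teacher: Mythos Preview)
Your two-part strategy is correct and mirrors the paper's treatment. Note that the paper does not actually prove this particular theorem itself but restates it from \citet{bodnar2021weisfeiler}; however, the paper's own proofs of the more general CWL analogues---Theorem~\ref{thm:skeleton} for part~(i) and Corollary~\ref{cor:WL_lifting_maps} for part~(ii)---follow exactly your outline: an induction showing that the $0$-cell colouring refines the WL vertex colouring via the $1$-skeleton identification, followed by a concrete pair of WL-indistinguishable graphs with different triangle counts. The paper uses the pair displayed in Figure~\ref{fig:wl_corollary} rather than $K_{3,3}$ versus the triangular prism, but the mechanism is identical, and your example is arguably cleaner since the cell-count discrepancy is visible already at iteration~$0$.

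One minor slip worth correcting: your closing parenthetical about ``strongly regular graphs with identical parameters but different triangle statistics'' cannot furnish such examples. The parameters $(n,k,\lambda,\mu)$ of a strongly regular graph determine the triangle count exactly---each edge lies in precisely $\lambda$ triangles, giving $nk\lambda/6$ triangles in total---so two SR graphs in the same family always have identical triangle statistics. (Indeed, this is why the paper must move to $4$-cliques or to induced cycles of length $\geq 4$ in Theorem~\ref{thm:lifting3WL} to separate SR graphs.) This does not affect your main argument, which stands on the $K_{3,3}$/prism pair.
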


A natural question is what other lifting transformations make CWL strictly more powerful than WL? We first describe a space of lifting transformations that make CWL at least as powerful as WL. 

\begin{definition}
A lifting map is \textbf{skeleton-preserving} if for any graph $G$, the $1$-skeleton of $f(G)$ and $G$ are isomorphic as (multi) graphs.
\end{definition}

Intuitively, skeleton-preserving liftings ensure that the additional structure added by the lifting map comes from attaching cells of dimension at least two to the graph. These mappings keep the 0-cells and 1-cells intact and are, therefore, restricted from making modifications to the input graph structure. An important remark is that for simplicial complexes, attaching simplices based on cliques present in the graph is the only possible skeleton preserving transformation. Once again, this illustrates the limitations of simplicial complexes for adding useful higher-dimensional structures to the graph. 

\begin{wrapfigure}[9]{r}{0.35\textwidth}
    \begin{subfigure}[t!]{1.0\linewidth}
        \centering
        \vspace{-.3cm}
        \includegraphics[width=1.0\textwidth]{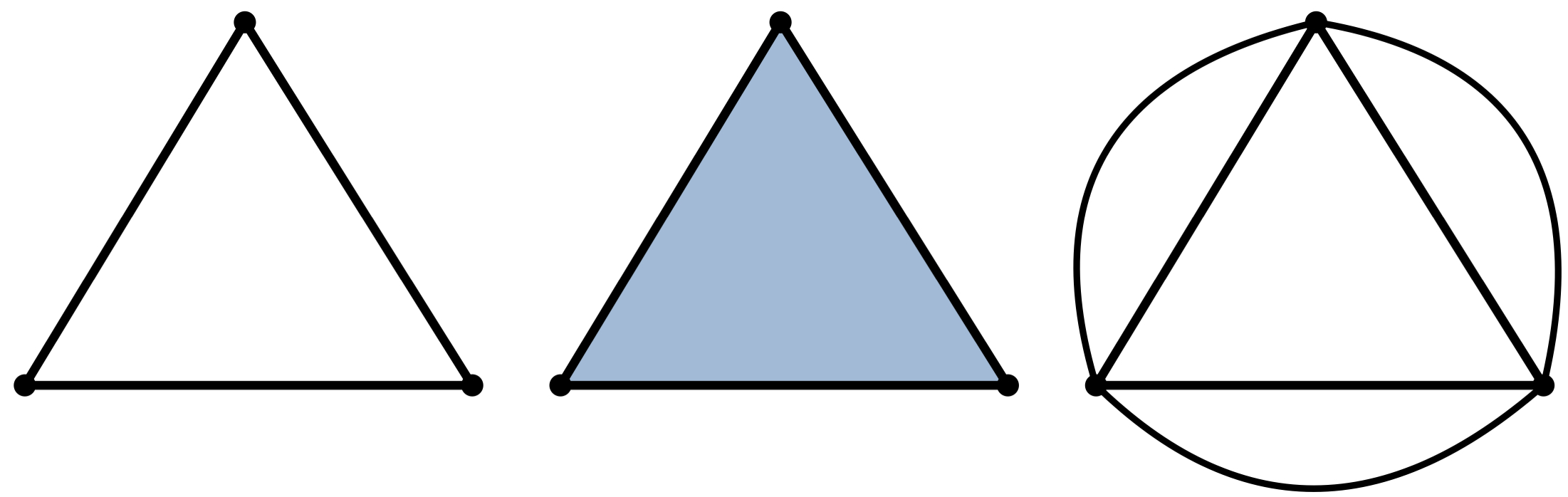}
        \vspace{-4mm}
    \end{subfigure}
     \caption{A graph, its clique complex and the graph with duplicated edges. The first map is skeleton-preserving, while the second is not.}
     \label{fig:sk_example}
\end{wrapfigure}

\begin{example}
\label{ex:skeleton-lift}
The function from Example \ref{ex:clique_complex} is also skeleton-preserving because the 1-skeleton of the clique complex of a graph is trivially isomorphic to the graph. A lifting function mapping each graph to a multi-graph where each edge is doubled by a parallel edge is not skeleton-preserving (Figure \ref{fig:sk_example}). 
\end{example}

We now show that all the maps in the skeleton-preserving class have the following desirable property: 

\begin{theorem}
\label{thm:skeleton}
Let $f$ be a skeleton-preserving lifting map. Then CWL($f$) (i.e. CWL using lifting $f$) is at least as powerful as WL in distinguishing non-isomorphic graphs.  
\end{theorem}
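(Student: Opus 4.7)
The plan is to show that, iteration by iteration, the CWL refinement on the $0$-cells of $f(G)$ is at least as fine as the WL refinement on the vertices of $G$. Using the $1$-skeleton isomorphism supplied by skeleton-preservation, I identify the vertices of $G$ with the $0$-cells of $f(G)$, and prove by induction on $t$ that for any two such $0$-cells $u, v$ (possibly from different graphs), $c_u^{\mathrm{CWL}, t} = c_v^{\mathrm{CWL}, t}$ implies $c_u^{\mathrm{WL}, t} = c_v^{\mathrm{WL}, t}$. Once this refinement is established, the multiset of vertex WL colours is determined by the multiset of $0$-cell CWL colours via a well-defined ``CWL-to-WL collapse'' function, so whenever WL separates two graphs on their vertex histograms, the $0$-cell sub-histogram of CWL must already differ, which in turn forces the full cell-complex histogram to differ.

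For the induction, the base case is immediate since both tests initialise with a constant colour. For the inductive step, a $0$-cell $\sigma$ has $\gB(\sigma) = \emptyset$ and hence $\ndown(\sigma) = \emptyset$, so the CWL update for a $0$-cell collapses to
\[
c_\sigma^{t+1} \;=\; \mathrm{HASH}\!\left(c_\sigma^t,\; c_\gC^t(\sigma),\; c_\ua^t(\sigma)\right).
\]
Skeleton-preservation identifies the $1$-cells co-bounding $v$ with the edges of $G$ incident to $v$, and identifies the pairs $(\tau, \delta)$ indexing $c_\ua^t(v)$ with the graph neighbours of $v$ counted with edge multiplicity. In particular, projecting $c_\ua^t(v)$ onto its first coordinate recovers exactly the WL neighbour multiset of $v$, only recorded in CWL colours rather than WL colours.

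If $c_u^{\mathrm{CWL}, t+1} = c_v^{\mathrm{CWL}, t+1}$, injectivity of HASH forces agreement on each argument; in particular $c_u^{\mathrm{CWL}, t} = c_v^{\mathrm{CWL}, t}$ and $c_\ua^t(u) = c_\ua^t(v)$ as multisets. Applying the inductive hypothesis componentwise transfers these CWL equalities to equalities of the corresponding WL colours, so both the previous-round WL colour and the WL neighbour multiset of $u$ and $v$ agree. Hence $c_u^{\mathrm{WL}, t+1} = c_v^{\mathrm{WL}, t+1}$, closing the induction.

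The main obstacle is careful bookkeeping on two fronts. First, since skeleton-preservation only asks for $1$-skeleton isomorphism \emph{as multi-graphs}, one must confirm that parallel edges map to distinct $1$-cells so that the multiplicity of a graph neighbour in the WL update agrees with the number of pairs $(\tau, \delta)$ it contributes to $c_\ua^t$; the multi-graph isomorphism supplies exactly this. Second, to pass from disagreement of the $0$-cell sub-histogram to disagreement of the full CWL histogram, one observes that after one CWL step the injective HASH separates cells of different dimensions (a $0$-cell's argument list is structurally distinct from that of any higher-dimensional cell, since $c_\gB^t$ is empty only in dimension zero), so $0$-cell colours occupy a disjoint region of the colour space from colours of higher cells at every subsequent step.
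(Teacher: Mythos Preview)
Your proposal is correct and follows essentially the same approach as the paper: both prove by induction on $t$ that the CWL colouring of $0$-cells refines the WL colouring of the corresponding vertices, using that $0$-cells have empty boundary and that projecting $c_\ua^t$ onto its first coordinate recovers the WL neighbour multiset in CWL colours. Your write-up is in fact slightly more careful than the paper's on two points it glosses over---the multiplicity bookkeeping for parallel edges and the dimension-separation argument needed to pass from a $0$-cell sub-histogram difference to a full cell-complex histogram difference---but the core induction is identical.
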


To prove that some of these make CWL strictly more powerful than WL, it is sufficient to find a pair of graphs that cannot be distinguished by WL, but can be distinguished by CWL. The following result gives examples of such maps. 

\begin{definition}
Let $k$-$\mathrm{CL}$, $k$-$\mathrm{IC}$, $k$-$\mathrm{C}$ be the lifting maps attaching cells to all the cliques, induced cycles and simple cycles, respectively, of size at most $k$.  
\end{definition}

\begin{corollary}
\label{cor:WL_lifting_maps}
For all $k \geq 3$, CWL($k$-$\mathrm{CL}$), CWL($k$-$\mathrm{IC}$) and CWL$(k$-$\mathrm{C})$ are strictly more powerful than WL.   
\end{corollary}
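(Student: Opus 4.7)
The plan is to combine Theorem~\ref{thm:skeleton} with one explicit WL-indistinguishable pair of graphs on which each of the three liftings produces distinguishable cell complexes. First I would verify that $k$-$\mathrm{CL}$, $k$-$\mathrm{IC}$, and $k$-$\mathrm{C}$ are skeleton-preserving cellular lifting maps: each preserves the $0$- and $1$-cells of the input graph and only attaches $2$-cells by gluing disks along cliques or cycles of bounded length. Any graph isomorphism sends cliques to cliques and cycles of a given length to cycles of the same length, so it extends canonically to an isomorphism of the lifted complexes; conversely, any isomorphism of the lifted complexes restricts to the $1$-skeleton, i.e.\ to a graph isomorphism. Hence these are cellular lifting maps, and by Theorem~\ref{thm:skeleton} each already yields a CWL variant that is at least as powerful as WL.

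For the strict inequality I would use the classical WL-indistinguishable pair $G_1 = K_3 \sqcup K_3$ (two disjoint triangles) and $G_2 = C_6$ (the $6$-cycle). Both are $2$-regular on $6$ vertices with $6$ edges, and a direct computation shows WL's stable colouring puts every vertex into the same class in both graphs, so their colour histograms agree and WL cannot separate them. I then argue that each of the three liftings produces complexes which CWL distinguishes. For $k$-$\mathrm{CL}$ with any $k \geq 3$, $G_2$ contains no clique of size $\geq 3$ and so its lifted complex has no $2$-cells, whereas $G_1$ gains two triangular $2$-cells; the edges of the two complexes then already acquire different coboundary colour multi-sets $c_\gC$ after a single refinement step. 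For $k$-$\mathrm{IC}$ and $k$-$\mathrm{C}$ in the range $3 \leq k \leq 5$, the same argument works verbatim, since the only (induced or simple) cycle of $G_2$ has length $6 > k$.

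The remaining, slightly more delicate, range is $k \geq 6$ for $k$-$\mathrm{IC}$ and $k$-$\mathrm{C}$, where both complexes now acquire $2$-cells. Here $G_2$ gets a single hexagonal $2$-cell (with six boundary edges) while $G_1$ gets two triangular $2$-cells (each with three boundary edges). After one CWL step the hexagonal and triangular $2$-cells receive distinct colours because $c_\gB$ records boundary multi-sets of different sizes; this discrepancy propagates to the edges via the $c_\gC$ term in the hash, so the two complexes end up with different stable colour histograms (note also that the raw cell counts already differ, $14$ versus $13$). I expect this last step --- confirming that the hexagon-vs-triangle distinction survives the refinement rather than being smeared out at the edge and vertex level --- to be the step needing the most explicit verification, since local incidence counts in dimensions $0$ and $1$ initially look identical in the two complexes. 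Once this is in place, Theorem~\ref{thm:skeleton} together with this separating pair yields strict dominance for all three liftings simultaneously.
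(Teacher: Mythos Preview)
Your proposal is correct and follows essentially the same strategy as the paper: invoke Theorem~\ref{thm:skeleton} for the ``at least'' direction, then exhibit an explicit WL-indistinguishable pair whose lifts CWL separates by counting $2$-cells of each boundary size via Proposition~\ref{prop:cwl_face_id}; the paper does exactly this, pointing to the examples in Figure~\ref{fig:wl_corollary}. Your final worry about propagation is unnecessary: the colour histogram is taken over \emph{all} cells, so the $2$-cell discrepancy --- or even the raw cell-count difference $14$ versus $13$ you already noted --- distinguishes the complexes without tracking anything down to dimensions $0$ and $1$.
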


We note that this is not a complete list. For instance, the result can also be extended to combinations of the above or other transformations. We can also relate CWL to the higher-order 3-WL test.  

\begin{theorem}
\label{thm:lifting3WL}
There exists a pair of graphs indistinguishable by 3-WL but distinguishable by CWL($k$-$\mathrm{CL}$) with $k \geq 4$, CWL($k$-$\mathrm{IC}$) with $k \geq 4$ and CWL$(k$-$\mathrm{C})$ with $k \geq 8$.
\end{theorem}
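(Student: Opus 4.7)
The plan is constructive: for each of the three lifting maps, I would exhibit an explicit pair of non-isomorphic graphs $(G_1, G_2)$ that 3-WL fails to distinguish but whose lifted cell complexes are separated by CWL. The existence of 3-WL indistinguishable pairs with tunable substructure discrepancies is well-known (e.g., via Cai--F\"urer--Immerman constructions on base graphs of appropriate treewidth, or via pairs of strongly regular graphs with identical parameters that survive 3-WL refinement), so the task reduces to picking a pair whose higher-order substructures---cliques, induced cycles, or simple cycles---differ in the manner that the lifting requires.

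The argument template is the same in all three cases. Once the base pair $(G_1, G_2)$ and the lifting $f$ are fixed, verify that the multiset of attached cells differs between $f(G_1)$ and $f(G_2)$ at some dimension $\geq 2$. Because CWL includes co-boundary colours in its HASH, any discrepancy at dimension $\geq 2$ propagates downward through the refinement: cells incident to differing co-boundaries must receive distinct colours in subsequent rounds. By Theorem~\ref{thm:sparse_cwl}, boundary and co-boundary colours alone suffice for this propagation, so no lower-adjacency reasoning is needed. Since a perfect HASH cannot collapse two distinct colour multisets into the same output, the stabilised histograms must differ, and CWL reports the two complexes as non-isomorphic.

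For $k$-$\mathrm{CL}$ with $k \geq 4$, pick a 3-WL equivalent pair whose $K_4$ counts disagree. A natural candidate is the $4 \times 4$ Rook's graph versus the Shrikhande graph (both strongly regular with parameters $(16,6,2,2)$): Rook's contains $8$ copies of $K_4$ and Shrikhande is $K_4$-free, so the clique lifting endows the two complexes with different numbers of $3$-cells. For $k$-$\mathrm{IC}$ with $k \geq 4$, the same pair works, since chords inside the $K_4$s of Rook's destroy the inducedness of certain $4$-cycles, making the induced-$4$-cycle---and hence $2$-cell---counts differ between the two complexes. For $k$-$\mathrm{C}$ with $k \geq 8$, simple-cycle counts at short lengths in strongly regular graphs are determined by the parameters $(n, k, \lambda, \mu)$ alone, so the distinguishing discrepancy must sit at a longer length; the threshold $k \geq 8$ reflects the fact that the first differing simple-cycle count appears at some length between $5$ and $8$, which a direct combinatorial enumeration confirms.

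The main obstacle is the $k$-$\mathrm{C}$ case: one must both (i) certify that the chosen pair is genuinely 3-WL indistinguishable and (ii) locate and verify a simple-cycle count discrepancy at the correct length. The first ingredient is settled either by running 3-WL explicitly on the candidates or by appealing to a construction known to fool 3-WL; the second requires a careful combinatorial count or computer-aided enumeration, since the interesting discrepancy does not show up at lengths $3$ or $4$ where SRG parameters force agreement. The $k$-$\mathrm{CL}$ and $k$-$\mathrm{IC}$ cases are considerably easier, since a single invariant---the $K_4$ count---simultaneously drives both distinctions and is manifestly different on Rook's versus Shrikhande.
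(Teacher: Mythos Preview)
Your proposal is correct and uses the same witness pair as the paper---the Rook's $4\times 4$ graph and the Shrikhande graph, both in $\mathrm{SR}(16,6,2,2)$---for all three liftings. Two small corrections are worth making. First, your appeal to Theorem~\ref{thm:sparse_cwl} is misphrased: that theorem says boundary and \emph{upper} adjacencies suffice, not boundary and co-boundary. Second, the paper's separation mechanism is simpler than your downward-propagation argument: by Proposition~\ref{prop:cwl_face_id}, cells with distinct boundary sizes receive distinct colours after one round, so any discrepancy in the \emph{count} of $2$-cells of a given boundary size (the paper tabulates $36$ vs.\ $12$ induced $4$-cycles, and $11952$ vs.\ $11688$ simple $8$-cycles) already forces the colour histograms apart, with no propagation needed. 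The paper also cites \citet{ARVIND202042} for the fact that $3$-WL cannot count cycles of length greater than $7$, which pins the $k$-$\mathrm{C}$ threshold at exactly $8$ and sharpens your ``between $5$ and $8$''.
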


Finally, we conclude this section by showing how CWL can achieve a superior expressive power compared to SWL. This result is proven by Corollary \ref{cor:cwl_better_than_swl} in the Appendix. 

\begin{theorem}
Let $k$-$\mathrm{CL}$ $\cup$ $k$-$\mathrm{IC}$ and $k$-$\mathrm{CL}$ $\cup$ $k$-$\mathrm{C}$ denote combined liftings attaching cells to the union of the specified substructures. CWL($k_1$-$\mathrm{CL} \cup k_2$-$\mathrm{IC}$) and CWL($k_1$-$\mathrm{CL} \cup k_2$-$\mathrm{C}$) are strictly more powerful than SWL($k_1$-$\mathrm{CL}$) for all $k_2 \geq 5$. 
\end{theorem}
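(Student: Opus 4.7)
I would decompose the theorem into two claims: (i) for any $k_1 \geq 3$ and any $k_2 \geq 3$, CWL on the combined lifting is at least as powerful as SWL($k_1$-CL); and (ii) once $k_2 \geq 5$, there exists a pair of graphs witnessing the strict inequality. Claim (i) is the refinement direction and claim (ii) supplies the separating example.

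For (i), the key structural fact is that $k_1\text{-CL}\cup k_2\text{-IC}$ (and analogously the $k_2\text{-C}$ variant) contains $k_1\text{-CL}$ as a subcomplex: every clique cell is present, and the boundary and face-poset relations \emph{among clique cells} are identical in the two complexes. I would then prove by induction on the CWL iteration $t$ that, for every clique cell $\sigma$, the CWL colour $c_\sigma^t$ on the combined lifting is a refinement of the SWL colour $\tilde c_\sigma^t$ on the clique complex. The base case is trivial since both schemes initialise with a single colour. For the inductive step, the boundary multi-set of a clique cell depends only on strictly lower-dimensional cells, all of which are clique cells (since every face of a clique is a clique), so by induction it is at least as refined; the co-boundary and upper-adjacency multi-sets of the clique cell can only see \emph{more} cells in the combined lifting than in the pure clique complex, which makes them strictly more informative. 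Consequently, if SWL($k_1$-CL) distinguishes two graphs via their clique-complex histograms, the restriction of the CWL histogram to clique cells already differs, and hence the full CWL histogram on the combined lifting differs as well.

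For (ii), the cleanest route is to exhibit two graphs $G_1, G_2$ whose clique complexes (truncated at dimension $k_1$) are isomorphic---so SWL($k_1$-CL) cannot separate them---but which differ in their induced (resp.\ simple) $5$-cycles. A convenient family is two non-isomorphic triangle-free graphs indistinguishable by WL, for instance cospectral strongly regular triangle-free graphs on the same parameters: triangle-freeness collapses the clique complex of each to the underlying graph itself, which renders SWL($k_1$-CL) equivalent to WL and therefore impotent on this pair. One then verifies that the two graphs differ in the multiset of (induced or simple) $5$-cycles incident to their cells; in the combined lifting, these cycles carry $2$-cells whose boundary and upper-adjacency multisets feed different colour information into the edges and vertices under CWL, so the resulting colour histograms diverge. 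Combined with (i), this proves strict dominance. Finally I would appeal to Corollary \ref{cor:cwl_better_than_swl} in the appendix for the concrete witnessing pair.

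\textbf{Main obstacle.} The delicate step is (i): although intuitively ``extra cells can only refine'', one must verify the induction across all four adjacency types (boundary, co-boundary, lower, upper) while ensuring that the presence of non-clique $2$-cells does not inadvertently force two clique cells with different SWL colours to receive the same CWL colour. The argument that every face of a clique is a clique is what rules out such collisions on the boundary side, and symmetry of the refinement argument handles the upper side. A secondary difficulty is confirming that $k_2 = 5$---rather than a larger value---already suffices for the separating pair, which is precisely where the threshold $k_2 \geq 5$ in the statement comes from.
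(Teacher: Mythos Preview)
Your overall decomposition into (i) a refinement direction and (ii) a separating example matches the paper's structure, and your idea for (ii)---use a triangle-free WL-indistinguishable pair so that the clique complex collapses to the graph---is exactly what the paper does (it uses Decalin and Bicyclopentyl from Figure~\ref{fig:wl_corollary}, not strongly regular graphs; your SRG suggestion is unnecessarily heavy and finding non-isomorphic triangle-free SRGs with the same parameters is itself nontrivial).

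However, your argument for (i) has a genuine gap on the upper/coboundary side. You correctly note that the boundary of a clique cell consists only of clique cells (faces of cliques are cliques), so the boundary multiset poses no problem. But you then claim that ``symmetry of the refinement argument handles the upper side.'' It does not: the situation is \emph{not} symmetric. An edge that is a face of a triangle can also lie on the boundary of a $5$-ring $2$-cell, so the coboundary and upper-adjacency multisets of a clique cell in the combined lifting contain colours of \emph{non-clique} cells mixed in with the colours of clique cells. Your slogan ``more cells makes the multiset more informative'' is not enough: in principle a ring cell in $\sigma$'s coboundary could carry the same CWL colour as a simplex in $\tau$'s coboundary, causing the CWL upper multisets of $\sigma$ and $\tau$ to coincide even when their SWL upper multisets differ. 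To rule out such collisions you must show that CWL colours separate simplices from non-simplices, so that the clique-cell contributions can be filtered out of the CWL multiset.

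This is precisely what the paper supplies and what your proposal is missing: Proposition~\ref{prop:cwl_simplex_identification} (Simplex Identification) proves by induction on dimension that after $n{+}1$ iterations CWL assigns distinct colours to $n$-simplices and non-simplices. With this in hand, the paper's induction shows $b^{t+n+1}\sqsubseteq a^t$ (note the time shift by $n{+}1$), where $a^t$ extends the SWL colouring by assigning a single dummy colour to all non-simplex cells. The simplex-identification step lets one restrict the equality $b_{\uparrow}^{t+n+1}(\sigma)=b_{\uparrow}^{t+n+1}(\tau)$ to the sub-multisets indexed by simplices only, after which the inductive hypothesis applies. Without this lemma (or an equivalent), your inductive step does not close.
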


\section{Molecular Message Passing with CW Networks}
\label{sec:CWN_MMP}

We now describe CW Networks with an applied focus on molecular graphs to ground the discussion. Therefore, from now on we assume the use of the skeleton-preserving lifting transformation that attaches 2-cells to all the induced cycles (i.e. chordless cycles) in the graph as in Figure \ref{fig:gluing}. This leads to a message passing procedure involving atoms (vertices / 0-cells), the bonds between atoms (edges / 1-cells) and chemical rings (induced cycles / 2-cells). Additionally, in virtue of Theorem \ref{thm:sparse_cwl}, we consider only the boundary and upper adjacencies between these cells without sacrificing the expressive power. The equations for the other adjacencies, which we do not use, can be found in Appendix \ref{app:proofs_cellular}. We note however, that the theoretical results in this section are general and not particular to these specific choices of adjacencies and lifting transformation. 

\paragraph{Molecular Message Passing} The cells in our CW Network receive two types of messages: 
\begin{align}
    &m_{\gB}^{t+1}(\sigma) = \text{AGG}_{\tau \in \gB(\sigma)}\Big(M_{\gB}\big(h_{\sigma}^{t}, h_{\tau}^{t}\big)\Big) \quad
    &m_{\ua}^{t+1}(\sigma) = \text{AGG}_{\tau \in \nup(\sigma), \delta \in \gC(\sigma, \tau)}\Big(M_{\ua}\big(h_{\sigma}^{t}, h_{\tau}^{t}, h_{\delta}^t\big)\Big). \nonumber
\end{align}
The first specifies messages from atoms to bonds and from bonds to rings. The second type of message, specifies messages between atoms connected by a bond and messages between bonds that are part of the same ring (Figure \ref{fig:message_passing}). Note that for the second type of adjacency, when two atoms communicate, we include the features of the bond between them. Similarly, when two bonds communicate, we include the features of the ring they communicate through. 
The update operation takes into account these two types of incoming messages and updates the features of the cells:
\begin{equation}\label{eq:cwn_update}
    h_{\sigma}^{t+1} = U\Big(h_{\sigma}^{t}, m_{\gB}^{t}(\sigma), m_{\uparrow}^{t+1}(\sigma) \Big) . 
\end{equation}
To obtain a global embedding for a cell complex $X$ from a model with $L$ layers, the readout function takes as input the separate multi-sets of features corresponding to the atoms, bonds and the rings:
\begin{equation}
    h_X = \text{READOUT}(\ldblbrace h_{\sigma}^L \rdblbrace_{dim(\sigma)=0}, \ldblbrace h_{\sigma}^L \rdblbrace_{dim(\sigma)=1}, \ldblbrace h_{\sigma}^L \rdblbrace_{dim(\sigma)=2}).
\end{equation}

\begin{wrapfigure}[]{r}{0.22\textwidth}
    \begin{subfigure}[t!]{1.0\linewidth}
        \centering
        \vspace{-10pt}
        \includegraphics[width=1.0\textwidth]{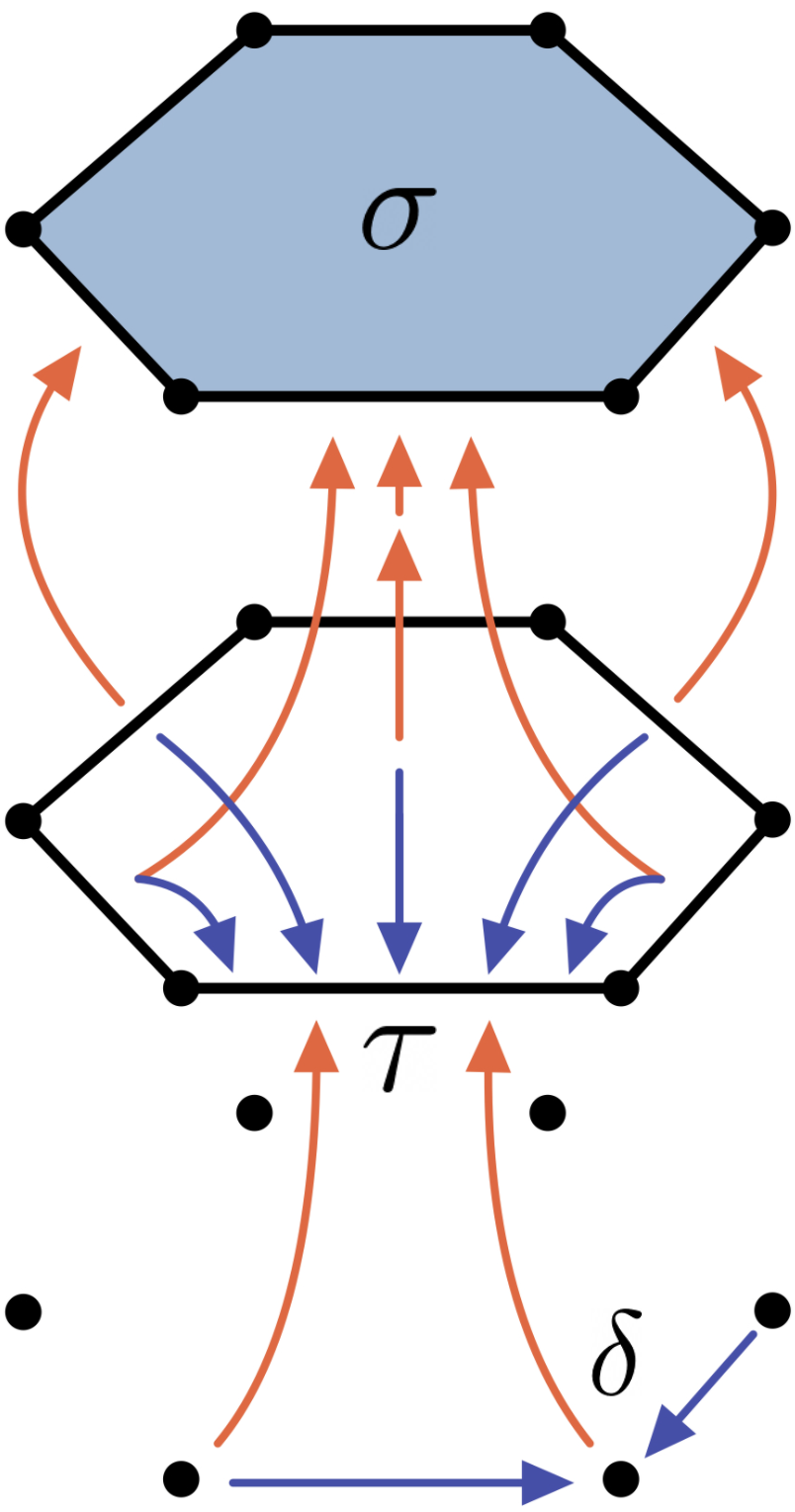}
    \end{subfigure}
    \caption{Hierarchical depiction of the message passing procedure. \textcolor{orange}{Orange} arrows indicate boundary messages received by cells $\sigma$ and $\tau$, while \textcolor{BlueViolet}{blue} ones show upper messages received by cells $\tau$ and $\delta$.}
    \label{fig:message_passing}
    \vspace{-40pt}
\end{wrapfigure}

\paragraph{Expressivity} Naturally, the ability of CWNs to distinguish non-isomorphic regular cell complexes is bounded by CWL. Similarly to GNNs and WL, CWNs can also be shown to be as powerful as CWL as long as they are equipped with a sufficient number of layers and the parametric local aggregators they use can learn to be injective. Multiple such multi-set aggregators \citep{GIN, Corso2020_PNA} are known to exist and can be directly employed in our model.   

\begin{theorem}
\label{thm:CWandCWN}
CW Networks are at most as powerful as CWL. Additionally, when using injective neighbourhood aggregators and a sufficient number of layers, CWNs are as powerful as CWL.
\end{theorem}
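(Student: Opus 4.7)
The proof will follow the template established by \citet{GIN} for graphs and extended to simplicial complexes by \citet{bodnar2021weisfeiler}, adapted to the cellular setting. There are two directions: an upper bound ($\text{CWN} \leq \text{CWL}$) and a lower bound ($\text{CWN} \geq \text{CWL}$ under injectivity).

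For the upper bound, I would argue by induction on the layer index $t$ with the hypothesis: for any two cells $\sigma, \tau$ (from possibly different complexes), if $c_\sigma^t = c_\tau^t$ under CWL, then $h_\sigma^t = h_\tau^t$ under any CWN. The base case holds because both schemes initialise all cells with a constant value (or with matching input features). For the inductive step, the CWN update \eqref{eq:cwn_update} is a deterministic function of $h_\sigma^t$, the multi-set of boundary messages, and the multi-set of upper messages. The inductive hypothesis implies that these multi-sets agree whenever the corresponding multi-sets of CWL colours agree, so the CWN representations remain equal. Contrapositively, whenever a CWN distinguishes two cells, so does CWL. Extending this from cells to complexes is immediate since $\text{READOUT}$ acts on the three multi-sets of final cell features, which agree whenever the CWL colour histograms agree.

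For the lower bound, I would construct a particular CWN that realises CWL. The argument relies on two ingredients. First, assuming the input features lie in a countable space, one invokes a Deep Sets-type result (see \citet{GIN}) to obtain an injective encoding of finite multi-sets by a sum of appropriate feature maps; this extends to multi-sets of tuples by taking products of such encoders on the factor spaces. Second, one chooses $M_\gB, M_\ua, \text{AGG}$ and $U$ so that their composition implements an injective function of $(h_\sigma^t, c_\gB^t(\sigma), c_\ua^t(\sigma))$, thereby playing the role of a perfect $\mathrm{HASH}$ function at each layer. Iterating for at least as many layers as CWL takes to stabilise on the family of complexes under consideration guarantees that CWN features refine no less finely than CWL colours. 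A final injective $\text{READOUT}$ on the per-dimension multi-sets lifts the distinguishability from cells to complexes, establishing that CWN can match CWL.

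The main obstacle is the upper-adjacency aggregator. Unlike in graphs or simplicial complexes, for a regular cell complex the intersection $\gC(\sigma, \tau)$ may contain more than one cell, so the CWL multi-set $c_\ua^t(\sigma)$ ranges over pairs $(\tau, \delta)$ and must be matched by the CWN aggregation indexed by $\tau \in \nup(\sigma), \delta \in \gC(\sigma, \tau)$. Care is needed to verify that the CWN bookkeeping enumerates exactly the same multi-set of pairs used in the definition of $c_\ua^t(\sigma)$, and that the injective multi-set encoder is built on the correct Cartesian product of countable feature spaces. Once this pairing is handled, the rest of the argument proceeds by the standard inductive/injectivity template; dually, the injectivity of the multi-set-over-pairs encoder for the lower-bound direction is the most delicate technical step, though it follows from a routine product construction applied to the Deep Sets encoding.
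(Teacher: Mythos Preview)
Your proposal is correct and follows essentially the same inductive refinement argument as the paper: both directions are proved by showing that equal CWL colours force equal CWN features (upper bound) and, under injective aggregators, vice versa (lower bound). You add welcome detail the paper omits—namely the Deep Sets construction of the injective multi-set encoder, the explicit handling of the pair enumeration over $\gC(\sigma,\tau)$, and the lift from cell-level to complex-level via an injective $\text{READOUT}$—but the skeleton of the argument is identical.
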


Corollary~\ref{cor:WL_lifting_maps} states that CWL is strictly more powerful than the standard WL when the lifting procedure attaches 2-cells to induced cycles of size $k \geq 3$. As a consequence of Theorem~\ref{thm:CWandCWN}, this result also holds for molecular message passing CWNs equipped with injective aggregators. In practice, $k$ is to be considered as a standard hyperparameter, and its choice can either be driven by validation set performance, or by domain knowledge (if available).

\paragraph{Symmetries} Given a graph $G$ with adjacency matrix $\mA$ and feature matrix $\mX$, a function $f$ is (node) permutation equivariant if $\mP f(\mA, \mX) = f(\mP \mA \mP^T, \mP \mX)$, for any permutation matrix $\mP$. GNN layers respect this equation, which ensures 
they compute the same functions up to a permutation (i.e. relabeling) of the nodes. Similarly, it can be shown that CW Networks are equivariant with respect to permutations of the cells and corresponding permutations of the boundary relations $\sigma \prec \tau$ between cells. We define this notion of equivariance more formally in Appendix \ref{app:symmetries}.    

\begin{theorem}
\label{thm:CWequivariant}
CW Network layers are cell permutation equivariant. 
\end{theorem}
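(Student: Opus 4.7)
The plan is to prove the theorem by induction on the layer index $t$, showing that at each layer the cell features transform covariantly under any bijection $\pi: P_X \to P_{X'}$ of face posets that preserves the boundary relation, i.e., $\sigma \prec \tau$ iff $\pi(\sigma) \prec \pi(\tau)$. Concretely, one works with the convention that feature vectors on the permuted complex satisfy $h'^{t}_{\pi(\sigma)} = h^t_\sigma$ in the base case ($t=0$), and the goal is to show this relation is preserved by a CWN layer.

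The key observation is that all four adjacency sets from Definition \ref{def:adj} (i.e.\ $\gB, \gC, \ndown, \nup$) are defined purely from $\prec$. Therefore, a boundary-relation-preserving bijection $\pi$ immediately induces
\[
\gB(\pi(\sigma)) = \pi(\gB(\sigma)), \quad \gC(\pi(\sigma)) = \pi(\gC(\sigma)), \quad \nup(\pi(\sigma)) = \pi(\nup(\sigma)),
\]
and analogously for $\gB(\pi(\sigma),\pi(\tau)) = \pi(\gB(\sigma,\tau))$ and $\gC(\pi(\sigma),\pi(\tau)) = \pi(\gC(\sigma,\tau))$. This is a direct consequence of all these sets being definable as first-order formulas involving only $\prec$ and the cells themselves.

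For the inductive step, assume $h'^{t}_{\pi(\sigma)} = h^t_\sigma$ for all $\sigma \in P_X$. For the boundary message at $\pi(\sigma)$, reparametrize the aggregation index as $\tau' = \pi(\tau)$ with $\tau \in \gB(\sigma)$:
\[
m'^{t+1}_{\gB}(\pi(\sigma)) = \text{AGG}_{\tau' \in \gB(\pi(\sigma))}\!\Big(M_{\gB}\big(h'^{t}_{\pi(\sigma)}, h'^{t}_{\tau'}\big)\Big) = \text{AGG}_{\tau \in \gB(\sigma)}\!\Big(M_{\gB}\big(h^{t}_\sigma, h^{t}_\tau\big)\Big) = m^{t+1}_{\gB}(\sigma),
\]
where the middle equality uses the inductive hypothesis and the final equality follows because AGG is a multiset function, hence invariant under relabeling of its indexing set. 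An identical argument, using the covariance of $\nup$ and $\gC(\sigma,\tau)$, shows $m'^{t+1}_{\ua}(\pi(\sigma)) = m^{t+1}_{\ua}(\sigma)$. Finally, since $U$ is applied pointwise to the cell's own feature and its two aggregated messages, $h'^{t+1}_{\pi(\sigma)} = U(h'^{t}_{\pi(\sigma)}, m'^{t+1}_{\gB}(\pi(\sigma)), m'^{t+1}_{\ua}(\pi(\sigma))) = h^{t+1}_\sigma$, closing the induction.

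The main obstacle is not mathematical depth but notational care: the theorem statement refers not only to permutations of cells but also to the \emph{corresponding} permutation of the boundary relation, and it is essential to formalize that a cell permutation equivariance requires $\pi$ to intertwine the $\prec$ relations on $X$ and $X'$. Once this is made precise (in the Appendix \ref{app:symmetries}), the proof reduces to the structural fact that every ingredient of the CWN layer, that is, the multiset aggregators, the shared message functions $M_\gB$ and $M_\ua$, and the update $U$, depends on the cells only through features and through sets determined by $\prec$, so no extra work is needed beyond tracking the permutation through the definition.
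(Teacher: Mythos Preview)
Your proof is correct. Conceptually it matches the paper's argument: equivariance follows because every adjacency used by a CWN layer is determined by the boundary relation, so a permutation that respects $\prec$ carries all the relevant neighbourhood structures along with it, and the multiset aggregators and shared message/update functions are insensitive to relabelling.

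The presentation, however, differs. The paper formalises cell permutation equivariance in matrix form (Definition~\ref{def:equiv}): features are gathered into matrices $\rmX=(\mX_0,\ldots,\mX_n)$, the combinatorics are encoded in boundary matrices $\rmB=(\mB_1,\ldots,\mB_n)$, and the symmetry group acts via tuples of permutation matrices $\rmP$ as $(\rmX,\rmB)\mapsto(\rmP\rmX,\rmP\rmB\rmP^T)$. Its proof then observes that the adjacency operators $\mB_k,\mB_{k+1},\mB_k^\top\mB_k,\mB_{k+1}\mB_{k+1}^\top$ all transform covariantly under $\rmP$, and defers the remaining bookkeeping to the simplicial case in \citet{bodnar2021weisfeiler}. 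You instead work directly at the level of the face poset, taking a $\prec$-preserving bijection $\pi$ and tracking the induced bijections on $\gB,\gC,\nup,\ndown$ through an explicit induction on layers. Your route is more self-contained and arguably cleaner for the message-passing formulation, since it avoids introducing matrix notation that is never actually used in the layer equations; the paper's route has the advantage of making contact with the linear-algebraic language used later for the sheaf/Hodge Laplacian discussion. Both arguments are equivalent once one notes that a tuple $\rmP$ of permutation matrices satisfying $\rmP\rmB\rmP^T=\rmB'$ is exactly the data of a $\prec$-preserving bijection of face posets.
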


\paragraph{Long-Range Interactions} 

Several graph-related tasks require the ability to capture long-range interactions between nodes. For instance, certain molecular properties depend on atoms placed on the opposite sides of a ring \citep{gilmer2017neural, ramakrishnan2014quantum}. 
As a consequence of the coupling between the input and computational graphs, $L$ message passing operations are necessary in GNNs to let a node receive information from an $L$-hops distant node. 
In contrast, our hierarchical message passing scheme requires \emph{at most} $L$ layers since $2$-cells create shortcuts. 
For example, a constant number of CWN layers ($3$) is enough to capture dependencies between atoms on the opposite sides of a ring, independently of the ring size. In Section~\ref{ss:synth} we verify this in a controlled scenario. Additional experiments on real world graphs in Section~\ref{ss:real} confirm that it can achieve state-of-the-art performance with a limited number of layers.

\paragraph{Anisotropic Filters} 
Due to the lack of a canonical ordering between neighbours, many common GNNs use symmetric convolutional kernels, resulting in isotropic filters treating neighbours equally. Recent works have proposed to address this limitation by employing additional structural information~\citep{beaini2020directional, bouritsas2020improving}. CWNs also implicitly achieve this form of anisotropy by integrating information from the higher-order cells and their associated substructures into the message passing procedure. For instance, bond features can learn to encode their membership to a ring and also communicate directly with other bonds present in the ring. Consequently, the messages between atoms connected through these bonds are modulated by the presence of the ring as well as by the presence of other nodes and bonds part of that ring.

\paragraph{CWNs as Generalised Convolutions}
Our message passing scheme can be seen a (non-linear) generalisation of linear diffusion operators on cell complexes.
Recent works~\citep{ebli2020simplicial, bunch2020simplicial} have introduced convolutional operators on SCs by employing the Hodge Laplacian \citep{schaub2020random},
a generalisation of the graph Laplacian. 
By leveraging on the cellular Sheaf Laplacian \citep{HGh19}, a similar construction can be extended to cell complexes to define cellular convolutional operators. In Appendix~\ref{app:convs} we discuss this approach and show that our cellular message passing scheme subsumes it. This represents a promising avenue for studying CWNs from a spectral perspective, an endeavour we leave for future work. 

\paragraph{Computational Complexity} When considering cells of a constant maximum dimension and boundary size, the computational complexity of the message passing scheme is linear in the size of the input complex. For the molecular applications we are interested in, the average number of rings per molecule is upper bounded by a small constant (e.g. three for MOLHIV), so the size of the complex is approximately the same as the size of the graph. Therefore, in this setting, the computational complexity of the model is similar to that of message passing GNNs. Separately of this, the one-time preprocessing step of computing the lifting of the graphs should also be considered. The $C$ induced cycles in a graph can be listed in $\gO\big((|E| + |V|C) \text{ polylog } |V|\big)$ time \citep{ferreira2014amortized}. Again, given that $C$ is upper bounded by a small constant for the molecular datasets of interest in this work, the complexity of the lifting procedure is also almost linear in the size of the graph. A more detailed analysis backed up by wall-clock time experiments is given in Appendix \ref{app:complexity}.


\section{Experiments}
\label{sec:results}

In this section we validate the theoretical and empirical properties of our proposed message passing scheme in controlled scenarios as well as in real-world graph classification problems, with a focus on large scale molecular benchmarks. For simplicity, in all experiments we employ a model which stacks CWN layers with local aggregators as in GIN~\citep{GIN}. We name our architecture ``Cell Isomorphism Network'' (CIN). $0$-cells are always endowed with the original node features; higher-dimensional cells are populated in a benchmark specific manner. See Appendix~\ref{app:results} for details on feature initialisation, message passing and readout operations, hyperparameters, implementation and benchmark statistics.

\begin{figure}[t]
    \centering
    \begin{subfigure}[t!]{0.44\textwidth}
        \centering
        \includegraphics[width=\textwidth]{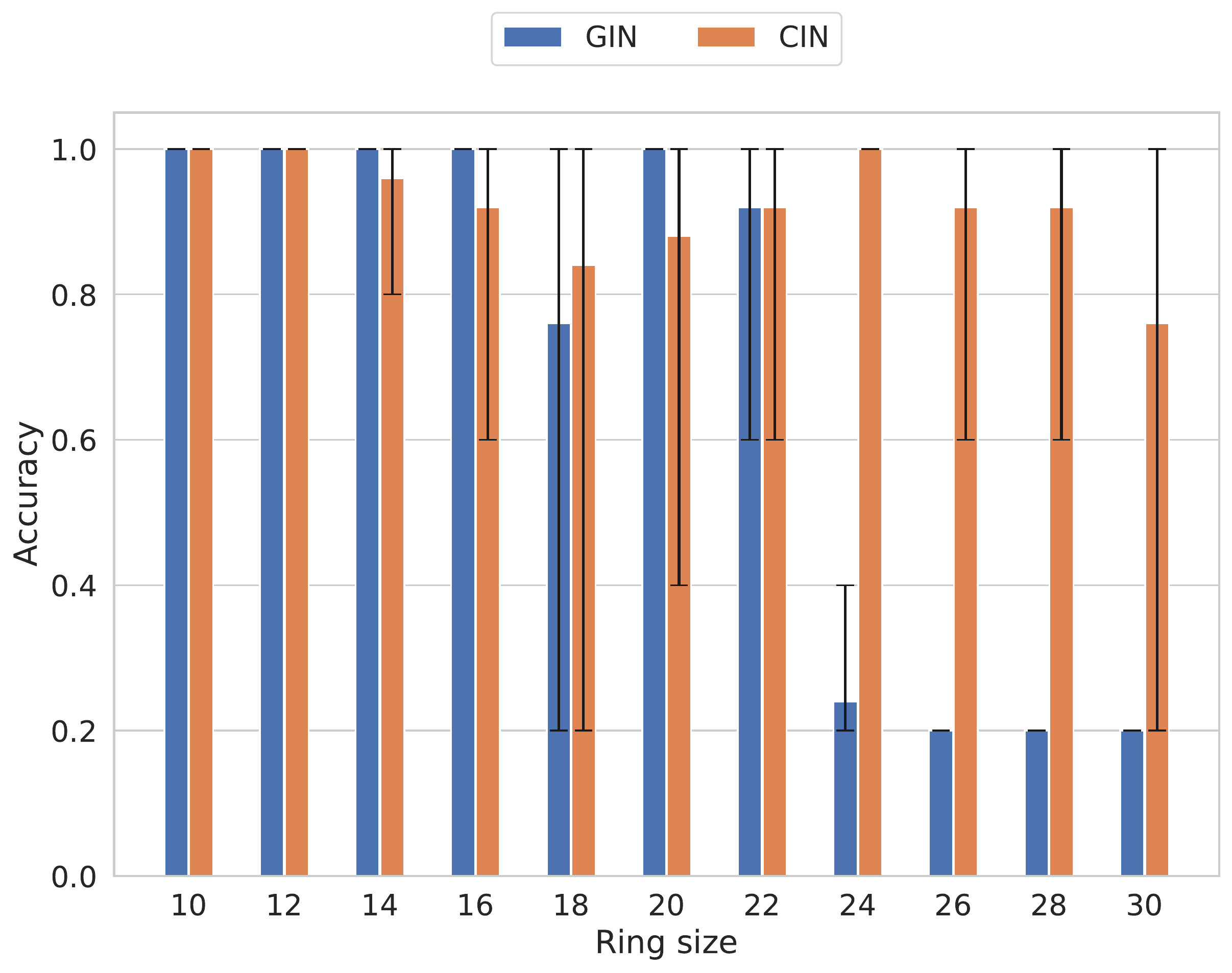}
        \caption{RingTransfer Results. Accuracy is over $5$ balanced classes. A score of $0.2$ is equivalent to a random guess. Error bars show the min and max. Our model obtains high-scores in average even for large rings despite using only three layers.}
        \label{fig:rt}
    \end{subfigure}
    \hfill
    \begin{subfigure}[t!]{0.53\textwidth}
        \centering
        \includegraphics[width=\textwidth]{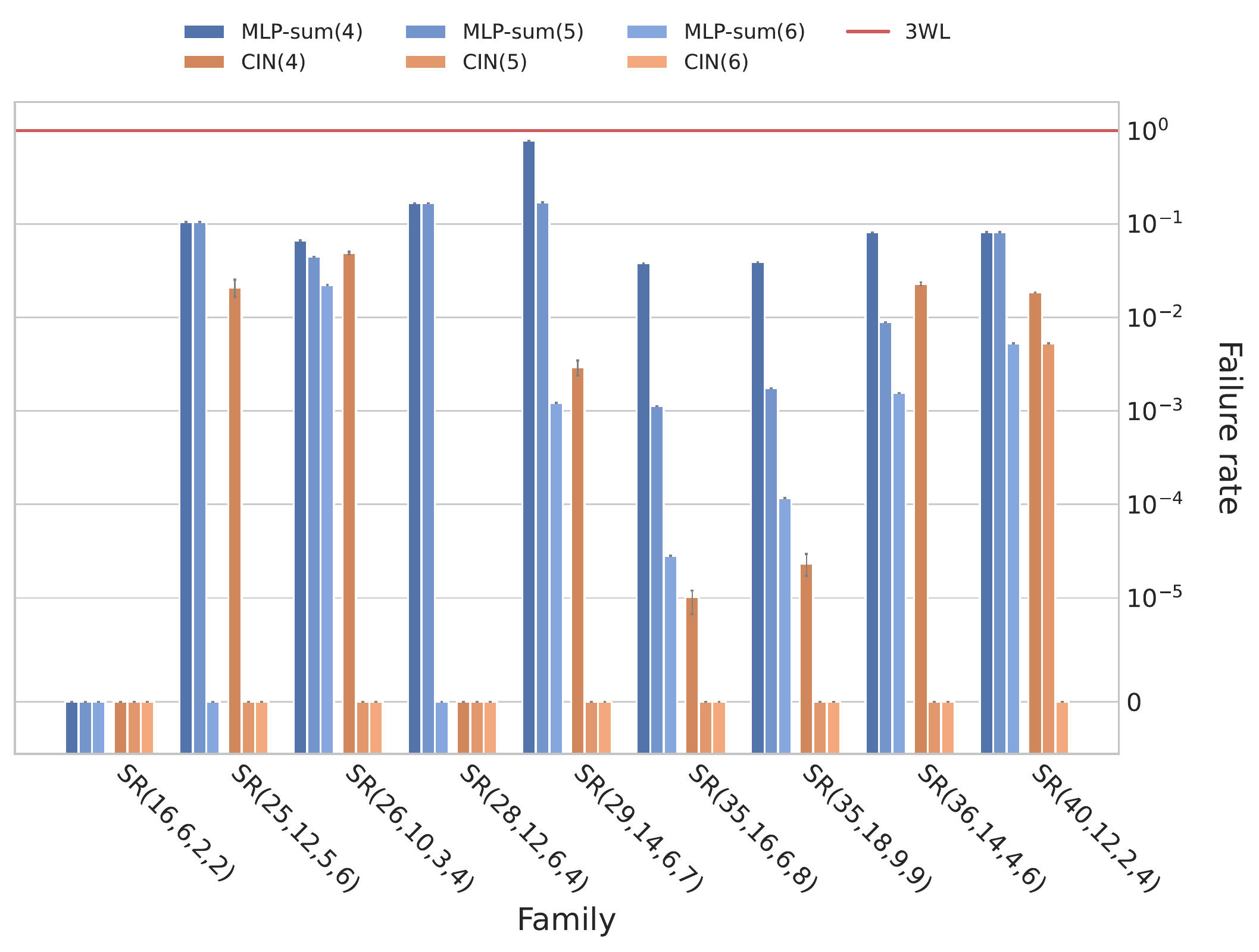}
        \caption{Failure rates on the SR isomorphism task, \emph{the smaller the better} (mean and std-error over $5$ runs). In parantheses, for each model, the maximum size $k$ of rings lifted to $2$-cells.}
        \label{fig:sr_iso}
    \end{subfigure}
    \caption{Results on the RingTransfer and SR synthetic benchmarks.}
    \label{fig:synth}
    \vspace{-12pt}
\end{figure}

\subsection{Synthetic Benchmarks}\label{ss:synth}


\paragraph{CSL} Circular Skip Link dataset was first introduced in~\cite{pmlr-v97-murphy19a} 
and has been recently adopted as a reference benchmark to test the expressivity of GNNs~\citep{dwivedi2020benchmarkgnns}. It consists of 150 $4$-regular graphs from 10 different isomorphism classes, which we need to predict. 
Unsolvable by the WL test and message passing approaches \citep{pmlr-v97-murphy19a, Chen2019_ring_gnn}, we use it to validate the expressive power of CWNs. 

\begin{wraptable}[7]{l}{0.425\textwidth}
        \centering
        \vspace{-13pt}
        \begin{minipage}[t]{1.0\linewidth}
         \caption{Classification accuracy on CSL.}\label{tab:csl}
          \resizebox{\columnwidth}{!}{
          \begin{tabular}{l  ccc}
            \toprule
            Method & 
            Mean &
            Min &
            Max \\
            \midrule
            
            MP-GNNs & 
            10.000$\pm$0.000 &
            10.000 &
            10.000 \\
            
            RingGNN & 
            10.000$\pm$0.000 &
            10.000 &
            10.000 \\
            
            3WLGNN &
            97.800$\pm$10.916 &
            30.000 &
            100.000 \\

            \midrule
            
            CIN (Ours) & 
            100.000$\pm$0.000 &
            100.000 &
            100.000 \\
            \bottomrule
          \end{tabular}%
          }
    \end{minipage}
\end{wraptable}

We follow the same evaluation setting as ~\citet{dwivedi2020benchmarkgnns}: $5$-fold cross validation procedure and $20$ different random weight initialisations. For our model, we set the maximum ring size $k=8$. In Table~\ref{tab:csl} we follow the common practice on this dataset and report the mean, minimum and maximum test accuracy obtained by CIN over the $100$ runs, along with the results by the baselines presented in~\citet{dwivedi2020benchmarkgnns}.
MP-GNNs, 
that is classic message passing GNNs (GAT~\citep{velivckovic2017graph}, MoNet~\citep{Monti_etal2017}, GIN~\citep{GIN}, etc.), and RingGNN~\citep{Chen2019_ring_gnn} perform as random guessers. In contrast, our model is able to identify the isomorphism class of each test graph in every run while featuring only a fraction of the computational complexity of 3WLGNN, the best performing reference baseline~\citep{dwivedi2020benchmarkgnns, maron2019provably}.


\paragraph{SR} Similarly to~\citet{bodnar2021weisfeiler} and~\cite{bouritsas2020improving}, we consider Strongly Regular graphs within the same family as hard examples of non-isomorphic graphs we seek to distinguish. Any pair of graphs within the same family cannot provably be distinguished by 3-WL test~\citep{bodnar2021weisfeiler, bouritsas2020improving}. We reproduce the same experimental setting of~\citet{bodnar2021weisfeiler}. In particular, we consider $9$ distinct SR families\footnote{Data available at: \url{http://users.cecs.anu.edu.au/~bdm/data/graphs.html}.} and run our model untrained on the cell complex lifting of each graph, with $k = 4, 5, 6$. $0$-cells (nodes) are initialised with a constant unitary signal, while $1$- and $2$-cells are initialised with the sum of the contained $0$-cells. We additionally run an MLP baseline with sum readout to appreciate the contribution of message passing. We report the percentage of non-distinguished pairs in Figure~\ref{fig:sr_iso}. 
Contrary to 3-WL, both CIN and the MLP baseline are able to distinguish many pairs across all families, with better performance attained for larger $k$. For $k = 6$, we observed CIN to disambiguate all pairs in all families ($0.0$\% failure rate). Despite the strong results achieved by the baseline, we found CIN to always distinguish a larger number of non-isomorphic pairs for the same values of $k$, this confirming the importance of cellular message passing.

\paragraph{RingTransfer} In order to empirically validate the ability of CIN to capture long-range node dependencies, we additionally design a third synthetic benchmark dubbed as `RingTransfer'. Graphs in this dataset are chordless cycles (rings) of size $k$. In each graph we mark two special nodes as \textbf{target} and \textbf{source}, always placed at distance $\lfloor \frac{k}{2} \rfloor$. The task is for \textbf{target} to output the one-hot encoded label assigned to \textbf{source}. All other nodes in the ring are assigned a unitary constant feature vector. A model has to learn to transfer the information contained in \textbf{source} to the opposite side of the ring, where \textbf{target} resides. We initialise $1$- and $2$-dimensional cells with a null signal. In Figure~\ref{fig:rt} we show the performance of a $3$-layer CIN as a function of the ring size $k$, along with that of GIN~\citep{GIN} baselines equipped with $\lfloor \frac{k}{2} \rfloor$ stacked layers. We observe that our model learns to solve the task with only $3$ computational steps, independent of $k$. As for GIN, we observed degradation in the performance for $k \geq 24$, up to complete failure. We hypothesise this to be due to the difficulties of training such a deep GNN ($\geq 12$ layers). We further verify the (theoretically expected) failure of GIN (not included) when endowed with less than $\lfloor \frac{k}{2} \rfloor$ layers. 

\subsection{Real-World Graph Benchmarks}\label{ss:real}

\begin{table}[!t]
    \centering
    \caption{TUDatasets. The first section of the table includes the accuracy of graph kernel methods, while the second includes GNNs. The top three are highlighted by \textbf{\textcolor{red}{First}}, \textbf{\textcolor{violet}{Second}}, \textbf{Third}.}
    \label{tab:tud_datasets}
    \resizebox{\linewidth}{!}{%
    \begin{tabular}{l | lllll | lll}
        \toprule
        Dataset & 
        MUTAG &
        PTC &
        PROTEINS &
        NCI1 &
        NCI109 &
        IMDB-B & 
        IMDB-M & 
        RDT-B \\
        \midrule       
        RWK \citep{gartner2003graph} & 
         79.2$\pm$2.1 & 
         55.9$\pm$0.3 & 
         59.6$\pm$0.1 & 
         $>$3 days & 
         N/A & 
         N/A &
         N/A &
         N/A \\
        
        GK ($k=3$) \citep{shervashidze2009efficient} &
        81.4$\pm$1.7 & 
        55.7$\pm$0.5 &
        71.4$\pm$0.3 & 
        62.5$\pm$0.3 & 
        62.4$\pm$0.3 &
        N/A &
        N/A &
        N/A \\

        PK \citep{neumann2016propagation} & 
         76.0$\pm$2.7& 
         59.5$\pm$2.4 &
         73.7$\pm$0.7 & 
         82.5$\pm$0.5 & 
         N/A & 
         N/A & 
         N/A &
         N/A \\

        WL kernel \citep{shervashidze2011weisfeiler} &
          90.4$\pm$5.7 & 
          59.9$\pm$4.3 & 
          75.0$\pm$3.1 & 
          \first{86.0}$\pm$1.8 & 
          N/A &
          73.8$\pm$3.9 &
          50.9$\pm$3.8 &
          81.0$\pm$3.1 \\

        \midrule
         
        DCNN \citep{DCNN_2016} & 
        N/A&  
        N/A &
        61.3$\pm$1.6 &
        56.6$\pm$1.0 &
        N/A &
        49.1$\pm$1.4 &
        33.5$\pm$1.4 &
        N/A \\

        DGCNN \citep{zhang2018end} & 
        85.8$\pm$1.8 & 
        58.6$\pm$2.5 & 
        75.5$\pm$0.9 & 
        74.4$\pm$0.5 & 
        N/A &
        70.0$\pm$0.9 & 
        47.8$\pm$0.9 &
        N/A \\
        
        IGN \citep{maron2018invariant} &
        83.9$\pm$13.0 &
        58.5$\pm$6.9 &
        \third{76.6}$\pm$5.5 &
        74.3$\pm$2.7 & 
        \third{72.8}$\pm$1.5 & 
        72.0$\pm$5.5 & 
        48.7$\pm$3.4 &
        N/A \\
        
        GIN \citep{GIN} & 
        89.4$\pm$5.6 & 
        64.6$\pm$7.0	& 
        76.2$\pm$2.8 &
        82.7$\pm$1.7 &
        N/A & 
        75.1$\pm$5.1 &
        52.3$\pm$2.8 &
        \first{92.4}$\pm$2.5 \\

        PPGNs \citep{maron2019provably} &
        \third{90.6}$\pm$8.7 &
        66.2$\pm$6.6 &
        \first{77.2}$\pm$4.7 & 
        83.2$\pm$1.1 & 
        \second{82.2}$\pm$1.4 &
        73.0$\pm$5.8 & 
        50.5$\pm$3.6 &
        N/A \\

        Natural GN \citep{de2020natural} &
        89.4$\pm$1.6 &
        \third{66.8}$\pm$1.7 &
        71.7$\pm$1.0 &
        82.4$\pm$1.3 &
        N/A &
        73.5$\pm$2.0 &
        51.3$\pm$1.5 &
        N/A\\

        GSN \citep{bouritsas2020improving} &
        \second{92.2} $\pm$ 7.5 &
        \first{68.2} $\pm$ 7.2 & 
        \third{76.6} $\pm$ 5.0 & 
        \third{83.5} $\pm$ 2.0 &
        N/A & 
        \first{77.8} $\pm$ 3.3 & 
        \first{54.3} $\pm$ 3.3 &
        N/A \\
        
        SIN \citep{bodnar2021weisfeiler} & 
        N/A  &
        N/A & 
        76.4 $\pm$ 3.3 & 
        82.7 $\pm$ 2.1 &
        N/A &
        \second{75.6} $\pm$ 3.2 & 
        \third{52.4} $\pm$ 2.9 &
        \third{92.2} $\pm$ 1.0 \\

        \midrule
        
        {{\bf CIN} (Ours)} & 
        \first{92.7} $\pm$ 6.1 &
        \first{68.2} $\pm$ 5.6 &
        \second{77.0} $\pm$ 4.3 &
        \second{83.6} $\pm$ 1.4 &
        \first{84.0} $\pm$ 1.6 &
        \second{75.6} $\pm$ 3.7 & 
        \second{52.7} $\pm$ 3.1 &
        \first{92.4} $\pm$ 2.1 \\
        
        \bottomrule

    \end{tabular}
    }
\end{table}

\paragraph{TUD} We test our model on $8$ TUDataset benchmarks \citep{morris2020tudataset} with small and medium sizes from biology (\textbf{PROTEINS}  \citep{dobson2003distinguishing,borgwardt2005protein}), chemistry (i.e. molecules -- \textbf{MUTAG} \citep{kazius2005derivation,riesen2008iam}, \textbf{PTC}, \textbf{NCI1} and \textbf{NCI109} \citep{wale2008comparison}) to social networks (\textbf{IMDB-B}, \textbf{IMDB-M}, \textbf{RDT-B}). 
We consider induced cycle of size up to $k=6$ for our graph lifting procedure. We initialise node (and $0$-cell) features as described in~\citet{GIN}, and higher dimensional cells by averaging or summing the features of the included $0$-cells. The training setting and evaluation procedure follow those in \citet{GIN}. 
We report the results in Table~\ref{tab:tud_datasets}. CIN compares more than favourably with the baselines, displaying strong empirical performance on all benchmarks. The mean accuracy of CIN ranks top on four out of eight datasets. On the remaining datasets, CIN achieves the second place. We observe that the best results are on datasets from the biological and chemical domains, where rings play a relevant role.

\paragraph{ZINC} 
We study the effectiveness of cellular message passing on larger scale molecular benchmarks from the ZINC database \citep{ZINCdataset}. \textbf{ZINC} (12k graphs) and \textbf{ZINC-FULL} (250k graphs) \citep{dwivedi2020benchmarkgnns,jin2018junction,you2018graph,gomez2018automatic} are two graph regression task datasets for drug constrained solubility prediction. In these experiments, we consider rings up to size $k=18$. We follow the training and evaluation procedures in~\citep{dwivedi2020benchmarkgnns}. Our experiments encompass different scenarios, examine the impact of ablating edge features and of constraining the parameter budget of the architecture to $100$k. All results are illustrated in Table~\ref{tab:mol_dataset} where we also include the results for \textbf{ZINC-FULL} obtained by the same exact architectures. Our model exhibits particularly strong performance on these benchmarks: it attains state-of-the-art results on both the two dataset variants, outperforming other models by a significant margin. CIN attains strong results even when constrained by the parameter budget. It still achieves state-of-the-art performance on \textbf{ZINC} and is on-par with the best unconstrained baseline under edge-feature ablation. 

\paragraph{Mol-HIV} We additionally test our model on the molecular \textbf{ogbg-molhiv} dataset from the Open Graph Benchmark~\citep{hu2020open} ($41$k graphs). The task is to predict the capacity of compounds to inhibit HIV replication. Rings of size up to $k=6$ are considered as $2$-cells. We take the architecture in~\cite{Fey2020_himp} as reference and replicate the same hyperparameter setting in our model, including the use of only $2$ message passing layers. We report the mean of test ROC-AUC metrics at the epoch of best validation performance for $10$ random weight initialisations. Similarly to ZINC, we experiment with a ``small'' model whose number of parameters is constrained in the order of $100$k. Table~\ref{tab:mol_dataset} displays the results. CIN significantly outperforms other strong GNN baselines, even when constrained by the parameter budget. Consistently with~\cite{Fey2020_himp}, we observe that only two layers are sufficient when performing hierarchical message passing across meso-scale structures such as rings.

\begin{table}[!t]
    \centering
    \begin{minipage}[t]{0.80\textwidth}
        \centering
         \caption{ZINC (MAE), ZINC-FULL (MAE) and Mol-HIV (ROC-AUC).}
        \label{tab:mol_dataset}
           \resizebox{\columnwidth}{!}{
          \begin{tabular}{l cccc}
            \toprule
            \multirow{2}{*}{Method} & 
            
            \multicolumn{2}{c}{ZINC $\downarrow$} &
            ZINC-FULL $\downarrow$& 
            MOLHIV $\uparrow$\\
            
            &
            No Edge Feat. &
            With Edge Feat. &
            All methods &
            All methods \\
            \midrule
            
            GCN \citep{kipf2017graph} & 
            0.469$\pm$0.002 &
            N/A &
            N/A &
            76.06$\pm$0.97 \\

            GAT \citep{velivckovic2017graph} & 
            0.463$\pm$0.002 &
            N/A &
            N/A &
            N/A \\
            
            GatedGCN \citep{bresson2017residual} &
            0.422$\pm$0.006 & 
            0.363$\pm$0.009 &
            N/A &
            N/A \\

            GIN \citep{GIN}  & 
            0.408$\pm$0.008 &
            0.252$\pm$0.014 &
            0.088$\pm$0.002 &
            77.07$\pm$1.49 \\
            
            PNA \citep{Corso2020_PNA} & 
            0.320$\pm$0.032 & 
            0.188$\pm$0.004 &
            N/A &
            79.05$\pm$1.32 \\
            
            DGN \citep{beaini2020directional} & 
            0.219$\pm$0.010 &
            0.168$\pm$0.003 &
            N/A &
            79.70$\pm$0.97 \\
            
            HIMP \citep{Fey2020_himp} &
            N/A &
            0.151$\pm$0.006 &
            0.036$\pm$0.002 &
            78.80$\pm$0.82\\

            GSN \citep{bouritsas2020improving} & 
            0.139$\pm$0.007 &
            0.108$\pm$0.018 &
            N/A &
            77.99$\pm$1.00 \\
            
            \midrule
            
            \textbf{CIN-small} (Ours) & 
            0.139$\pm$0.008 &
            0.094$\pm$0.004 &
            0.044$\pm$0.003 &
            80.55$\pm$1.04 \\
            
            \textbf{CIN} (Ours) & 
            \textbf{0.115}$\pm$\textbf{0.003} &
            \textbf{0.079}$\pm$\textbf{0.006} &
            \textbf{0.022}$\pm$\textbf{0.002} &
            \textbf{80.94}$\pm$\textbf{0.57} \\
            \bottomrule
          \end{tabular}%
          }
    \end{minipage}
    \vspace{-11pt}
\end{table}

\section{Related Work, Discussion and Conclusion}
\label{sec:conclusion}

\paragraph{Cell complex models} Recent works have proposed the generalisation of GNNs to simplicial complexes \citep{ebli2020simplicial, bunch2020simplicial, glaze2021principled, hajij2021simplicial}. All these 
simplicial methods are subsumed by the model in~\citet{bodnar2021weisfeiler}, which CWNs in turn subsume. 
To the best of our knowledge, \citet{hajij2020cell} is the only other example of message passing on cell complexes, but this work does not study the expressive power of the proposed scheme, neither it experimentally validates its performance. In contrast, our work comprehensively characterises the expressiveness of cellular message passing, and introduces a theoretically grounded and empirically effective framework to apply it on graph structured data in a way to address several limitations of standard Graph Neural Networks.
%
%

\paragraph{Molecular substructures} A few other works have extended GNNs to account for 
molecular substructures. Junction Trees (JT), which conveniently represent singletons, bonds and rings as supernodes in a tree, have been used in molecular graph generation~\citep{jin2018junction, jin2019learning}. JTs are also used in the recent work of~\citet{Fey2020_himp}, who employs them to design a hierarchical message passing scheme based on the tree structure. However, this hierarchy has a different configuration than the one cell complexes provide. Information about cycles is also used in GSNs~\citep{bouritsas2020improving} to augment the node features, but the model retains the usual message passing procedure of GNNs. These last two models are of particular relevance to the present work, since they utilise information about chemical rings. It is important to remark that CWNs compare favourably with both of them in all our benchmarks. 

\paragraph{Higher-order GNNs} A related line of work has studied lifting graphs into $k$-dimensional tensor representations that can be processed by provably expressive $k$-GNNs \citep{maron2018invariant, maron2019provably, azizian2021expressive}. With higher values of $k$, these models achieve higher-expressivity, but due to the computational complexity this incurs, values of $k \geq 3$ are of little use in practice. Therefore, unlike CWNs, these models cannot explicitly represent in practice chemical rings of common sizes (e.g. five or six). Furthermore, by being upper-bounded by 3-WL, the 2-GNN models cannot count the number of induced cycles of size greater than four (see Appendix \ref{app:proofs_cellular} for details). In contrast, CWNs can easily count these important chemical substructures through the readout operation it performs on the 2-cells. 

\paragraph{Limitations} 
The main limitations of the model are of computational nature. While the computational complexity of the message passing procedure and its preprocessing step is suitable for molecular and geometric graphs, the number of rings (and more generally simple cycles) in general graphs can be exponential in the number of nodes. In that case, one has to resort to smaller 2-cells like triangles, which can be found efficiently in general graphs. Moreover, one has to typically use weights specific for each dimension of the cell complex, increasing the number of parameters compared to GNNs. However, we have shown that our model can compensate this increase with a reduced number of layers and still achieve state-of-the-art results on some of the molecular benchmarks. 

From a theoretical point of view, this work is concerned only with \emph{regular} cell complexes. Adopting this restriction is useful from multiple perspectives: regular cell complexes are easier to analyse, their combinatorial structure completely describes their topology and convolutions can be defined on them through the Sheaf Laplacian (see Appendix \ref{app:convs}). Nonetheless, some of our theoretical results could be extended to non-regular complexes, which could be obtained by lifting transformations not studied in this work, such as attaching 2-cells to paths in the graph. We leave addressing non-regular complexes and their trade-offs to future developments of this work.   

\paragraph{Societal Impacts} Most of our paper is theoretical in 
nature and we do not see immediate direct negative societal impacts. Within the scope of social network applications, we do not yet have sufficient evidence of performance improvement on related benchmarks to justify obvious adoption in such a domain. In contrast, the empirical performance on molecular benchmarks suggests it may have a positive impact on applications of immediate interest in pharmaceutics, such as drug discovery~\citep{drug_discovery}.


\paragraph{Conclusion} We have proposed a provably powerful message passing procedure on cell complexes motivated by a novel colour refinement algorithm to test their isomorphism. This allows us to consider flexible lifting operations on graphs to implement more expressive architectures which benefit from decoupling the computational and input graphs. Our methods show excellent performance on diverse synthetic and real-world molecular benchmarks. 

\section*{Funding and Acknowledgements}

YW and GM acknowledge support from the ERC under the EU's Horizon 2020 
programme (grant agreement n\textsuperscript{o} 757983). MB is supported in part by ERC Consolidator grant n\textsuperscript{o} 724228 (LEMAN). The authors declare no competing interests. We are also grateful to Ben Day, Gabriele Corso and Nikola Simidjievski for their helpful feedback. We would also like to thank Vijay P. Dwivedi and Chaitanya K. Joshi for clarifying certain aspects of their Benchmarking GNNs \citep{dwivedi2020benchmarkgnns} work, and to Muhammet Balcilar for signalling a numerical precision issue in early SR graphs experiments. 

\bibliographystyle{plainnat}
\bibliography{references}

\newpage 

\appendix

\section{Proofs}
\label{app:proofs_cellular}
\subsection{Cellular WL Results}

In this section, we assume basic familiarity with the WL test and its higher-order variants. For an introduction to these topics, we refer the reader to the survey of \citet{sato2020survey}. We begin by introducing a few useful concepts. 

\begin{definition}
A \textbf{cellular colouring} is a map $c$ that maps a cell complex $X$ and one of its cells $\sigma$ to a colour from a fixed colour palette. We denote this colour by $c^X_\sigma$. 
\end{definition}

\begin{definition}
Let $X, Y$ be two regular cell complexes and $c$ a cellular colouring. We say that $X, Y$ are $\mathbf{c}$\textbf{-similar}, denoted by $c^{X} = c^{Y}$, if the number of cells in $X$ coloured with a given colour equals the number of cells in $Y$ with the same colour. Otherwise, we have $c^{X} \neq c^{Y}$.  
\end{definition}

We emphasise that in this paper we are interested only in colourings $c$ with the property that any two isomorphic cell complexes are $c$-similar. 

\begin{definition}
A cellular colouring $c$ \textbf{refines} a cellular colouring $d$, denoted by $c \sqsubseteq d$, if for all cell complexes $X$ and $Y$
and all $\sigma \in P_X$ and $\tau \in P_Y$, $c^{X}_\sigma = c^{Y}_\tau$ implies $d^{X}_\sigma = d^{Y}_\tau$. Additionally, if $d \sqsubseteq c$, we say the two colourings are equivalent and we represent it by $c \equiv d$. 
\end{definition}

We state the following result from \citet{bodnar2021weisfeiler} about simplicial colourings, which we translate here directly to cell complexes. The proof is however, identical, and we refer the reader to their work for that. 

\begin{proposition}
\label{prop:refine_multiset}
Let $X, Y$ be any regular cellular complexes with $A \subseteq P_X$ and $B \subseteq P_Y$. Consider two cellular colourings $c, d$ such that $c \sqsubseteq d$. If $\ldblbrace d_\sigma^{X} \mid \sigma \in A \rdblbrace \neq \ldblbrace d_\tau^{Y} \mid \tau \in B \rdblbrace$, then $\ldblbrace c_\sigma^{X} \mid \sigma \in A \rdblbrace \neq \ldblbrace c_\tau^{Y} \mid \tau \in B \rdblbrace$.
\end{proposition}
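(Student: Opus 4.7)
The natural approach is proof by contrapositive: assume the $c$-multisets coincide and deduce that the $d$-multisets must then coincide as well. This reduces the statement to an almost-immediate consequence of how refinement interacts with multiset equality.

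First I would spell out multiset equality combinatorially. Suppose
$\ldblbrace c_\sigma^{X} \mid \sigma \in A \rdblbrace = \ldblbrace c_\tau^{Y} \mid \tau \in B \rdblbrace$.
Then $|A| = |B|$ and there exists a bijection $\phi : A \to B$ such that $c_\sigma^{X} = c_{\phi(\sigma)}^{Y}$ for every $\sigma \in A$. This is just the standard restatement of equality of multisets as the existence of a colour-preserving pairing.

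Next I would apply the hypothesis $c \sqsubseteq d$ pointwise along this bijection. By the definition of refinement, for each $\sigma \in A$ the equality $c_\sigma^{X} = c_{\phi(\sigma)}^{Y}$ forces $d_\sigma^{X} = d_{\phi(\sigma)}^{Y}$. Thus the same bijection $\phi$ now witnesses the multiset equality
$\ldblbrace d_\sigma^{X} \mid \sigma \in A \rdblbrace = \ldblbrace d_\tau^{Y} \mid \tau \in B \rdblbrace$,
which is exactly the negation of the hypothesis of the proposition. Taking contrapositives gives the claim.

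There is no real obstacle here; the only subtlety worth flagging is the multiset (rather than set) nature of the colour collections, which is why one should produce an explicit bijection rather than reason elementwise. Everything else follows by unpacking definitions, and the argument does not use any structural feature of regular cell complexes beyond the fact that the colourings are defined on the finite poset $P_X$ (so in particular the same proof applies to graphs, simplicial complexes, and the cellular setting uniformly, as one would expect from a refinement-style lemma).
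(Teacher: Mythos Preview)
Your argument is correct and is exactly the standard contrapositive-plus-bijection proof one expects for a refinement lemma of this kind. The paper does not actually include its own proof of this proposition; it simply states the result and refers the reader to \citet{bodnar2021weisfeiler}, so there is nothing further to compare against.
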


\begin{corollary}
\label{cor:non_iso_colour}
Consider two cellular colourings $c, d$ such that $c \sqsubseteq d$. For all cell complexes $X$ and $Y$, if $d^{X} \neq d^{Y}$, then $c^{X} \neq c^{Y}$. 
\end{corollary}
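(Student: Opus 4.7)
The plan is to deduce this corollary directly from Proposition~\ref{prop:refine_multiset}, by specialising the subsets $A, B$ to be the full face posets. First I would unpack the notation: the condition $d^{X} \neq d^{Y}$ means precisely that the multisets $\ldblbrace d^{X}_\sigma \mid \sigma \in P_X \rdblbrace$ and $\ldblbrace d^{Y}_\tau \mid \tau \in P_Y \rdblbrace$ differ, since $c$-similarity was defined exactly as equality of the multiset of colours assigned to all cells. Conversely, to prove $c^{X} \neq c^{Y}$ I need to show the analogous multisets for $c$ are unequal.

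With the translation in hand, I would apply Proposition~\ref{prop:refine_multiset} with $A = P_X$ and $B = P_Y$. The hypothesis $c \sqsubseteq d$ is given, and the hypothesis that the $d$-multisets differ is exactly the restatement of $d^{X} \neq d^{Y}$. The conclusion of the proposition is exactly that the $c$-multisets differ, which is the definition of $c^{X} \neq c^{Y}$. This closes the argument in one invocation.

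There is no real obstacle here, because the corollary is essentially a renaming of Proposition~\ref{prop:refine_multiset} with the global choice $A = P_X$, $B = P_Y$. The only point that deserves a sentence in the writeup is the explicit identification of $c$-similarity with multiset equality over the full face posets, so that the reader sees the corollary follows from the proposition with no further combinatorial work.
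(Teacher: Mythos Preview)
Your proposal is correct and matches the paper's intended derivation: the corollary is stated immediately after Proposition~\ref{prop:refine_multiset} without an explicit proof, precisely because it is the specialisation $A = P_X$, $B = P_Y$ together with the observation that $c$-similarity means equality of the full colour multisets.
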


This last result implies that if $c$ refines $d$, then $c$ can distinguish all the non-isomorphic cell complexes that $d$ can distinguish. We say that the colouring $c$ is at least as powerful as the colouring $d$. 

In contrast to simplicial complexes, cell complexes have a more flexible structure. The main complication compared to the proofs in \citet{bodnar2021weisfeiler} is that cells can have a variable number of lower-dimensional cells on their boundary. It is therefore useful in many proofs, to separate the cells into buckets containing cells with the same boundary size. The following result helps us do that. 

\begin{proposition}
\label{prop:cwl_face_id}
Let $c^t$ be the CWL colouring at iteration $t$. For all cells $\sigma, \tau$ in any cell complexes $X$ and $Y$, if $\vert \gB(\sigma) \vert \neq \vert \gB(\tau) \vert$, then for any $t > 0$ we have $c^t_\sigma \neq c^t_\tau$.  
\end{proposition}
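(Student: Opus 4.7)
}

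The plan is a short induction on $t$, exploiting two facts: (i) the CWL update $c_\sigma^{t+1} = \mathrm{HASH}\bigl(c_\sigma^t, c_\gB^t(\sigma), c_\gC^t(\sigma), c_\da^t(\sigma), c_\ua^t(\sigma)\bigr)$ uses a perfect (i.e.\ injective) hash, and (ii) the multiset $c_\gB^t(\sigma)$ has cardinality exactly $|\gB(\sigma)|$, since it records one colour per boundary cell.

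For the base case $t = 1$, recall that all cells are initialised with the same colour $c^0$. Then $c_\gB^0(\sigma)$ is the multiset consisting of $|\gB(\sigma)|$ copies of $c^0$, and similarly $c_\gB^0(\tau)$ has $|\gB(\tau)|$ copies of $c^0$. Since $|\gB(\sigma)| \neq |\gB(\tau)|$, these two multisets have different cardinalities and are therefore distinct as multisets. Because the HASH function is injective, it follows that $c_\sigma^1 \neq c_\tau^1$.

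For the inductive step, assume $c_\sigma^t \neq c_\tau^t$ for some $t \geq 1$. Then the tuples $\bigl(c_\sigma^t, c_\gB^t(\sigma), c_\gC^t(\sigma), c_\da^t(\sigma), c_\ua^t(\sigma)\bigr)$ and $\bigl(c_\tau^t, c_\gB^t(\tau), c_\gC^t(\tau), c_\da^t(\tau), c_\ua^t(\tau)\bigr)$ already differ in their first component, so by injectivity of HASH we get $c_\sigma^{t+1} \neq c_\tau^{t+1}$, completing the induction.

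There is no real obstacle here: the argument is essentially a bookkeeping observation that a mismatch in boundary cardinality is already visible at $t=1$ through multiset size, and is thereafter propagated forever by the inclusion of $c_\sigma^t$ as an argument to the next HASH call. The only subtlety worth noting is that this proof would fail if the update did not keep $c_\sigma^t$ as an explicit input, since then the different multisets at later iterations might collide under HASH; the stated update rule rules this out automatically.
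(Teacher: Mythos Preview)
Your proof is correct and follows essentially the same approach as the paper's. The paper gives a one-sentence argument (boundary multisets of different sizes are already unequal, hence the HASH outputs differ for all $t>0$), relying implicitly on the standard refinement property; you have simply unpacked that ``immediately implies'' into an explicit base case and induction step.
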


\begin{proof}
If $\sigma$ and $\tau$ have boundaries of different sizes, then $c^1_\gB(\sigma) \neq c^1_\gB(\tau)$, which immediately implies $c^t_\sigma \neq c^t_\tau$ for all $t > 0$. 
\end{proof}

Next, we show that one can drop the co-boundary adjacencies without sacrificing expressive power.

\begin{lemma}
\label{lemma:drop_cofaces}
    CWL with $\mathrm{HASH}\bigl(c_\sigma^t, c_{\gB}^t(\sigma), c_{\da}^t(\sigma), c_{\ua}^t(\sigma)\bigr)$ is as powerful as CWL with the generalised update rule $\mathrm{HASH}\bigl(c_\sigma^t, c_{\gB}^t(\sigma), c_{\gC}^t(\sigma), c_{\da}^t(\sigma), c_{\ua}^t(\sigma)\bigr)$.
\end{lemma}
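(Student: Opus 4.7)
My plan is to establish the lemma via a two-sided refinement argument. Write $c^t$ and $d^t$ for the sparse and full CWL colourings after $t$ iterations respectively, both initialised from the same constant colouring. In one direction, a straightforward induction shows $d^t \sqsubseteq c^t$ because the full HASH takes every input of the sparse HASH as arguments; cells assigned the same full colour are assigned the same sparse colour, and the Proposition~\ref{prop:refine_multiset} lifting lets the induction step go through. The harder direction I would phrase with a one-iteration offset: prove by induction on $t \geq 0$ that $c^{t+1} \sqsubseteq d^t$. Combined with the trivial direction, the stable partitions produced by the two CWL variants coincide, so they distinguish exactly the same regular cell complexes.

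The base case $t = 0$ of the offset induction is immediate because $d^0$ is constant and is refined by any colouring. For the inductive step, assume $c^{t+1} \sqsubseteq d^t$ and suppose $c^{t+2}_\sigma = c^{t+2}_\tau$. Unpacking the sparse HASH at iteration $t+2$, we obtain $c^{t+1}_\sigma = c^{t+1}_\tau$ together with multiset equalities for $c^{t+1}_\gB, c^{t+1}_\da, c^{t+1}_\ua$. Invoking the inductive hypothesis and Proposition~\ref{prop:refine_multiset} then lifts all of these to the corresponding $d^t$ equalities. The only ingredient of $d^{t+1}_\sigma$ that we have not yet matched is $d^t_\gC(\sigma)$; showing its equality for $\sigma$ and $\tau$ is the single non-trivial step of the proof.

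The heart of the argument, and the step I expect to require the most care, is recovering $d^t_\gC(\sigma)$ from $c^{t+1}_\ua(\sigma)$. Each coboundary cell $\delta \in \gC(\sigma)$ contributes a fixed positive number of pairs with second coordinate $c^{t+1}_\delta$ to the upper-adjacency multiset, namely one pair for each $\rho \in \gB(\delta)$ upper-adjacent to $\sigma$ through $\delta$. By Proposition~\ref{prop:cwl_face_id} applied to the sparse colouring at iteration $t+1 \geq 1$, any two cells sharing a colour have equal boundary size, so this multiplicity depends only on the colour $\gamma := c^{t+1}_\delta$. Regularity of the complex forces $|\gB(\delta)| \geq 2$ for every $\delta$ of positive dimension (and coboundary cells of $\sigma$ have dimension at least one), so the multiplicity is strictly positive and division is legal. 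Grouping the pairs of $c^{t+1}_\ua(\sigma)$ by their second coordinate and dividing by this colour-dependent count reconstructs the multiset $\ldblbrace c^{t+1}_\delta : \delta \in \gC(\sigma) \rdblbrace$; performing the same recovery for $\tau$ and using $c^{t+1}_\ua(\sigma) = c^{t+1}_\ua(\tau)$ shows these sparse multisets are equal. One last application of the inductive hypothesis together with Proposition~\ref{prop:refine_multiset} converts this equality into $d^t_\gC(\sigma) = d^t_\gC(\tau)$, closing the induction.

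The reason for the one-iteration offset is worth emphasising: a direct proof of $c^t \sqsubseteq d^t$ appears to break at the transition $t = 0 \to 1$, since Proposition~\ref{prop:cwl_face_id} guarantees boundary-size invariance only once $t \geq 1$. At $t = 0$ there is no way to disentangle $|\gC(\sigma)|$ from the heterogeneous boundary sizes of the cells contributing to $c^0_\ua(\sigma)$. The shifted claim allows one extra sparse iteration to encode each cell's boundary size in its colour before the recovery step is invoked, and this one-step delay disappears upon comparing the stable partitions, so no expressive power is lost.
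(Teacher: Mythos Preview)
Your proposal is correct and follows essentially the same route as the paper: a one-step offset induction showing $c^{t+1} \sqsubseteq d^t$, with the key recovery of the coboundary colour multiset from the second coordinates of $c^{t+1}_\ua$ via the boundary-size multiplicity argument underpinned by Proposition~\ref{prop:cwl_face_id}. The only differences are expository---you invoke Proposition~\ref{prop:refine_multiset} and the regularity condition (to ensure $|\gB(\delta)|-1\geq 1$) more explicitly than the paper does, but the underlying argument is the same.
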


\begin{proof}
Let $a^t$ denote the colouring produced by CWL using the general version and $b^t$ the colouring produced using the restricted version at iteration $t$. It can be verified that $a^t \sqsubseteq b^t$ because it considers the additional $c_{\gB}^t(\sigma)$ colours in the refinement rule. We  now prove $b^{t+1} \sqsubseteq a^t$ by induction. Note that to take advantage of Proposition \ref{prop:cwl_face_id}, we shift the time-step by one (i.e. we use $b^{t+1}$ as opposed to $b^{t}$). 

The base case holds since $a^0$ assigns the same colour to all the cells. Suppose $b_\sigma^{t+2} = b_\tau^{t+2}$ for any two cells $\sigma$ and $\tau$ from any cell complexes $X$ and $Y$, respectively. Then we know that $b_\sigma^{t+1} = b_\tau^{t+1}, b_\gB^{t+1}(\sigma) = b_\gB^{t+1}(\tau), b_\ua^{t+1}(\sigma) = b_\ua^{t+1}(\tau)$ and $b_\da^{t+1}(\sigma) = b_\da^{t+1}(\tau)$. The goal is to show that this also implies that $b_\gC^{t+1}(\sigma) = b_\gC^{t+1}(\tau)$.

Given $b_\ua^{t+1}(\sigma) = b_\ua^{t+1}(\tau)$, by definition 
$$\ldblbrace b_{\delta_\sigma}^{t+1} \mid (\cdot, b_{\delta_\sigma}^{t+1}) \in b_\ua^{t+1}(\sigma) \rdblbrace
= \ldblbrace b_{\delta_\tau}^{t+1} \mid (\cdot, b_{\delta_\tau}^{t+1}) \in b_\ua^{t+1}(\tau) \rdblbrace.$$ 
By Proposition \ref{prop:cwl_face_id}, cells with different boundary sizes have different colours. Therefore, we can partition these two multi-sets by the size of the cell boundaries, while preserving the equality between these sub-multisets. Therefore, for each $n \in \sN$:  
$$\ldblbrace b_{\delta_\sigma}^{t+1} \mid (\cdot, b_{\delta_\sigma}^{t+1}) \in b_\ua^{t+1}(\sigma) \mathrm{\ and\ } |\gB(\delta_\sigma)| = n \rdblbrace = \ldblbrace b_{\delta_\tau}^{t+1} \mid (\cdot, b_{\delta_\tau}^{t+1}) \in b_\ua^{t+1}(\tau) \mathrm{\ and\ } |\gB(\delta_\tau)| = n \rdblbrace.$$
Let $\gamma$ be an arbitrary cell. Then for each cell $\delta_\gamma \in \gC(\gamma)$, $\gamma$ exchanges messages with all the other boundary cells of $\delta_\gamma$. Therefore, the colour of each $\delta_\gamma$ with $|\gB(\delta_\gamma)| = n$ shows up with a multiplicity of $n - 1$ in the tuples of $b_\ua^{t+1}(\gamma)$. Eliminating $n - 2$ of these repeated colours for all $\delta_\sigma$ and $\delta_\tau$: 
$$\ldblbrace b_{\delta_\sigma}^{t+1} \mid \delta_\sigma \in \gC(\sigma) \mathrm{\ and\ } |\gB(\delta_\sigma)| = n \rdblbrace = \ldblbrace b_{\delta_\tau}^{t+1} \mid  \delta_\tau \in \gC(\tau) \mathrm{\ and\ } |\gB(\delta_\tau)| = n \rdblbrace.$$
Merging these in a single multi-set gives the colours of the co-boundary cells:
$$b_\gC^{t+1}(\sigma) = \ldblbrace b_{\delta_\sigma}^{t+1} \mid \delta_\sigma \in \gC(\sigma) \rdblbrace = \ldblbrace b_{\delta_\tau}^{t+1} \mid  \delta_\tau \in \gC(\tau) \rdblbrace = b_\gC^{t+1}(\tau).$$
By the induction hypothesis, $a_\sigma^{t} = a_\tau^{t}, a_\gB^{t}(\sigma) = a_\gB^{t}(\tau), a_\gC^{t}(\sigma) = a_\gC^{t}(\tau), a_\ua^{t}(\sigma) = a_\ua^{t}(\tau)$ and $a_\da^{t}(\sigma) = a_\da^{t}(\tau)$. This implies $a_\sigma^{t+1} = a_\tau^{t+1}$. 
\end{proof}

The following theorem shows that we can further prune the CWL update rule by removing the colours associated with the lower adjacencies. The structure of the proof is similar to the one in \citet{bodnar2021weisfeiler}, with the main difference being in the proof of Proposition \ref{prop:partition_low_adj}.

\begin{proof}[\textbf{Proof of Theorem~\ref{thm:sparse_cwl}}]
Let $b^t$ denote the colouring of CWL using $\mathrm{HASH}\bigl(b_\sigma^t, b_{\gB}^t(\sigma), b_{\ua}^t(\sigma)\bigr)$ and $a^t$ the colouring of CWL using the rule $\mathrm{HASH}\bigl(a_\sigma^t, a_{\gB}^t(\sigma), a_{\da}^t(\sigma), a_{\ua}^t(\sigma)\bigr)$ from  Lemma~\ref{lemma:drop_cofaces}. Trivially $a^t \sqsubseteq b^t$ because of the additional argument $c_{\da}^t(\sigma)$ in the update rule. We prove $b^{2t+1} \sqsubseteq a^t$ by induction. As before, the addition by one in $2t+1$ is to allow us to apply Proposition \ref{prop:cwl_face_id} in the induction step. The multiplication by $2$ is due to the fact that the information transmitted through the lower adjacencies in one step is propagated in two steps through the boundary adjacencies.  

As before, the base case trivially holds since $a^0$ assigns the same colour to all cells. Suppose $b_\sigma^{2t+3} = b_\tau^{2t+3}$. By unrolling the hash function two steps in time, we obtain $b_\sigma^{2t+1} = b_\tau^{2t+1}$, $b_{\gB}^{2t+1}(\sigma) = b_{\gB}^{2t+1}(\tau)$, and $b_{\ua}^{2t+1}(\sigma) = b_{\ua}^{2t+1}(\tau)$. We need to prove that $b_\da^{2t+1}(\sigma) = b_\da^{2t+1}(\tau)$ also holds. For the sake of contradiction, assume $b_\da^{2t+1}(\sigma) \neq b_\da^{2t+1}(\tau)$. Then there exists a pair of colours $(\sC_0, \sC_1)$ that shows up (without loss of generality) more times in $b_\da^{2t+1}(\sigma)$ than in $b_\da^{2t+1}(\tau)$. For simplicity, we also assume $b_\sigma^{2t+1} \neq \sC_0 \neq b_\tau^{2t+1}$ as this special case can be easily treated separately. 

For all cell complexes $X$ and all cells $\delta$ in $P_X$, consider the collection of multi-sets $A_X$ indexed by $\delta$:
$$A_X(\delta) = \ldblbrace (b_\psi^{2t+1} = \sC_0, b_\delta^{2t+1} = \sC_1) \mid \psi \in \gC(\delta) \rdblbrace.$$
We are interested in the size of these multi-sets for some specific cells $\delta$. To that end, for each cell $\gamma \in P_X$, we define the multi-set:
$$C_X(\gamma) = \ldblbrace \vert A_X(\delta) \vert \mid \delta \in \gB(\gamma) \rdblbrace.$$
We know that $C_X(\sigma) \neq C_Y(\tau)$ since the sum of the elements of $C_X(\sigma)$, which gives the number of tuples $(\sC_0, \sC_1)$ in $b_\da^{2t+1}(\sigma)$, is greater than the sum of the elements of $C_Y(\tau)$, which gives the number of tuples $(\sC_0, \sC_1)$ in $b_\da^{2t+1}(\tau)$. We prove this contradicts our hypothesis that $b_\sigma^{2t+3} = b_\tau^{2t+3}$. 

\begin{proposition}
\label{prop:partition_low_adj} 
For all regular cell complexes $X, Y$ and all $\sigma \in P_X, \tau \in P_Y$, if $C_X(\sigma) \neq C_Y(\tau)$, then $b_\sigma^{2t+3} \neq b_\tau^{2t+3}$. 
\end{proposition}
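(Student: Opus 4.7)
The plan is to show that the CWL colouring at step $2t+2$ refines the ``count'' colouring $\delta \mapsto |A_X(\delta)|$, and then invoke Proposition~\ref{prop:refine_multiset} on the multi-sets indexed by $\gB(\sigma)$ and $\gB(\tau)$ to transfer the hypothesis $C_X(\sigma) \neq C_Y(\tau)$ into an inequality $b_\gB^{2t+2}(\sigma) \neq b_\gB^{2t+2}(\tau)$. One more HASH step, using the injectivity guaranteed by the perfect hash, then delivers $b_\sigma^{2t+3} \neq b_\tau^{2t+3}$.

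The key intermediate claim is that $|A_X(\delta)|$ is a function of $(b_\delta^{2t+1}, b_\ua^{2t+1}(\delta))$. If $b_\delta^{2t+1} \neq \sC_1$ then $|A_X(\delta)| = 0$ by definition, so the substantive case is $b_\delta^{2t+1} = \sC_1$, where $|A_X(\delta)|$ is the number of $\psi \in \gC(\delta)$ with $b_\psi^{2t+1} = \sC_0$. I would recover this count from $b_\ua^{2t+1}(\delta)$ by observing that each such $\psi$ contributes exactly one tuple $(b_\tau^{2t+1}, b_\psi^{2t+1})$ to $b_\ua^{2t+1}(\delta)$ for every $\tau \in \gB(\psi) \setminus \{\delta\}$, i.e., with a multiplicity $m$ that depends only on $|\gB(\psi)|$. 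By Proposition~\ref{prop:cwl_face_id} (applicable since $2t+1 \geq 1$), every cell whose colour is $\sC_0$ shares the same value of $|\gB(\cdot)|$, so $m$ is a well-defined constant $m(\sC_0)$; the desired count then equals the multiplicity of $\sC_0$ in the projection of $b_\ua^{2t+1}(\delta)$ onto its second coordinate, divided by $m(\sC_0)$. Hence $|A_X(\delta)|$ is determined by $(b_\delta^{2t+1}, b_\ua^{2t+1}(\delta))$.

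Since $b_\delta^{2t+2} = \mathrm{HASH}\bigl(b_\delta^{2t+1}, b_\gB^{2t+1}(\delta), b_\ua^{2t+1}(\delta)\bigr)$ with HASH injective, any two cells (possibly in different complexes) sharing the colour $b^{2t+2}$ must agree on both $b^{2t+1}$ and $b_\ua^{2t+1}$, and therefore on $|A|$. Thus $b^{2t+2} \sqsubseteq |A_X(\cdot)|$. The assumption $C_X(\sigma) \neq C_Y(\tau)$ is exactly the statement that the $|A|$-multi-sets over $\gB(\sigma)$ and $\gB(\tau)$ differ, so Proposition~\ref{prop:refine_multiset}, applied with the finer colouring $b^{2t+2}$ and the coarser colouring $|A|$ to the subsets $\gB(\sigma) \subseteq P_X$ and $\gB(\tau) \subseteq P_Y$, yields $b_\gB^{2t+2}(\sigma) \neq b_\gB^{2t+2}(\tau)$. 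Since $b_\gB^{2t+2}(\sigma)$ is one of the arguments of the HASH producing $b_\sigma^{2t+3}$ (and likewise for $\tau$), injectivity of HASH immediately gives $b_\sigma^{2t+3} \neq b_\tau^{2t+3}$.

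The main obstacle is the counting step: converting the raw multi-set $b_\ua^{2t+1}(\delta)$ into a clean count of co-boundary cells of a fixed colour requires identifying and inverting the multiplicity with which each $\psi$ appears. This is where Proposition~\ref{prop:cwl_face_id} does the crucial work of forcing the multiplicity to depend only on the colour $\sC_0$ rather than on the individual cell $\psi$; without it, cells with the same CWL colour but different boundary sizes could conspire to obscure the count and block the recovery of $|A_X(\delta)|$. A small additional care is needed for cells with $|\gB(\psi)| = 1$, which contribute no pairs to $b_\ua^{2t+1}(\delta)$; one can either partition cells by boundary size beforehand (again using Proposition~\ref{prop:cwl_face_id}) and treat each bucket separately, or invoke the regularity of the complex to rule out the degenerate configuration.
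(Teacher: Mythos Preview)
Your proposal is correct and follows essentially the same approach as the paper's proof. Both arguments establish that $b^{2t+2}$ refines the count colouring $\delta \mapsto |A_X(\delta)|$ by using Proposition~\ref{prop:cwl_face_id} to fix the boundary size $n$ of all $\sC_0$-coloured cells, recover the count from the multiplicity $(n-1)$ of $\sC_0$ in the second coordinate of $b_\ua^{2t+1}(\delta)$, and then apply Proposition~\ref{prop:refine_multiset} to the boundary multi-sets; the paper phrases the refinement via the contrapositive and explicitly invokes regularity to ensure $n>1$, which is exactly the degenerate case you flag at the end.
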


\begin{proof}
Given a cell complex $X$ and a cell $\delta \in P_X$, consider the cellular colouring $c^X_\delta = \vert A_X(\delta) \vert$. The idea of the proof is to show that $b^{2t+2} \sqsubseteq c$, which allows us to use Proposition \ref{prop:refine_multiset} for the multi-sets $C_X(\sigma)$ and $C_Y(\tau)$. 

Let $\delta_1, \delta_2$ be two arbitrary cells from any regular cell complexes $X, Y$ such that $c^X_{\delta_1} \neq c^Y_{\delta_2}$. Assume without loss of generality that $\vert A_X({\delta_1}) \vert > \vert A_Y({\delta_2}) \vert$. Two cases can be distinguished for this inequality. In the first case, $b^{2t+1}_{\delta_2} \neq \sC_1$, which implies $\vert A_X(\delta_1) \vert > \vert A_Y(\delta_2) \vert = 0$ and, therefore, $b^{2t+1}_{\delta_1} = \sC_1$. Then $b^{2t+2}_{\delta_1} \neq b^{2t+2}_{\delta_2}$. 

In the second case, $b^{2t+1}_{\delta_2} = \sC_1$, which implies $\vert A_X(\delta_1) \vert > \vert A_Y(\delta_2) \vert \geq 0$ and $b^{2t+1}_{\delta_1} = \sC_1$. Then, the difference in the size of the multi-sets is made by the number of times $\sC_0$ shows up in $A_X({\delta_1})$ and $A_Y({\delta_2})$, respectively. By Proposition \ref{prop:cwl_face_id}, all $k$-cells $\gamma$ with $k > 0$ and $b^{2t+1}_\gamma = \sC_0$ must have a fixed boundary size $\vert \gB(\gamma) \vert = n$. Because each cell $\delta \in \gB(\gamma)$ is upper adjacent with every other cell in $\gB(\gamma)$, $b^{2t+1}_\gamma$ appears $n - 1$ times in the tuples inside $b_\ua^{2t+1}(\delta)$. Additionally, note that since the cell complex is regular, self-loops are not allowed and, therefore, $n > 1$. 

Applying this to $\delta_1$ and $\delta_2$, $\sC_0$ shows up $\vert A_X({\delta_1}) \vert \times (n - 1)$ times in $b_\ua^{2t+1}(\delta_1)$ and $\vert A_Y({\delta_2}) \vert \times (n - 1)$ times in $b_\ua^{2t+1}(\delta_2)$. Therefore, $b_\ua^{2t+1}(\delta_1) \neq b_\ua^{2t+1}(\delta_2)$ and, similarly to the first case, $b^{2t+2}_{\delta_1} \neq b^{2t+2}_{\delta_2}$. The results obtained for the two cases prove $b^{2t+2} \sqsubseteq c$.

Applying Proposition \ref{prop:refine_multiset} for the multi-sets $C_X(\sigma)$ and $C_Y(\tau)$, we obtain two non-equal multi-sets:
$$b_\gB^{2t+2}(\sigma) = \ldblbrace b_{\delta_1}^{2t+2} \mid \delta_1 \in \gB(\sigma) \rdblbrace \neq \ldblbrace b_{\delta_2}^{2t+2} \mid \delta_2 \in \gB(\tau) \rdblbrace = b_\gB^{2t+2}(\tau)$$
Since these two multi-sets are used in the colour updating rule, $b_\sigma^{2t+3} \neq b_\tau^{2t+3}$. 
\end{proof}

Therefore, $b_\da^{2t+1}(\sigma) = b_\da^{2t+1}(\tau)$. Finally, applying the induction hypothesis, we have that $a_\sigma^{t} = a_\tau^{t}, a_{\gB}^{t}(\sigma) = a_{\gB}^{t}(\tau), a_{\ua}^{t}(\sigma) = a_{\ua}^{t}(\tau)$ and  $a_\da^t(\sigma) = a_\da^t(\tau)$. Then $a^{t+1}_\sigma = a^{t+1}_\tau$. 
\end{proof}

\begin{proof}[\textbf{Proof of Theorem~\ref{thm:skeleton}}]
Consider the map $f: \gG \to \gX$, a skeleton-preserving lifting transformation from the space of graphs $\gG$, to the space of regular cell complexes $\gX$. Let $g_G: V_G \to P_{f(G)^{(0)}}$ be the graph isomorphism associated to $f$ between the vertices of $G$ and the 0-cells of $f(G)$ for all $G \in \gG$. Let $c^{G, t}$ be the WL colouring of graph $G$ at iteration $t$ and $a^{f(G), t}$ the colouring of $f(G)^{(1)}$ induced by the isomorphism $g_G$ (i.e $a^{f(G)^{(1)}, t}_{g(v)} := c^{G, t}_v$) at the same time step $t$. 

Because $f(G)^{{(1)}}$ and $G$ are isomorphic as graphs and WL is invariant under isomorphism, $a^{f(G)^{(1)}, t} = c^{G, t}$. It follows that for all graphs $G_1, G_2 \in \gG$, if $c^{G_1, t} \neq c^{G_2, t}$ then $a^{f(G_1)^{(1)}, t} \neq ^{f(G_2)^{(1)}, t}$. Let $b^t$ be the CWL colouring of the 0-cells at iteration $t$. The goal is to show that for all regular cell complexes $X, Y \in f(\gG)$, $b^t \sqsubseteq a^t$. By transitivity and combined with Corollary \ref{cor:non_iso_colour}, it follows that if $c^{G_1, t} \neq c^{G_2, t}$, then $b^{f(G_1), t} \neq b^{f(G_2), t}$. 

The base case trivially holds. Let $\sigma, \tau$ be two 0-cells in $X \in f(\gG)$ and $Y \in f(\gG)$, respectively such that $b^{t+1}(\sigma) = b^{t+1}(\tau)$. Since 0-cells have only upper adjacencies, the equality implies that $b^t_\sigma = b^t_\tau$ and $b_\ua^t(\sigma) = b_\ua^t(\tau)$. The latter multi-set equality further implies
$$\ldblbrace b_{\delta_\sigma}^t \mid ( b_{\delta_\sigma}^t, \cdot) \in b_{\ua}(\sigma) \rdblbrace = \ldblbrace b_{\delta_\tau}^t \mid (b_{\delta_\tau}^t, \cdot) \in b_{\ua}(\tau) \rdblbrace.$$
Equivalently, for 0-cells of a cell complex whose 1-skeleton is a graph (i.e. not a multi-graph), this can be rewritten as
$$\ldblbrace b_{\delta_\sigma}^t \mid \delta_\sigma \in \nup(\sigma) \rdblbrace = \ldblbrace b_{\delta_\tau}^t \mid \delta_\tau \in \nup(\tau) \rdblbrace.$$
By the induction hypothesis we have $a^t_{\sigma} = a^t_{\tau}$ and
$$\ldblbrace a_{\delta_\sigma}^t \mid \delta_\sigma \in \nup(\sigma) \rdblbrace = \ldblbrace a_{\delta_\tau}^t \mid \delta_\tau \in \nup(\tau) \rdblbrace.$$
These equalities imply $a^{t+1}(\sigma) = a^{t+1}(\tau)$.
\end{proof}

\begin{figure}[t]
    \centering
    \begin{subfigure}{0.44\textwidth}
        \centering
        \includegraphics[width=\linewidth]{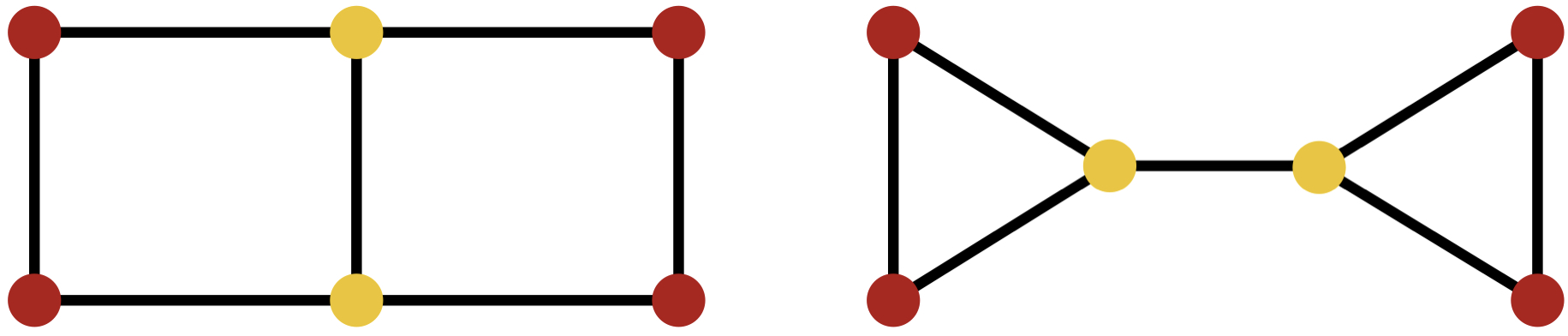}
    \end{subfigure}
    \hfill
    \begin{subfigure}{0.5\textwidth}
        \centering
        \includegraphics[width=\linewidth]{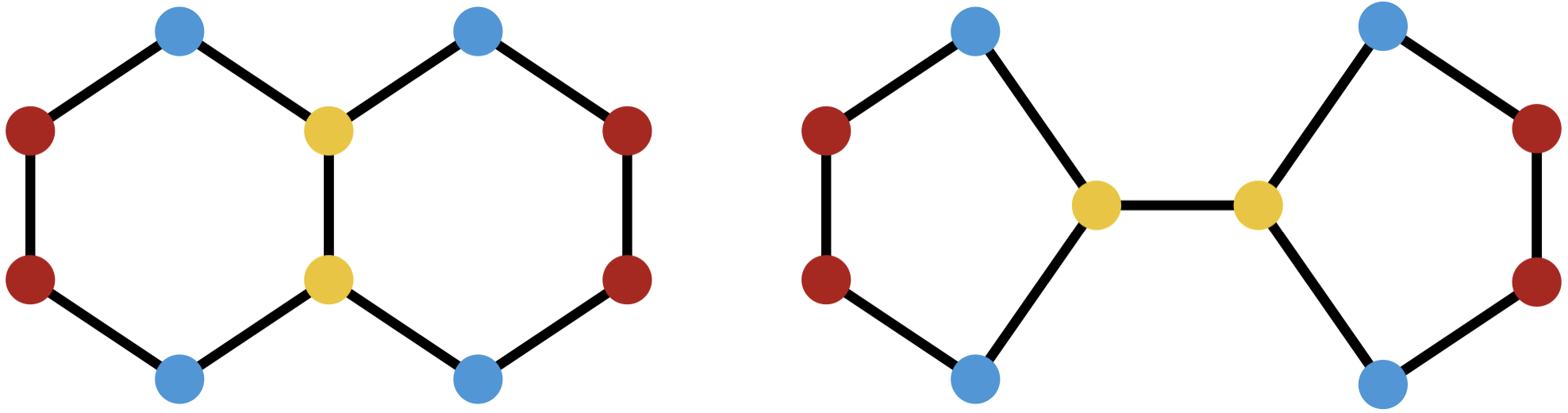}
    \end{subfigure}
    \caption{(Left) A pair of non-isomorphic graphs indistinguishable by WL, but distinguishable by CWL with a clique complex, ring or cycle-based lifting. (Right) A pair of non-isomorphic molecular graphs (Decalin and Bicyclopentyl) indistinguishable by WL but distinguishable by CWL with a ring-based or cycle-based lifting. The node colours show the stable colouring reached by WL. }
    \label{fig:wl_corollary}
\end{figure}

\begin{proof}[\textbf{Proof of Corollary~\ref{cor:WL_lifting_maps}}]
Due to Theorem \ref{thm:skeleton}, it is sufficient to find some examples of non-isomorphic graph pairs that WL cannot distinguish, but CWL can with the given lifting transformations. Figure \ref{fig:wl_corollary} includes such examples. Based on Proposition \ref{prop:cwl_face_id}, CWL can distinguish these graphs since it can count the number of substructures (e.g. triangles, rings, cycles) that the lifting is based on. 
\end{proof}

The next proposition shows that CWL can identify cells that are $n$-simplices. 

\begin{proposition}[Simplex Identification]
\label{prop:cwl_simplex_identification}
Let $X, Y$ be regular cell complexes and $\sigma \in P_X, \tau \in P_Y$ two cells. Denote by $c^t$ the CWL colouring at iteration $t$. Suppose $\sigma$ is an $n$-simplex and $\tau$ is not.  Then $c^t(\sigma) \neq c^t(\tau)$ for all $t \geq n + 1$.
\end{proposition}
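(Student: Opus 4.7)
The plan is to proceed by induction on $n$, using Proposition \ref{prop:cwl_face_id} to dispatch boundary-size mismatches and the inductive hypothesis to analyse the colours of the boundary cells. The base case $n = 0$ is immediate: a $0$-simplex is exactly a $0$-cell, so $|\gB(\sigma)| = 0$, whereas any non-$0$-simplex $\tau$ has positive dimension and hence $|\gB(\tau)| \geq 2$ (the regularity condition forces every cell of dimension $\geq 1$ to have at least two distinct boundary cells, ruling out self-loops). Proposition \ref{prop:cwl_face_id} then yields $c^t_\sigma \neq c^t_\tau$ for every $t \geq 1 = n+1$.

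For the inductive step I would assume the statement at level $n-1$ and fix an $n$-simplex $\sigma$ and any cell $\tau$ that is not an $n$-simplex. Note that $|\gB(\sigma)| = n+1$ and every $\delta \in \gB(\sigma)$ is an $(n-1)$-simplex. I split into three sub-cases. First, if $|\gB(\tau)| \neq n+1$, Proposition \ref{prop:cwl_face_id} immediately gives $c^t_\sigma \neq c^t_\tau$ for all $t \geq 1$. Second, if $|\gB(\tau)| = n+1$ but some $\delta \in \gB(\tau)$ is not an $(n-1)$-simplex, the inductive hypothesis implies $c^n_\delta \neq c^n_{\delta'}$ for every $(n-1)$-simplex $\delta'$ in any complex; since every entry of the multiset $c^n_\gB(\sigma)$ is the colour of some $(n-1)$-simplex, $c^n_\delta$ cannot appear in $c^n_\gB(\sigma)$, so $c^n_\gB(\sigma) \neq c^n_\gB(\tau)$, and the HASH at iteration $n+1$ separates $\sigma$ from $\tau$.

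The third sub-case---$|\gB(\tau)| = n+1$ with \emph{every} boundary cell of $\tau$ being an $(n-1)$-simplex---is the main obstacle. I would resolve it by showing that the configuration actually forces $\tau$ itself to be an $n$-simplex, contradicting the hypothesis and so being vacuous. Because the boundary relation $\prec$ in a regular cell complex always decreases dimension by exactly one, $\dim(\tau) = n$. By the regularity condition, the attaching map of $\tau$ is a homeomorphism from $\partial D^n \cong S^{n-1}$ onto $\bigcup_{\delta \in \gB(\tau)} \overline{X_\delta}$, so the $n+1$ $(n-1)$-simplex facets and all their sub-faces assemble into a triangulation of $S^{n-1}$ with exactly $n+1$ top-dimensional simplices. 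A classical minimality fact in combinatorial topology asserts that the unique such triangulation is the boundary of an $n$-simplex; therefore $\overline{X_\tau}$ has the face poset of an $n$-simplex, i.e.\ $\tau$ is an $n$-simplex, the desired contradiction. Once distinguished at iteration $n+1$, the colours remain distinct at every subsequent iteration by the standard monotonicity of colour refinement, which closes the induction.
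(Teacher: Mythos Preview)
Your inductive strategy and the first two sub-cases mirror the paper's proof exactly; the paper's argument is in fact a two-sentence sketch that asserts ``an $(n+1)$-simplex can be identified by having $n+2$ $n$-simplices on its boundary'' and stops essentially where your second sub-case ends, without isolating or justifying the third sub-case at all.

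Your attempt to close the third sub-case, however, has a genuine gap. You claim that if $|\gB(\tau)|=n+1$ with every boundary cell an $(n-1)$-simplex, then these facets assemble into a \emph{triangulation} of $S^{n-1}$ with $n+1$ top simplices, whence $\tau$ must be an $n$-simplex by the minimality characterisation of $\partial\Delta^n$. The minimality fact is correct for honest simplicial complexes, but in a regular cell complex the boundary sphere of $\tau$ need not be simplicial even when every individual cell below $\tau$ has the face poset of a simplex: two facets may share more than one codimension-one face. Concretely, for $n=3$ take vertices $a,b,c,d$, two parallel edges $e_1,e_2$ from $a$ to $b$, single edges $ac,bc,ad,bd$, and attach $2$-cells to the four $3$-cycles $(e_1,ac,bc)$, $(e_2,ac,bc)$, $(e_1,ad,bd)$, $(e_2,ad,bd)$. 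This is a valid regular CW structure on $S^2$ with exactly four $2$-cells, each of which is a bona fide $2$-simplex (three distinct edges on three distinct vertices); yet the $3$-cell $\tau$ filling it is \emph{not} a $3$-simplex, since its face poset contains a double edge. So the third sub-case is not vacuous and your rigidity argument fails as stated.

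For what it is worth, CWL does still separate this particular $\tau$ from a $3$-simplex by step $4$: the vertices in $\partial\tau$ have degrees $4,4,2,2$ rather than $3,3,3,3$, and this discrepancy propagates up through the boundary colours. A complete argument for the third sub-case would need to track information of this kind; neither your proof nor the paper's sketch does so.
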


\begin{proof}
The base case holds since $c_\sigma^1 \neq c_\tau^1$ if $\sigma$ is a vertex and $\tau$ is a cell of another dimension. This is because $\sigma$ has no boundary adjacencies, while $\tau$ does. 

Suppose the statement holds for $n$-simplices. Then, an $(n+1)$-simplex can be identified by having $n+2$ $n$-simplices on its boundary. By Proposition \ref{prop:cwl_face_id}, the colour of $\sigma$ encodes the boundary size. Furthermore, by the induction hypothesis $c_\gB^{n+1}(\sigma)$ encodes the fact that the boundary cells are $n$-simplices. 
\end{proof}

\begin{proof}[\textbf{Proof of Theorem~\ref{thm:lifting3WL}}]
The sub-results of the theorem can be proven by finding pairs of graphs from the same family of Strongly Regular Graphs that can be distinguished by CWL with the corresponding lifting transformations. Graphs in this family are provably indistinguishable by the higher-order 3-WL test~\citep{bodnar2021weisfeiler}. 

\begin{figure}
    \centering
    \includegraphics[width=0.65\textwidth]{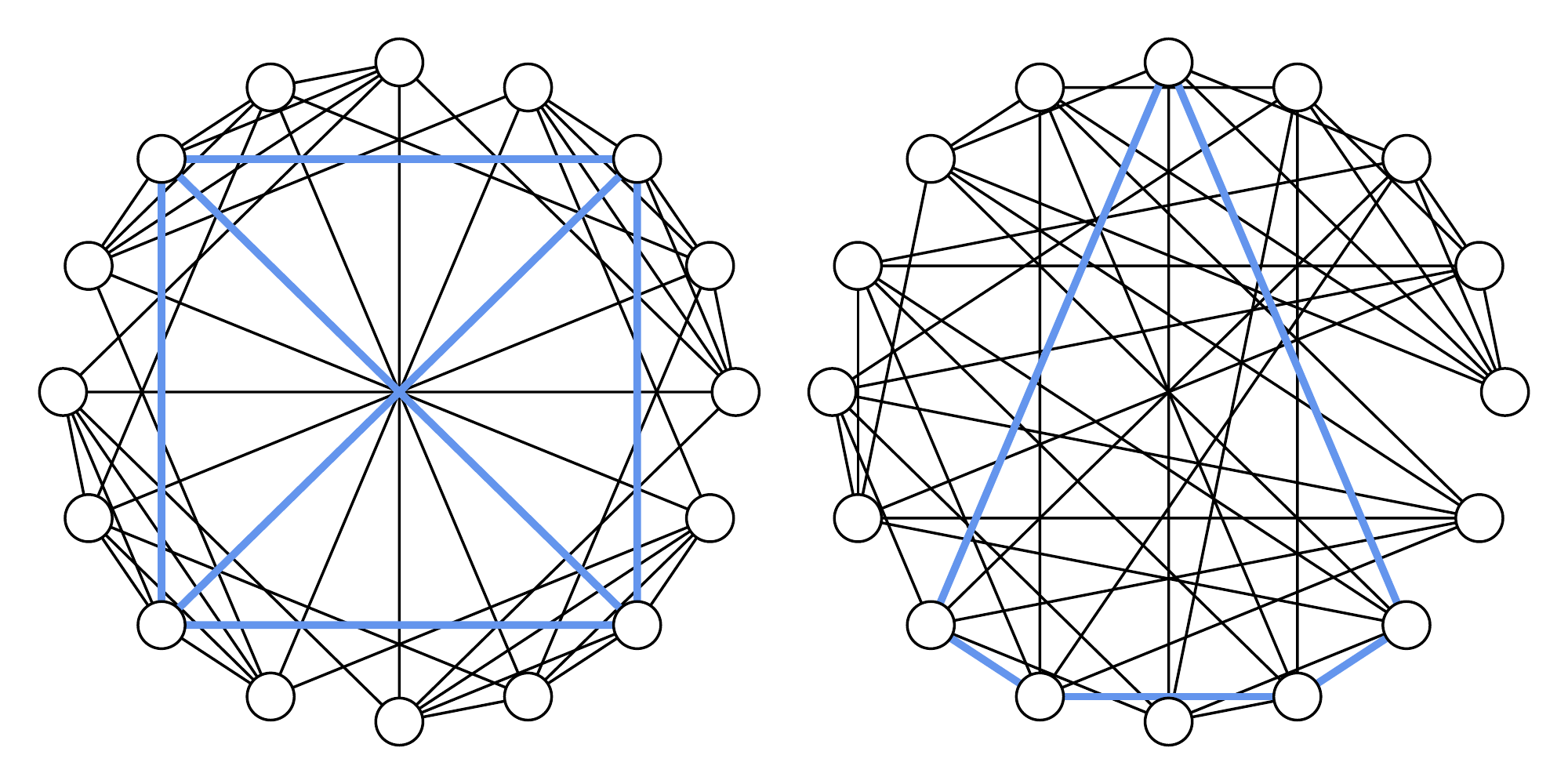}
    \caption{The two SR graphs in family SR$(16,6,2,2)$: Rook's 4$\times$4 (left) and Shrikhande (right). The $3$-WL test is not able to deem them as non-isomorphic. Contrary to the Shrikhande graph, Rook’s graph possesses $4$-cliques. The Shrikhande graph, however, features $5$-rings, not present in Rook’s. Instances of these substructures are marked in blue. With appropriate lifting procedures, CWL can distinguish between them.} 
    \label{fig:sr_pair}
\end{figure}

\paragraph{Ring-based lifting} We can show that there is a pair of SR graphs in the same family with a different number of induced cycles of a certain size. We include such an example in Figure \ref{fig:sr_pair}. The two graphs differ in the number of $4$-, $5$-, $6$- and $8$-rings (see Table~\ref{tab:sr_ring_counts}), which indirectly proves 3-WL cannot count induced cycles of these sizes. It is also natural to conjecture that 3-WL cannot count induced cycles of size strictly larger than $3$. In contrast, CWL($4$-$\mathrm{IC}$) is sufficient to distinguish these two graphs. 

\paragraph{Clique complex lifting} We can leverage on the same example: the graph on the right does not possess $4$-cliques, contrary to the graph on the left (one such example is marked in blue). This proves that 3-WL cannot count cliques of size $4$. As shown by \citet{bodnar2021weisfeiler}, this result immediately implies that SWL (and consequently CWL) with a clique complex lifting is not less powerful than 3-WL.

\begin{table}[h!]
    \centering
    \caption{Number of cycles and induced cycles (rings) on the SR graphs in family SR$(16,6,2,2)$.}
    \label{tab:sr_ring_counts}\vspace{1mm}
    \begin{tabular}{l|cccccc}
    \toprule
    Graph $\downarrow$ / Size $\rightarrow$ & 
        $3$ (Tri.) & 
        $4$ &
        $5$ &
        $6$ &
        $7$ &
        $8$ \\
    \midrule
    Rook's 4$\times$4 (cycles)&
        32 &
        60 &
        288 &
        1,248 &
        4,032 &
        11,952 \\
    Shrikhande (cycles)&
        32 &
        60 &
        288 &
        1,248 &
        4,032 &
        11,688 \\
    \midrule
    Rook's 4$\times$4 (rings)&
        32 &
        36 &
        0 &
        96 &
        0 &
        72 \\
    Shrikhande (rings)&
        32 &
        12 &
        96 &
        64 &
        0 &
        36 \\
    \bottomrule
    \end{tabular}
\end{table}

\paragraph{Cycle-based lifting} To prove the result for this lifting transformation we leverage on a result by \citet{ARVIND202042}, who show that 2-Folklore WL (which is equivalent to 3-WL \citep{sato2020survey}) cannot count subgraph cycles of size strictly larger than 7. Table \ref{tab:sr_ring_counts} illustrates this for the same example as above. Since CWL can count the number of 8-cycles when the lifting transformation $k$-$\mathrm{C}$ with $k \geq 8$ is used (see Proposition~\ref{prop:cwl_face_id}), this proves the result. 

\end{proof}

We note that while the proof above is purely based on substructure counts, the superior expressive power of CWL is very likely not limited to counting the substructures involved in the lifting transformation. We have seen evidence in favour of this claim in the SR experiment in Section \ref{sec:results}, where message passing layers reduced the failure rate. 

Next, we prove a statement comparing Simplicial WL and CWL. This will later be used to show that CWNs are strictly more powerful than MPSNs when a lifting transformation based on the clique complex and rings is used.  

\begin{definition}
A subset $L$ of a cell complex $X$ is called a \textbf{subcomplex} if it is a union of cells of $X$ containing the closures of these cells.
\end{definition}

\begin{theorem}
Let $f: \gG \to \gX$ be a skeleton-preserving transformation such that for any graph $G$, the clique complex of $G$ is a subcomplex of $f(G)$. Then CWL($f$) is at least as powerful as SWL using the clique complex lifting at distinguishing non-isomorphic graphs. 
\end{theorem}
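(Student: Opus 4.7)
My strategy is to show that, on the cells belonging to the clique complex $K := \gK(G)$, the CWL($f$) colouring of $X := f(G)$ refines the SWL colouring of $K$ (up to a constant iteration offset). Combined with Corollary~\ref{cor:non_iso_colour}, this immediately yields the theorem. By the sparse SWL result of~\citet{bodnar2021weisfeiler} and by Theorem~\ref{thm:sparse_cwl}, both procedures may be analysed using only boundary and upper adjacencies; write $a^t$ for the SWL colouring on $K$ and $b^t$ for the CWL colouring on $X$. Two structural facts will underpin the argument. First, since $K$ is a subcomplex of $X$, the closure of any simplex $\sigma \in K$ lies entirely in $K$, and therefore the cells bounding $\sigma$ in $X$ are exactly its boundary simplices in $K$: $\gB_X(\sigma) = \gB_K(\sigma)$. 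Second, by Proposition~\ref{prop:cwl_simplex_identification}, choosing $T$ at least one more than the largest simplex dimension appearing in $K$ ensures that the CWL colour $b^T_\delta$ alone determines whether the cell $\delta$ is a simplex.

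The core technical step is an induction on $t$: for any pair of simplex cells $\sigma \in X_1$, $\tau \in X_2$,
\[
b^{t+T}_\sigma = b^{t+T}_\tau \;\Longrightarrow\; a^t_\sigma = a^t_\tau.
\]
The base case $t=0$ is trivial since $a^0$ is constant. For the step, assume $b^{t+1+T}_\sigma = b^{t+1+T}_\tau$. Unrolling one application of the CWL update yields the equalities $b^{t+T}_\sigma = b^{t+T}_\tau$, $b^{t+T}_\gB(\sigma) = b^{t+T}_\gB(\tau)$, and $b^{t+T}_\ua(\sigma) = b^{t+T}_\ua(\tau)$, all computed in $X$. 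The first structural fact reduces the boundary multisets in $X$ to boundary multisets in $K$; Proposition~\ref{prop:refine_multiset} together with the induction hypothesis then gives $a^t_\gB(\sigma) = a^t_\gB(\tau)$. For upper adjacencies, the multiset computed in $X$ may include tuples $(b^{t+T}_\rho, b^{t+T}_\delta)$ whose intermediate coface $\delta$ is a non-simplex cell (e.g.\ a $2$-cell attached to a non-triangular induced cycle). By the second structural fact, such tuples are recognisable from $b^{t+T}_\delta$ alone, so filtering them out on both $X_1$ and $X_2$ preserves the equality of the resulting sub-multisets. These sub-multisets correspond exactly to $\ua_K(\sigma)$- and $\ua_K(\tau)$-coloured tuples (any $\rho$ surviving the filter is a face of a simplex coface, hence a simplex of $K$). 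A further application of Proposition~\ref{prop:refine_multiset} with the induction hypothesis delivers $a^t_\ua(\sigma) = a^t_\ua(\tau)$, and the SWL update rule produces $a^{t+1}_\sigma = a^{t+1}_\tau$.

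Finally, suppose SWL on $\gK(G_1)$ and $\gK(G_2)$ terminates with differing colour histograms. The refinement above, combined with the simplex-identification property (which cleanly splits the CWL histogram on each $X_i$ into a simplex sub-histogram and its complement), forces the histograms of $b^{t+T}$ on $X_1$ and $X_2$ to differ as well, completing the proof. I expect the main obstacle to lie in a fully rigorous treatment of the upper-adjacency filtering step: one must check that the colour-based filter is consistent across both input complexes and that the admitted tuples recover $\ua_K(\sigma)$ with the correct multiplicities, so that the equality of the filtered sub-multisets genuinely follows from the equality of the full multisets. Beyond this, the argument is a direct adaptation of the bookkeeping used in the proofs of Theorems~\ref{thm:sparse_cwl} and~\ref{thm:skeleton}.
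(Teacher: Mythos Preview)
Your proposal is correct and follows essentially the same approach as the paper: both arguments use Proposition~\ref{prop:cwl_simplex_identification} to filter the CWL upper-adjacency multisets down to simplex cofaces, observe that boundaries of simplices coincide in $X$ and $K$, and run an induction with a constant iteration offset to establish the required refinement. The only cosmetic difference is that the paper packages the bookkeeping by extending the SWL colouring to all cells of $X$ via a dummy colour $\mcirc$ for non-simplices (making the refinement statement $b^{t+n+1}\sqsubseteq a^t$ globally well-typed), whereas you split into simplex/non-simplex parts by hand at the histogram stage; both routes are sound, and your self-flagged multiplicity concern is indeed resolved by the observation that a surviving coface $\delta$ is a simplex, hence $\rho\prec\delta$ forces $\rho\in K$ and the filtered tuples recover $\ua_K(\sigma)$ exactly.
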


\begin{proof}

Let $c^t$ be the simplicial colouring performed by SWL. We can extend it into a cellular colouring $a^t$ defined as follows:
\begin{equation}
 a^{X,t}_\sigma := 
    \begin{cases}
      c^{L, t}_\sigma & \mathrm{\ if\ } X_\sigma \subseteq L \\
      \mcirc & \mathrm{otherwise}
    \end{cases}   \nonumber
\end{equation}
where $L$ is the maximal simplicial complex that is a subcomplex of $X$ and $\mcirc$ is a special colour assigned to the cells that are not simplices. Let $h: \gG \to \gX$ be the clique-complex lifting map. Then, it is easy to see that for all graphs $G_1, G_2 \in \gG$, if $c^{h(G_1), t} \neq c^{h(G_2), t}$, then $a^{f(G_1), t} \neq a^{f(G_2), t}$. Let $b^t$ be the CWL colouring map at iteration $t$. We aim to show that $b^{t+n+1} \sqsubseteq a^t$ by using Proposition \ref{prop:cwl_simplex_identification}. Then, by transitivity and using Corollary \ref{prop:refine_multiset}, if $c^{h(G_1), t} \neq c^{h(G_2), t}$, then $b^{f(G_1), t+n+1} \neq b^{f(G_2), t+n+1}$. 

Let $n$ be the maximum dimension of the cells used by the lifting transformation $f$. As usual, the base case holds at initialisation since $a^0$ assigns the same colour to all the cells. Let $\sigma, \tau$ be two cells from the regular cell complexes $X, Y \in f(\gG)$. When $\sigma$ and $\tau$ are not simplices, then $a^{t}_\sigma = a^{t}_\tau = \mcirc$. Suppose $\sigma$ and $\tau$ are both simplices and $b_\sigma^{t+n+2} = b_\tau^{t+n+2}$. Then we know that $b_\sigma^{t+n+1} = b_\tau^{t+n+1}, b_\gB^{t+n+1}(\sigma) = b_\gB^{t+n+1}(\tau)$ and $b_\ua^{t+n+1}(\sigma) = b_\ua^{t+n+1}(\tau)$. Since $\sigma$ and $\tau$ are simplices, their boundary cells are also lower-dimensional simplices, so by induction hypothesis, $a_\gB^{t}(\sigma) = a_\gB^{t}(\tau)$.

Let us consider the equality between the colours involving the upper adjacent cells. By expanding the definition we have: 
\begin{align*}
    \ldblbrace (b_{\delta_1}^{t+n+1}, b_{\delta_2}^{t+n+1}) \mid \delta_1 \in \nup(\sigma), \delta_2 \in \gC(\sigma, \delta_1) \rdblbrace \\ = \ldblbrace (b_{\delta_1}^{t+n+1}, b_{\delta_2}^{t+n+1}) \mid \delta_1 \in \nup(\tau), \delta_2 \in \gC(\tau, \delta_1) \rdblbrace.
\end{align*}
Generally, not all of these adjacencies involve simplices. For instance, a 2-simplex could incident to a general 3-cell. However, by Proposition \ref{prop:cwl_simplex_identification} this equality must still hold if we restrict the multi-sets to the colour of those cells that are simplices: 
\begin{align*}
    \ldblbrace (b_{\delta_1}^{t+n+1}, b_{\delta_2}^{t+n+1}) \mid \delta_1 \in \nup(\sigma), \delta_2 \in \gC(\sigma, \delta_1), \mathrm{\ and\ } \delta_1, \delta_2 \mathrm{\ are\ simplices} \rdblbrace \\ = \ldblbrace (b_{\delta_1}^{t+n+1}, b_{\delta_2}^{t+n+1}) \mid \delta_1 \in \nup(\tau), \delta_2 \in \gC(\tau, \delta_1), \mathrm{\ and\ } \delta_1, \delta_2 \mathrm{\ are\ simplices} \rdblbrace.
\end{align*}
These multi-sets, give exactly the upper adjacencies used by SWL for computing its colouring map $c^t$. Therefore, by the induction hypothesis, $a_\ua^{t}(\sigma) = a_\ua^{t}(\tau)$. Finally, this proves $a_\sigma^{t+1} = a_\tau^{t+1}$. 
\end{proof}

\begin{corollary}
\label{cor:cwl_better_than_swl}
CWL($k_1$-$\mathrm{CL} \cup k_2$-$\mathrm{IC}$) and CWL($k_1$-$\mathrm{CL} \cup k_2$-$\mathrm{C}$) are strictly more powerful than SWL($k_1$-$\mathrm{CL}$) for all $k_2 \geq 5$. 
\end{corollary}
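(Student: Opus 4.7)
The plan is to prove the corollary in two parts: the ``at least as powerful'' direction follows directly from the preceding theorem, and the ``strictly more'' direction is witnessed by an explicit pair of graphs.

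\textbf{At-least-as-powerful direction.} Take $f$ to be either $k_1$-$\mathrm{CL}\cup k_2$-$\mathrm{IC}$ or $k_1$-$\mathrm{CL}\cup k_2$-$\mathrm{C}$. The two hypotheses of the preceding theorem are easy to verify: $f$ is skeleton-preserving because every component only attaches cells of dimension at least two, leaving the $0$- and $1$-skeleton untouched; and the $k_1$-clique complex sits inside $f(G)$ as a subcomplex by construction, since its simplices are cells of $f(G)$ closed under taking faces. The theorem then yields that CWL($f$) is at least as powerful as SWL($k_1$-$\mathrm{CL}$).

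\textbf{Strict separation.} For strictness I would take the non-isomorphic pair $G_1 = C_5 \sqcup C_5$ and $G_2 = C_{10}$. Both graphs are triangle-free, $2$-regular, and vertex- and edge-transitive, with $10$ vertices and $10$ edges each; their $k_1$-clique complexes therefore coincide with the underlying graphs viewed as $1$-dimensional simplicial complexes, with matching cell counts in every dimension. By transitivity, every SWL iteration assigns a single colour to all vertices and a single colour to all edges in each complex, so the stable colour histograms are identical. On the other hand, $G_1$ contains exactly two induced (equivalently, simple) $5$-cycles whereas $G_2$ contains no cycle shorter than $10$. Hence for any $k_2 \geq 5$ the lifting $f$ attaches strictly more $2$-cells to $f(G_1)$ than to $f(G_2)$, the total number of cells already differs, and CWL($f$) separates the two complexes (either directly from the initial histogram, or via Proposition~\ref{prop:cwl_face_id} after one refinement step, since edges that bound a $2$-cell acquire a colour distinct from those that do not).

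\textbf{Main obstacle.} The subtle step is confirming that SWL gains no extra discriminating power beyond plain WL on this pair. Triangle-freeness eliminates all upper adjacencies among edges (no $2$-simplices exist in a triangle-free graph), so edge refinement depends only on boundary vertices; edge- and vertex-transitivity of both $G_1$ and $G_2$ then force a uniform colour on vertices and on edges at every iteration. Once this is in place, the remaining checks --- subcomplex inclusion, skeleton-preservation, and the counting of $5$-cycles in $C_5\sqcup C_5$ versus $C_{10}$ --- are immediate.
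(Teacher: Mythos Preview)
Your proposal is correct and follows the same two-step structure as the paper: invoke the preceding theorem for the ``at least as powerful'' direction, then exhibit an explicit witness pair for strictness. The paper uses the Decalin/Bicyclopentyl pair from Figure~\ref{fig:wl_corollary} (two fused hexagons versus two pentagons joined by a bridge), while you use $C_5\sqcup C_5$ versus $C_{10}$; both pairs are triangle-free so that the clique complex reduces to the bare graph and SWL degenerates to WL, and both pairs are separated by CWL via differing $2$-cell structure. Your choice is arguably cleaner---vertex- and edge-transitivity make the SWL indistinguishability immediate without appealing to a prior WL computation---but the underlying argument is the same.
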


\begin{proof}
The second pair of graphs from Figure \ref{fig:wl_corollary} cannot be distinguished by SWL($k_1$-$\mathrm{CL}$) because it has no cliques greater than two, but it can be distinguished by CWL with the liftings above because of the different number of (induced) cycles. 
\end{proof}

\subsection{CW Network Proof} 

\begin{proof}[\textbf{Proof of Theorem~\ref{thm:CWandCWN}}]
Let $c^t$ denote the colouring of CWL at iteration $t$ and $h^t$ the colouring (i.e. features) produced by a CW-Network as described in Section \ref{sec:CWN_MMP}. Without loss of generality (Theorem \ref{thm:sparse_cwl}), we use only boundary and upper adjacencies for both methods. 

To show CWNs are at most as powerful as CWL, we must show $c^t \sqsubseteq h^t$. Again, we show this by induction. For a CWN with $L$ layers we assume $h^t = h^L$ for all $t > L$.
Let $\sigma, \tau$ be two cells with $c^{t+1}_\sigma = c^{t+1}_\tau$. Then, $c^t_\sigma = c^t_\tau$, $c^t_\gB(\sigma) = c^t_\gB(\tau)$ and $c^t_\ua(\sigma) = c^t_\ua(\tau)$. By the induction hypothesis, $h^t_\sigma = h^t_\tau$, $h^t_\gB(\sigma) = h^t_\gB(\tau)$ and $h^t_\ua(\sigma) = h^t_\ua(\tau)$. 

If $t+1 > L$, then $h^{t+1}_\sigma = h^{t}_\sigma = h^t_\tau = h^{t+1}_\tau$. Otherwise, $h^{t+1}$ is given by Equation \ref{eq:cwn_update} involving the update function $U$, the aggregate function AGG and the message functions $M_\gB, M_\ua$. Given that the inputs passed to these functions are equal for $\sigma$ and $\tau$, $h^{t+1}_\sigma = h^{t+1}_\tau$. 

We now prove that CWNs can be as powerful as CWL. Suppose the aggregation from Equation \ref{eq:cwn_update} is injective and the model is equipped with a number of layers $L$ sufficient to guarantee the convergence of the  colouring. Then, we show that $h^t \sqsubseteq c^t$. 
Let $\sigma, \tau$ be two cells with $h^{t+1}_\sigma = h^{t+1}_\tau$. Then, since the local aggregation is injective $h^t_\sigma = h^t_\tau$, $h^t_\gB(\sigma) = h^t_\gB(\tau)$ and $h^t_\ua(\sigma) = h^t_\ua(\tau)$. By the induction hypothesis, $c^t_\sigma = c^t_\tau$, $c^t_\gB(\sigma) = c^t_\gB(\tau)$ and $c^t_\ua(\sigma) = c^t_\ua(\tau)$. Finally, $c^{t+1}_\sigma = c^{t+1}_\tau$.
\end{proof}

The consequence of this result is that CWNs inherit all the properties of CWL. We summarise these in the following Corollary. 

\begin{corollary}
CWNs have the following properties:
\begin{enumerate}[leftmargin=*]
    \item They are at least as powerful as the WL test when using skeleton-preserving lifting transformations. 
    \item They are strictly more powerful than the WL test when using the lifting maps from Corollary \ref{cor:WL_lifting_maps}.
    \item They are not less powerful than 3-WL when using the lifting transformations from Theorem \ref{thm:lifting3WL}. 
    \item They are at least as powerful as MPSNs using the clique complex lifting \citep{bodnar2021weisfeiler} when using a lifting transformation whose output complexes have the clique complex as a subcomplex. 
    \item They are strictly more powerful than MPSNs when using a transformation attaching cells to cliques and rings/cycles. In particular, CWNs using rings are strictly more powerful than MPSNs using a lifting based on triangles (i.e. 2-simplices), since triangles are rings of size 3. 
\end{enumerate}
\end{corollary}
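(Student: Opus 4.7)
The proof splits naturally into two parts: first establishing that CWL with the combined lifting is at least as powerful as SWL$(k_1$-$\mathrm{CL})$, and then showing strictness by exhibiting an explicit pair of graphs that CWL separates but SWL does not.

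For the inclusion direction, the plan is to invoke the auxiliary theorem stated earlier in this appendix, which asserts that for any skeleton-preserving lifting $f$ such that the clique complex of $G$ is a subcomplex of $f(G)$, CWL$(f)$ is at least as powerful as SWL with the clique complex lifting. I would verify both hypotheses for the combined liftings. Skeleton-preservation is immediate since attaching $2$-cells to cliques, induced cycles, or simple cycles never modifies the underlying set of $0$- and $1$-cells. That the clique complex is a subcomplex of the output follows because $k_1$-$\mathrm{CL}$ attaches cells to every clique of size at most $k_1$, and these cells together with the $1$-skeleton form exactly the $k_1$-truncated clique complex (and trivially include the full clique complex as a subcomplex of $f(G)$). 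Since SWL$(k_1$-$\mathrm{CL})$ operates only on the clique-complex portion, the auxiliary theorem applies directly.

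For strictness, the plan is to exhibit a concrete pair of non-isomorphic graphs separated by CWL but not by SWL$(k_1$-$\mathrm{CL})$. I would use the Decalin versus Bicyclopentyl pair depicted in Figure~\ref{fig:wl_corollary} (right). Both molecules are triangle-free, and more generally contain no cliques of size $\geq 3$, so their $k_1$-clique complexes coincide with the underlying graphs themselves. Consequently SWL$(k_1$-$\mathrm{CL})$ on these inputs reduces to the ordinary WL test, which by construction of the example produces identical colour histograms. However, Decalin consists of two fused $6$-rings and contains no $5$-ring, while Bicyclopentyl contains exactly two $5$-rings; hence the two lifted complexes have different numbers of $2$-cells whose boundaries are $5$-cycles. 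Since the boundary size of a $2$-cell is encoded in its CWL colour after one refinement step (Proposition~\ref{prop:cwl_face_id}), CWL with $k_2 \geq 5$ and either the induced-cycle or simple-cycle lifting yields different colour histograms on the two complexes, so it distinguishes them.

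The main obstacle in this argument is conceptually the inclusion step, since bridging cellular and simplicial colourings requires one to argue that the extra non-simplex $2$-cells introduced by the ring-based lifting do not corrupt the propagation of the simplicial colouring; this is precisely what the auxiliary theorem handles, via the simplex identification lemma (Proposition~\ref{prop:cwl_simplex_identification}), which lets CWL filter the upper-adjacency multisets to the simplex contributions. The strictness step is essentially a verification on a single well-chosen example and the threshold $k_2 \geq 5$ is tight for this witness because the smallest ring on which Decalin and Bicyclopentyl disagree has size $5$.
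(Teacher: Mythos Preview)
Your proposal misidentifies the target statement. The five-part Corollary is about \emph{CWNs} (the neural network models), not about CWL (the colour refinement test). The paper's proof is essentially a one-line inheritance argument: by Theorem~\ref{thm:CWandCWN}, CWNs with injective aggregators and sufficiently many layers are exactly as powerful as CWL, so CWNs automatically inherit every CWL expressivity result established earlier---Theorem~\ref{thm:skeleton} for item~1, Corollary~\ref{cor:WL_lifting_maps} for item~2, Theorem~\ref{thm:lifting3WL} for item~3, the auxiliary subcomplex theorem for item~4, and Corollary~\ref{cor:cwl_better_than_swl} for item~5 (together with the analogous MPSN$\,\equiv\,$SWL equivalence from \citet{bodnar2021weisfeiler}).

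What you have written is instead a detailed proof of Corollary~\ref{cor:cwl_better_than_swl} itself (CWL with the combined lifting strictly dominates SWL with clique lifting). That argument is correct and matches the paper's proof of \emph{that} corollary, but it is only one ingredient feeding into item~5 here. You never invoke Theorem~\ref{thm:CWandCWN}, which is the essential bridge transferring CWL results to CWNs, and you say nothing about items~1--4. As a proof of the stated five-part Corollary, the proposal is therefore incomplete: the actual content of this Corollary is the inheritance step, not a re-derivation of the underlying CWL comparison.
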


The latter point regarding triangles is important because \citet{bodnar2021weisfeiler} do not use simplices of dimension higher than two in practice.  

\subsection{Equations for Other Adjacencies}

For completeness, we include in this section the equations for the co-boundary and lower adjacent messages.
\begin{equation*}
  m_{\gC}^{t+1}(\sigma) = \text{AGG}_{\tau \in \gC(\sigma)}\Big(M_{\gC}\big(h_{\sigma}^{t}, h_{\tau}^{t}\big)\Big), \quad
        m_{\downarrow}^{t+1}(\sigma) = \text{AGG}_{\tau \in \ndown(\sigma), \delta \in \gB(\sigma, \tau)}\Big(M_{\da}\big(h_{\sigma}^{t}, h_{\tau}^{t}, h_{\delta}^t\big)\Big).\nonumber
\end{equation*}
Together with the adjacencies described in the main text, the update rule takes the form  
\begin{equation*}
    h_{\sigma}^{t+1} = U\Big(h_{\sigma}^{t}, m_{\gB}^{t}(\sigma), m_{\gC}^{t}(\sigma), m_{\downarrow}^{t+1}(\sigma), m_{\uparrow}^{t+1}(\sigma) \Big).
\end{equation*}
As mentioned before, even though these adjacencies are redundant from a colour refinement perspective when the others are used, they might still be employed in other combinations that preserve the expressive power of the test. Additionally, for certain applications, they might still encode important inductive biases. 

\section{Computational Analysis}
\label{app:complexity}

Let $X$ be a $d$-dimensional regular cell complex. For an arbitrary $p$-cell $\sigma$ with boundary size $k$, the  number of $\uparrow$-messages between the $(p-1)$-cells on its boundary is $2*\binom{k}{2}$ and the number of $\gB$-messages it receives is $k$. Let $B_p$ be the maximum boundary size of a $p$-cell in $X$ and $S_p$ the number of $p$-cells. The computational complexity of our message passing scheme is thus $\gO \big( \sum_{p=1}^{d} B_p S_p + 2*\binom{B_p}{2} S_{p} \big)$. For instance, consider the skeleton-preserving lifting based on induced cycles. There, the dimension of the complex is $d = 2$ and we have $B_0 = 0, B_1 = 2$, and $B_2$ equals the size of the maximum induced cycle considered. For all practical purposes, we can consider $d$ and $B_p$ as fixed constants. Then the complexity can be rewritten as $\Theta \big( \sum_{p=1}^{d} S_p \big)$. This is optimal because the complexity is linear in the size of the cell complex and a linear time is required to read the cell complex.

\begin{table}[h!]
    \centering
    \caption{Wall-clock training and evaluation times on ZINC; mean, std over $10$ runs (seconds).}
    \label{tab:zinc_training_time}
    \vspace{1mm}
    \begin{tabular}{l|cccc}
    \toprule
    Model & 
        Training (Epoch) & 
        Eval (Train) &
        Eval (Val) &
        Eval (Test) \\
    \midrule
    GIN &
        4.582 $\pm$ 0.012 &
        3.138 $\pm$ 0.071 & 
        0.310 $\pm$ 0.002 &
        0.309 $\pm$ 0.001 \\
    GIN-small &
        3.737 $\pm$ 0.012 &
        3.070 $\pm$ 0.058 &
        0.304 $\pm$ 0.002 &
        0.303 $\pm$ 0.003 \\
    \midrule
    CIN &
        10.828 $\pm$ 0.059 &
        4.679 $\pm$ 0.051 &
        0.470 $\pm$ 0.002 &
        0.471 $\pm$ 0.003 \\
    CIN-small &
        7.082 $\pm$ 0.041 &
        3.682 $\pm$ 0.056 &
        0.365 $\pm$ 0.002 &
        0.373 $\pm$ 0.030 \\
    \bottomrule
    \end{tabular}
\end{table}

\begin{table}[h!]
    \centering
    \caption{Wall-clock training and evaluation times on ZINC-FULL; mean, std over $10$ runs (seconds).}
    \label{tab:zincfull_training_time}
    \vspace{1mm}
    \begin{tabular}{l|cccc}
    \toprule
    Model & 
        Training (Epoch) & 
        Eval (Train) &
        Eval (Val) &
        Eval (Test) \\
    \midrule
    GIN &
        106.268 $\pm$ 1.991 &
        73.051  $\pm$ 1.742 &
        7.874  $\pm$ 0.174 &
        1.618 $\pm$ 0.039 \\
    GIN-small &
        87.581 $\pm$ 2.343 &
        71.160 $\pm$ 1.865 &
        7.714 $\pm$ 0.206 &
        1.583 $\pm$ 0.037 \\
    \midrule
    CIN &
        249.334 $\pm$ 17.927 &
        107.510 $\pm$ 1.637 &
        11.759 $\pm$ 0.642 &
        2.398 $\pm$ 0.028 \\
    CIN-small &
        163.282 $\pm$ 8.016 &
        85.342 $\pm$ 2.637 &
        9.251 $\pm$ 0.431 &
        1.876 $\pm$ 0.044 \\
    \bottomrule
    \end{tabular}
\end{table}

In practice, we observed the empirical training runtimes to be contained, even on the largest benchmarks. We performed timing analyses on ZINC and ZINC-FULL, measuring the time required to complete one training epoch and a full performance evaluation on train, validation and test sets. We report the runtimes in Tables~\ref{tab:zinc_training_time} and~\ref{tab:zincfull_training_time} the runtimes measured for our best performing CIN and CIN-small models and by GIN baselines with, approximately, the same number of parameters. We observe that the evaluation runtimes are relatively comparable to those of GIN models and that the difference decreases significantly at inference time (i.e. no backprop). The training runtimes are significantly reduced on CIN-small architectures, which always perform on-par or even better than state-of-the-art baselines, regardless of the imposed parameter budget (see Table~\ref{tab:mol_dataset}). These experiments where run over an NVIDIA\textsuperscript{\textregistered} Tesla V100 GPU device on an Amazon Web Services (AWS) Elastic Cloud (EC) 2 \texttt{p3.16xlarge} instance.

Other than the computational complexity of message passing we need to consider the (one-off) complexity pertaining the graph lifting procedures. Lifting procedures that are more likely to find immediate practical applications involve clique, cycle and induced cycle listing. For cliques, we refer readers to \citet{bodnar2021weisfeiler}, where the authors report theoretical results regarding clique-listing complexity and the practical impact of employing specialised topological data analysis libraries. 

As for cycle-based liftings, specialised cycle-listing algorithms exist. The algorithm in \citet{birmele2013optimal} is able to list all simple cycles in a graph in $ \gO(m + \sum_{c \in C(G)} |c|) $, where $m$ is the number of edges, $C(G)$ is the set of simple cycles in graph $G$ and $|c|$ is the size of the cycle. As for \emph{induced} cycles, the algorithm presented in \citet{ferreira2014amortized} has a listing time of $\tilde{\gO}(m + n C)$, with $n$ and $C$ being the number of nodes and induced cycles, respectively. In certain types of graphs, a better complexity can be obtained. In the case of planar graphs, \citet{chiba1985arboricity} show linear time complexity to list triangles and quadratic complexity for 4-rings. This is very important because almost all molecules are planar in a graph-theoretic sense \citep{SIMMONS1981287} as a direct consequence of the chemical implications of Kuratowski's theorem \citep{kuratowski1930probleme}. However, we are not aware of any improved bounds for finding general induced cycles in planar graphs. Finally, we remind the reader that molecular rings can also be listed from the junction tree representation~\citep{jin2018junction, Fey2020_himp}, obtained by specialised molecular libraries such as RDKit~\citep{Landrum2016RDKit2016_09_4}.

\begin{table}[h!]
    \centering
    \caption{Wall-clock lifting times, mean and std over $5$ runs (seconds).}
    \label{tab:wallclock_lift}
    \vspace{1mm}
    \resizebox{\columnwidth}{!}{
    \begin{tabular}{ l|cccccc}
    \toprule
    Dataset $\downarrow$ / Processes $\rightarrow$ & 
        Seq. & 
        $2$ &
        $4$ &
        $8$ &
        $16$ &
        $32$ \\
    \midrule
    ZINC (12k) &
        320.27 $\pm$ 0.54 &
        169.95 $\pm$ 0.32 & 
        84.90 $\pm$ 0.21 &
        43.38 $\pm$ 0.07 &
        23.17 $\pm$ 0.68 &
        18.59 $\pm$ 0.68 \\
    Mol-HIV (41k) &
        1178.98 $\pm$ 3.90 &
        635.58 $\pm$ 0.83 &
        319.01 $\pm$ 0.40 &
        164.26 $\pm$ 0.52 &
        86.92 $\pm$ 0.77 &
        60.62 $\pm$ 2.05 \\
    ZINC-FULL (250k) &
        6805.35 $\pm$ 16.50 &
        3549.16 $\pm$ 7.73 &
        1782.41 $\pm$ 3.84 &
        918.38 $\pm$ 3.46 &
        492.77 $\pm$ 6.13 &
        383.92 $\pm$ 3.30 \\
    \bottomrule
    \end{tabular}
    }
\end{table}

In our experiments, we implemented a lifting procedure based on the \emph{generic} substructure matching algorithm exposed by the graph-tool Python library, which internally employs VF2~\citep{cordella2004asub} to perform subgraph isomorphism. Noticing that the lifting procedure is embarrassingly parallel w.r.t. the independent graphs in a dataset, we easily parallelised the procedure via Python's Joblib library. On molecular benchmarks we observed the effective time required by preprocessing routines to always be modest compared to the training times. In Table~\ref{tab:wallclock_lift} we report the wall clock runtimes, averaged over $5$ runs, to lift all the graphs in the largest datasets amongst our benchmarks: ZINC, Mol-HIV and ZINC-FULL. The analysis has been conducted considering rings up to size $18$ and by varying the number of parallel processing jobs on a server with an Intel\textsuperscript{\textregistered} Xeon E5-2686 v4 processor with $64$ vCPUs. It is possible to observe that the empirical lifting runtime scales linearly with the number of jobs in the range $[1, 16]$, and that such a simple parallelisation scheme dramatically reduces the preprocessing time on all datasets. When employing $32$ parallel jobs, less than $19$ seconds are required to preproceess the whole ZINC dataset, only $1$ minute is required for Mol-HIV, and we needed slightly more than $6$ minutes to lift all the $250$k graphs in ZINC-FULL. We remark once more that these experiments have been conducted with a \emph{generic} subgraph matching algorithm, and that even more parsimonious computation would be possible by using optimised ring-listing routines.

\section{Symmetries}
\label{app:symmetries}

In line with a recent effort in Geometric Deep Learning to understand different models through the lens of symmetry \citep{Bronstein_etal2017}, we aim here to give a description of the underlying equivariance properties of CW Networks.  

First, let us define the following matrix representation of the boundary relation from Definition \ref{def:boundary_rel}.

\begin{definition}
Let $X$ be a regular cell complex with $S_k$ denoting the number of cells in dimension $k$. The $k$-th unsigned boundary matrix $\mB_k \in \sR^{S_{k-1} \times S_{k}}$ of $X$ is given by $B_k(i,j) = 1$ if $\sigma_i \prec \sigma_j$ and $0$, otherwise. 
\end{definition}

Let $X$ be a regular cell complex of dimension $n$ with boundary matrices $\rmB = (\mB_1, \ldots, \mB_n)$ and feature matrices $\rmX = (\mX_0, \mX_1, \ldots, \mX_n)$ for the cells of different dimensions. Additionally, consider a sequence of permutation matrices $\rmP = (\mP_0, \ldots, \mP_n)$. Denote by $\rmP\rmX = (\mP_0\mX_0, \ldots, \mP_n\mX_n)$ and $\rmP\rmB\rmP^T = (\mP_0\mB_1\mP_1^T, \ldots, \mP_{n-1}\mB_n\mP_n^T)$.

\begin{definition}
\label{def:equiv}
A function $f$ mapping $(\rmX, \rmB) \mapsto \rmX' = (\mX_0', \ldots, \mX_n')$ with the property that $\rmP f(\rmX, \rmB) = f(\rmP\rmX, \rmP\rmB\rmP^T)$ for any $\rmP$ is called cell permutation equivariant. 
\end{definition}

\begin{proof}[\textbf{Proof of Theorem~\ref{thm:CWequivariant}}]
Definition \ref{def:equiv} is similar to the (simplex) permutation equivariance definition from \citet{bodnar2021weisfeiler}, with the subtle difference that the boundary matrices now have a more flexible structure in the case of cell complexes. The high-level idea is to see that all the adjacency matrices used by CWNs (i.e. $\mB_k, \mB_{k+1}, \mB_k^\top\mB_k, \mB_{k+1}\mB_{k+1}^\top$) are permuted accordingly by the permutation matrices in $\rmP$. Therefore, CWNs layers computes the same function up to a permutation of the cells. The proof follows a similar logic to to the one in \citet{bodnar2021weisfeiler} for simplicial networks, and we refer the reader to their work for a detailed proof.   
\end{proof}

It is common in algebraic topology and differential geometry to equip the incidence relation $\sigma \prec \tau$ with additional structure that makes it a signed incidence relation.  

\begin{definition}[\citet{HGh19}]
\label{def:signed_inc}
A signed incidence relation on $P_X$ is a map $[\boldsymbol{\cdot}:\boldsymbol{\cdot}]\colon P_X\times P_X\to \{0,\pm 1\}$ with the properties:
\begin{enumerate}[leftmargin=*]
    \item If $[\sigma : \tau ] \neq 0$, then $\sigma \prec \tau$. 
    \item For any $\sigma \leq \tau$, $\sum_{\gamma \in P_x} [\sigma : \gamma][\gamma : \tau] = 0.$
\end{enumerate}
\end{definition}

This signed incidence relation can be be encoded by the signed incidence (boundary) matrices of $X$. We define these below:

\begin{definition}
Let $X$ be a regular cell complex with a signed incidence relation $[\boldsymbol{\cdot}:\boldsymbol{\cdot}]$. Let $S_k$ denote the number of cells in dimension $k$. The $k$-th signed boundary matrix $\mB_k \in \sR^{S_{k-1} \times S_{k}}$ of $X$ is given by $B_k(i,j) = [\sigma_i : \sigma_j]$. 
\end{definition}

The difference with respect to the unsigned boundary matrices is that the non-zero values of the matrix can be $\pm1$, not just $1$. This can be used to define a notion of orientation equivariance for CW Networks. This ensures that when changing the orientation of the cell complex $X$ (i.e. changing $[\boldsymbol{\cdot}:\boldsymbol{\cdot}]$) one computes the same function up to that change in orientation. 

Let $X$ be a regular cell complex of dimension $n$ described by the \emph{signed} boundary matrices $\rmB = (\mB_1, \ldots, \mB_n)$ and feature matrices $\rmX = (\mX_0, \mX_1, \ldots, \mX_n)$ for the cells of different dimensions. Additionally, consider a sequence of diagonal matrices $\rmT = (\mT_0, \ldots, \mT_n)$ with values in $\pm1$. Additionally, let $\mT_0 = \mI$. Denote by $\rmT\rmX = (\mT_0\mX_0, \ldots, \mT_n\mX_n)$ and $\rmT\rmB\rmT = (\mT_0\mB_1\mT_1, \ldots, \mT_{n-1}\mB_n\mT_n)$.

\begin{definition}
A function $f$ mapping $(\rmX, \rmB) \mapsto \rmX' = (\mX_0', \ldots, \mX_n')$ with the property that $\rmT f(\rmX, \rmB) = f(\rmT\rmX, \rmT\rmB\rmT)$ for any $\rmT$ is called  orientation equivariant. 
\end{definition}

Making CWNs orientation equivariant requires imposing additional constraints on the layers of the model. This proceeds similarly to MPSNs \citep{bodnar2021weisfeiler}. Since applications involving oriented simplicial complexes are out of the scope of this work, we refer the reader to \citet{bodnar2021weisfeiler} for an intuition of how this can be extended to cell complexes. 

\section{Sheaves, Laplacians and Convolutions}\label{app:convs}

It is useful on cell complexes to derive a Laplacian operator based on cellular sheaves \citep{HGh19}, since many interesting Laplacians, such as the (normalised) graph Laplacian \citep{ChGr1997}, the Hodge Laplacian \citep{schaub2020random} and the connection Laplacian \citep{SW12} can be obtained as particular cases. Intuitively, a cellular sheaf is a construction that assigns a vector space to each cell in the complex and a (linear) map for each face relation in the complex $\sigma \leq \tau$. Additionally, these linear maps must satisfy some compositionality constraints imposed by the structure of $P_X$. 

\subsection{Sheaf Laplacian}\label{A:sheaf laplacian}

\begin{definition}
Let $(X,P_X)$ be a regular cell complex, and denote by $\Hilb_K$ the class of Hilbert spaces over a field $K$. A \define{weighted cellular sheaf} $\FF$ is given by the assignment 
\begin{alignat}{2}
\FF\colon P_X&\notag\to \Hilb_K\\
\sigma&\notag\mapsto \FF(\sigma)
\end{alignat}
together with a bounded linear map $\FF_{\sigma\leq \tau}\colon \FF(\sigma)\to \FF(\tau)$ for any $\sigma\leq \tau$.

This data satisfies that $\FF_{\sigma\leq\sigma}=id$ for all $\sigma\in P_X$ and $\FF_{\sigma\leq  \omega}=\FF_{\tau\leq \omega}\circ \FF_{\sigma\leq \tau}$ whenever $\sigma\leq\tau\leq\omega$.
\end{definition}

Given a weighted cellular sheaf $\FF$, we define a chain complex as follows. For each $k=0,1,\dots$ we set 
\[
C^k(X;\FF)=\bigoplus_{dim(\sigma)=k}\FF(\sigma)\, .
\]
Further, we define coboundary maps $\delta^k\colon C^k(X;\FF)\to C^{k+1}(X;\FF)$ by 
\[
\delta^k(x)_\tau=\sum_{dim(\sigma)=k} [\sigma:\tau] \FF_{\sigma\leq \tau}(x_\sigma),
\]
where $[\boldsymbol{\cdot}:\boldsymbol{\cdot}]\colon P_X\times P_X\to \{0,\pm 1\}$ is a signed incidence relation (see Definition \ref{def:signed_inc}).

Given Hilbert spaces $V$ and $W$ and  a bounded linear map $T\colon V\to W$, the adjoint of $T$ is the unique bounded linear map $T^\star\colon W\to V$ satisfying that for all $v\in V$ and all $w\in W$:
\[
\langle w,Tv \rangle = \langle T^\star w,v \rangle \, .
\]

\begin{definition}
Let $C^\bullet=C^0\to C^1\to \dots $ be a chain complex of Hilbert spaces. The \define{Hodge Laplacian} is the graded linear map defined in degree $k$ as $\Delta^k\colon C^k \notag\to C^k$ with $\Delta^k=(\delta^k)^\star \delta^k+\delta^{k-1}(\delta^{k-1})^\star$. 
When $C^\bullet=C^0\to C^1\to \dots $ is the complex of cochains of a weighted cellular sheaf $\FF$, the Hodge Laplacian is called the \define{sheaf Laplacian} of $X$.
\end{definition}

In particular, the Hodge Laplacian of a cell complex can be obtained by considering the \emph{constant weighted cellular sheaf} with a standard inner product. That is the cellular sheaf where $\gF(\sigma) = \sR$ and the restriction maps $\gF_{\sigma \leq \tau} = \mathrm{id}$. A normalised version of it can also be obtained by carefully adjusting the inner products associated with each $\gF(\sigma)$. This normalisation is always possible for finite cell complexes (see \citet{HGh19} for details). This is very useful because finding normalised versions of Hodge Laplacians is not trivial and even on simplicial complexes \citep{schaub2020random}, the process of constructing one can be quite involved.

\subsection{Convolutional Operators}

One can use the general sheaf Laplacian to define linear, local diffusion operators which, in the GNN literature, are broadly addressed as `convolutional'. Diffusion operators built from the standard graph Laplacian have been employed in several graph neural network architectures \citep{defferrard2016convolutional, kipf2017graph}. Recent works~\citep{ebli2020simplicial, bunch2020simplicial} have introduced convolutional operators on SCs by employing the Hodge Laplacian \citep{schaub2020random}, interpreted as a generalisation of the graph Laplacian. As for cell complexes, here we focus, for simplicity, on the case of a constant sheaf with a standard inner product in $\sR^n$. Then, the matrix representations of $\delta^k$ and $(\delta^k)^*$ are the signed incidence matrices $\mB_k^T$ and $\mB_k$, respectively. Therefore, the Hodge Laplacian can be written in matrix form as  
$$\mL_k = \mB_k^T \mB_k + \mB_{k+1} \mB_{k+1}^T.$$
A convenient way to define a convolutional operator on cochains is by designing a learnable filter parameterised as a polynomial of the Hodge Laplacian. This approach has been already adopted on graphs using the standard graph Laplacian~\citep{defferrard2016convolutional} or more general sheaf Laplacians~\citep{hansen2020sheaf}, and also on simplicial complexes~\citep{ebli2020simplicial}. The advantage of this approach is that of retaining a connection with spectral constructions~\citep{defferrard2016convolutional, ebli2020simplicial} while not requiring any explicit diagonalisation of the operator itself. A polynomial convolutional filter of this kind, when applied to the $p$-cells of a $d$-cell complex, would take the form
\begin{equation}\label{eq:cell_conv}
  H^{t+1} = \psi 
  \Bigl(\sum_{r=0}^{R} \mL_p^r H^{t} W^{t+1}_{r}\Bigr)
  = \psi \Bigl( H^{t} W^{t+1}_{0} + \sum_{r=1}^{R} \mL_p^r H^{t} W^{t+1}_{r} \Bigr).
\end{equation}
\noindent where $H^{t}$ is a matrix gathering $p$-cell representations at layer $t$, $W^{t+1}_{r}$ are learnable parameters, and $\psi$ summarises the application of a bias term and a non-linearity.

\begin{proof}[\textbf{Proof of Theorem \ref{thm:CWequivariant}}] While the structure of the boundary matrices is more flexible in a cell complex than in a simplicial complex, algebraically, the proof is very similar to the proof showing MPSNs generalise simplicial convolutions in \citet{bodnar2021weisfeiler}. We offer here a high-level view of the proof and refer the reader to Appendix C of their paper for a detailed version. 

For a generic $p$-cell $\sigma$, and $r>0$, the application of the $r$-power of the Hodge Laplacian effectively induces an information flow from a generalised notion of $r$-upper and $r$-lower adjacent $p$-cells, i.e. $p$-cells $\tau$ such that there exists a sequence of upper- (respectively, lower-) adjacent $p$-cells $[\gamma_0, \gamma_1, \mathellipsis, \gamma_r]$ such that $\gamma_0 = \sigma, \gamma_r = \tau$.

Therefore, the convolution described above is easily interpreted in terms of a cellular message passing scheme which only exchanges $\uparrow$- and $\downarrow$-messages. Intuitively, the upper- and lower- message functions would share their parameters $W^{t+1}_{r}$ and compute messages by linearly projecting the representations of upper- and lower-adjacent cells (ignoring any information in shared (co)boudaries). Such messages would then be aggregated by summation into an overall message, taken as input by an update function parameterised by $\psi$ and $W^{t+1}_{0}$. A formal derivation of how the equation~\eqref{eq:cell_conv} is rewritten in terms of cellular message passing would closely follow the one provided in~\citet{bodnar2021weisfeiler} for SCs, and we therefore refer readers to Section C of such work. \end{proof}

Normalised versions of the aforementioned Hodge Laplacian can be used to design a model in the spirit of the popular Graph Convolutional Network of~\citet{kipf2017graph}. To this aim, one could resort to normalised sheaves as suggested in~\cite{HGh19}. Additionally, one could explicitly make use of the (co)boundary operators defined in Section~\ref{app:symmetries} to let information flow from lower- and higher- dimensional cells contained in cell (co)boundaries, effectively extending the Simplicial Convolutional Networks recently introduced in~\citet{bunch2020simplicial}. We defer these research directions to future developments of this work.

\section{Experimental details and additional results}\label{app:results}

\subsection{Used Code Assets} 

The model has been implemented in PyTorch~\citep{NEURIPS2019_9015} and by building on top of the PyTorch Geometric library~\citep{fey2019fast}. Lifting operations use the graph-tool\footnote{\url{https://graph-tool.skewed.de/}} Python library and are parallelised via Joblib\footnote{\url{https://joblib.readthedocs.io/en/latest/}}.
PyTorch, NumPy, SciPy and Joblib are made available under the BSD license, Matplotlib under the PSF license, graph-tool under the GNU LGPL v3 license. PyTorch Geometric is made available under the MIT license.

\subsection{Used Computer Resources}
All experiments were run on NVIDIA\textsuperscript{\textregistered} GPUs. Experiments on \textbf{SR}, \textbf{Mol-HIV} and molecular \textbf{TUDatasets} were run on Tesla V100 GPUs with 5,120 CUDA cores and 16GB GPU memory on a \texttt{p3.16xlarge} Amazon Web Services (AWS) Elastic Cloud (EC) 2 instance. Experiments on the social \textbf{TUDatasets} were run on the same GPU devices but with 32GB HBM2 memory mounted on an HPC cluster. All remaining experiments, that is \textbf{CSL}, \textbf{RingTransfer} and \textbf{ZINC}, were run on a machine with TITAN Xp GPUs with 12GB GPU memory and an Intel\textsuperscript{\textregistered} Xeon\textsuperscript{\textregistered} CPU E5-2630 v4 @ 2.20GHz CPU.

\subsection{Model}

In all cases, we apply our model to the $2$-dimensional cell complexes obtained by ring-lifting the original graphs, i.e. we consider nodes and edges as $0$- and $1$-cells, and each induced cycle of size up to $k$ as a $2$-cell. $0$-cell are always endowed with the original node features or learnt node embeddings, if the benchmark prescribes so. The way higher dimensional cells are assigned features depend on the specific benchmark.

Throughout all experiments, we employ cellular message passing layers which update the representation of $p$-cell $\sigma$ as follows:
\begin{align}\label{eq:cin_mp_equations}
    h_{\sigma}^{t+1} &= 
        \mathrm{MLP}^{t}_{U,p} \Big( 
            \mathrm{MLP}^{t}_{\gB,p}\big( (1+\eps_{\gB}) h_{\sigma}^{t} + \sum_{\tau \in \gB(\sigma)} h_{\tau}^{t} \big) \parallel \nonumber \\
    &\quad \qquad \mathrm{MLP}^{t}_{\uparrow,p}\big( (1+\eps_{\uparrow}) h_{\sigma}^{t} + \sum_{\tau \in \gN_{\uparrow}(\sigma), \delta \in \gC(\sigma, \tau)} \mathrm{MLP}^{t}_{M,p} \big( h_{\tau}^{t} \parallel h_{\delta}^t \big) \big) \Big)
\end{align}

Here, $\parallel$ indicates concatenation, $\mathrm{MLP}^{t}_{\gB,p}, \mathrm{MLP}^{t}_{\uparrow,p}$ are 2-Layer Perceptrons and $\mathrm{MLP}^{t}_{U,p}$, $\mathrm{MLP}^{t}_{M,p}$ consist of a dense layer followed by a non-linearity. We neglect messages from cofaces and down-adjacent cells consistently with Theorem~\ref{thm:sparse_cwl}. We name an architecture which stacks $L$ layers of this form as `Cell Isomorphism Network' (CIN). Readout operations are performed as follows. First, for $p \in {0,1,2}$, we compute the joint representation $h_{p}$ of the cells at dimension $p$ by applying a mean or sum readout operation. Then, for complex $\gK$, we compute an overall representation $h_{\gK} = \sum_{p = 0, 1, 2} \mathrm{MLP}_{R,p} \big( h_{p}\big)$, where each $\mathrm{MLP}_{R,p}$ is parameterised as a single dense layer followed by a non-linearity. Complex-wise predictions are obtained by a final dense layer preceded by dropout. All MLP layers internally apply Batch Normalization~\citep{BN} and ReLU activations, unless otherwise specified. All training procedures are performed with the Adam optimiser~\cite{kingma2014adam}.

\subsection{Additional experimental details}

\paragraph{CSL} Each of the $150$ $4$-regular graphs in the CSL dataset comprises $N=48$ nodes and is characterized by \emph{skip number} parameter $C \in \gC = \{2, 3, 4, 5, 6, 9, 11, 12, 13, 16\}$. Parameters $N$ and $C$ determine the isomorphism class $\gG_{N,C}$ of each graph, which we seek to predict. The number of possible classes is $|\gC| = 10$. We employ the same stratified dataset folds in~\citet{dwivedi2020benchmarkgnns}. Consistently with the adopted reference procedure, $0$-cells share the same learnt embedding, while $1$- and $2$-cells are endowed with the sum of the embeddings of the included $0$-cells. As for the optimisation procedure, we set the batch size to $12$ and the initial learning rate of 5\tte-4. which is halved whenever the validation performance does not improve after a patience value of $20$. The training is early stopped as soon as it falls below 1\tte-6, at which step we measure the model test accuracy. The size of hidden layers in our model is set to $160$ and we stack $3$ cellular message passing layers. In this benchmark, we replace Batch Normalisation with Layer Normalization~\cite{ba2016layer}, as the former wsa observed to produce instabilities in the optimisation procedure. At each dimension, cell embeddings are readout via averaging.

\paragraph{SR} These experiments are run in double floating point precision and with untrained models. We initialise the cell complexes associated with SR graph by populating $0$-cells with constant, scalar, unitary signal, and $1$- and $2$-dimensional cells with the sum of the contained $0$-cells. Complexes are embedded in a $16$-dimensional space and, coherently with~\citet{bodnar2021weisfeiler} and~\citet{bouritsas2020improving}, if the $L_2$-distance between the embeddings of two complexes is larger than $\eps = 0.01$, we deem the corresponding graphs to be non-isomorphic. 
We confirmed the validity of the chosen threshold $\eps$ by numerically verifying that, under the described experimental setting, each SR graph in our datasets is deemed isomorphic w.r.t.\ a counterpart obtained by randomly permuting its nodes.
We run a CIN model with $3$ cellular message passing layers, whose hidden layers comprise $16$ units. At each dimension, cell embeddings are readout via summation. As the number of induced cycles of a certain size may be enough to tell apart non-isomorphic SR graphs (see Table~\ref{tab:sr_ring_counts}), an MLP with sum readouts represents a strong baseline, which we additionally run. Such a model applies non-linear dense layers at each cell dimension, and then performs readout operations as in CIN. We set the size of hidden layers to $256$, while the final complex embeddings are embedded in a $16$-dimensional space as in our model. Both approaches are equipped with ELU nonlinearities~\citep{ELU}. 

\paragraph{RingTransfer} This benchmark dataset comprises $5,000$ training graphs. Each graph is randomly associated with one of the $5$ independent labels, which are also assigned as node features to \textbf{source} nodes. Labels are unifomly represented. On this benchmark we run a CIN model with $3$ stacked message passing layers, independently on the ring size. The hidden size of the layers is set to $64$ and we do not apply Batch Normalisation. Differently than in the other benchmarks, we do not need to perform readout operations to compute complex-wise embeddings; instead, we simply take the representation of the $0$-cell corresponding to node \textbf{target} at the last layer of the architecture and use it to predict the label of \textbf{source}. GIN models have always $\lfloor \frac{k}{2} \rfloor$ standard message passing layers with hidden size $64$. The models are trained with an initial learning rate of $10^{-3}$, decayed by a factor of $0.5$ and a patience of $5$ epochs. The training is stopped when the learning rate drops below $10^{-5}$.

\paragraph{TUD} Amongst the datasets from this benchmarking suite: the task in \textbf{MUTAG} is to recognise mutagenic molecular compounds for potentially marketable drug \citep{kazius2005derivation,riesen2008iam}; the one in \textbf{PTC} is to recognise the chemical compounds according to carcinogenicity on rodents \citep{kriege2012subgraph,helma2001predictive}; \textbf{PROTEINS} is about to categorising proteins into enzyme and non-enzyme structures \citep{dobson2003distinguishing,borgwardt2005protein}; \textbf{NCI1} and \textbf{NCI109} deal with identifying chemical compounds against the activity of non-small lung cancer and ovarian cancer cells, respectively \citep{wale2008comparison}; \textbf{REDDIT-BINARY} or \textbf{RDT-B} is a social network dataset where the task is to predict whether a graph belongs to a question-answer-based community or a discussion-based community.
On these datasets, we followed the approach in \citet{GIN}, which prescribes to run a $10$-fold cross-validation procedure and report the maximum of the average validation accuracy across folds. Consistently with such work, we train our model starting from an initial learning rate which is decayed after a fixed amount of epochs and we apply cell-readout operations on the multiscale representations obtained by a Jumping Knowledge scheme~\citep{JK} by performing averaging or summation depending on the dataset, still in accordance with \citet{GIN}. 
We ran a grid-search to tune batch size, hidden dimension, dropout rate, initial learning rate along with its decay steps and strengths, feature initialisation strategy of higher-dimensional cells (mean vs. sum), inclusion of coboundary features in $\uparrow$-messages, number of layers and the dropout position (immediately after readout on cells (``cell read.'') or the final readout on the complex (``comp read.'')). We report the hyperparameter configurations in Table~\ref{tab:tu_hyper}. We finally report that we did not employ Batch Normalization layers in \textbf{RDT-B} since they were observed to produce severe instabilities in the training procedure.

\begin{table}[t]
    \centering
    \caption{Hyperparameter configurations on TUDatasets.}
    \label{tab:tu_hyper}
    \vspace{1mm}
    \resizebox{\columnwidth}{!}{
    \begin{tabular}{l|ccccc|ccc}
        \toprule
        Hyperparameter &
            MUTAG &
            PTC &
            PROTEINS &
            NCI1 &
            NCI109 &
            IMDB-B &
            IMDB-M &
            RDT-B \\
        \midrule
        Batch Size &
            32 &
            32 &
            128 &
            32 &
            32 &
            128 &
            128 &
            32 \\
        Initial LR &
            0.01 &
            0.01 &
            0.01 &
            0.001 &
            0.001 &
            0.001 &
            0.0005 &
            0.001 \\
        LR Dec. Steps &
            20 &
            50 &
            20 &
            20 &
            20 &
            50 &
            20 &
            50 \\
        LR Dec. Strength &
            0.5 &
            0.9 &
            0.5 &
            0.5 &
            0.5 &
            0.5 &
            0.5 &
            0.5 \\
        Hidden Dim. &
            64 &
            16 &
            32 &
            16 &
            64 &
            16 &
            64 &
            64 \\
        Drop. Rate &
            0.5 &
            0.0 &
            0.0 &
            0.5 &
            0.0 &
            0.0 &
            0.5 &
            0.0 \\
        Drop. Pos. &
            cell read. &
            comp read. &
            comp read. &
            comp read. &
            comp read. &
            comp read. &
            comp read. &
            comp read. \\
        Initialisation &
            sum &
            mean &
            mean &
            mean &
            mean &
            mean &
            mean &
            mean \\
        Cobound. in $\uparrow$-msg &
            N &
            N &
            Y &
            Y &
            Y &
            N &
            N &
            N \\
        Num. Layers &
            4 &
            4 &
            3 &
            4 &
            4 &
            4 &
            4 &
            4 \\
        \bottomrule
    \end{tabular}
    }
\end{table}

\paragraph{ZINC} The ZINC benchmarks dataset have been constructed by the ZINC database provided by the Irwin and Shoichet Laboratories in the Department of Pharmaceutical Chemistry at the University of California, San Francisco (UCSF)~\citep{ZINCdataset}. Each graph represents a molecule, with node features indicating the atom type and edge features the type of chemical bond between two atoms. Graph targets correspond to the penalised water-octanol partition coefficient -- logP~\cite{gomez2018automatic}. In these experiments, rings up to size $k=18$ are mapped to $2$-cells, and are assigned feature values as the sum of the learnable atom embeddings for the included $0$-cells (nodes). $1$-cells are assigned learnable bond embeddings if edge-features are considered, otherwise we apply the same policy employed for $2$-cells. We employ the same predefined training, validation and test splits as in~\citet{dwivedi2020benchmarkgnns}, and train our model by minimising the the Mean Absolute Error (MAE) loss on the train targets. As prescribed by the benchmark, the optimisation procedure employs a batch size of $128$ and a dynamic learning rate which starts from $10^{-3}$ and is halved whenever the validation loss does not improve after a patience value we set to $20$. The training is early stopped as soon as it falls below $10^{-5}$. We repeat the training with $10$ different weight initialisations and report the mean of the test MAEs at the time of early stopping. In accordance with the best performing baselines, our CIN model does not use any dropout, and stacks $4$ message passing layers with hidden size $128$. In order to enforce the parameter budget we reduce the size of hidden layers to $48$ and only perform $2$ message passing layers. At each dimension, cell embeddings are readout via summation.

\paragraph{Mol-HIV} This dataset comprises $41127$ molecular graphs associated with a binary label representing their capacity to inhibit HIV replication. The benchmark provides predefined train, validation and test sets based on the ``scaffold splitting'' procedure, which separates molecules based on their two-dimensional structural frameworks~\citep{hu2020open}. As in \textbf{ZINC}, graphs are attributed at the level of nodes and edges, and we directly employ the atom and bond embedding layers provided by the benchmarking platform\footnote{\url{https://github.com/snap-stanford/ogb/blob/master/ogb/graphproppred/mol_encoder.py}} to populate $0$- and $1$-dimensional cells. Rings of size up to $k=6$ are considered as $2$-cells, and are endowed with feature vectors with the same procedure as in \textbf{ZINC}. The value $k=6$ has been chosen from the pool of values $\{ 6, 8, 18 \}$ as it yielded the highest validation performance. The architecture hyperparameters are directly replicated from the HIMP model in~\citet{Fey2020_himp}: $2$ message passing layers, dropout rate of $0.5$ applied after each layer, $64$ as size of hidden layers, constant learning rate of $10^{-4}$, batch size of $128$. We train our model for $150$ epochs. The small CIN model is obtained by simply reducing the size of hidden layers to $48$. At each dimension, cell embeddings are readout via averaging.

\subsection{Ablation study on ZINC}

We end this section by reporting the results of an ablation study we conducted on the ZINC dataset to appreciate the contribution of including rings. In Table~\ref{tab:zinc_abl} we show the average test MAE for two additional CIN models: ``CIN No-Rings small'' and ``CIN No-Rings'', which differ from their original counterparts in that they neglect $2$-cells when performing message passing. In these experiments we always make use of edge features and use the same hyperparameters as our original CIN model. 

\begin{wraptable}{l}{0.4\textwidth}
    \centering
    \vspace{-12pt}
    \begin{minipage}[t]{1.0\linewidth}
        \caption{ZINC Ablation with edge features. The ablation shows the benefits of integrating rings into the message passing procedure.}
        \label{tab:zinc_abl}
        \resizebox{\columnwidth}{!}{
        \begin{tabular}{l  c}
            \toprule
            Method & 
            MAE \\
            \midrule
            
            GatedGCN \citep{bresson2017residual} &
            0.363$\pm$0.009  \\
            
            GIN \citep{GIN}  & 
            0.252$\pm$0.014  \\
            
            PNA \citep{Corso2020_PNA} & 
            0.188$\pm$0.004  \\

            DGN \citep{beaini2020directional} & 
            0.168$\pm$0.003  \\
            
            HIMP \citep{Fey2020_himp} &
            0.151$\pm$0.006 \\
            
            GSN \citep{bouritsas2020improving} & 
            0.108$\pm$0.018  \\
            
            \midrule
            
            GIN-E Custom &
            0.196$\pm$0.007 \\
            
            CIN No-Rings small & 
            0.174$\pm$0.006 \\
            
            CIN No-Rings & 
            0.159$\pm$0.007 \\
            
            CIN-small &
            0.094$\pm$0.004 \\
            
            CIN  & 
            \textbf{0.079}$\pm$\textbf{0.006}  \\
            
            \bottomrule
        \end{tabular}%
        }
    \end{minipage}
    \vspace{-12pt}
\end{wraptable}

In line with our expectations, we observe a decrease in the overall performance of both versions. They are outperformed by the GSN~\citep{bouritsas2020improving} and HIMP~\citep{Fey2020_himp} models, which either include structural information from cycle isomorphism counting (GSN) or additionally perform message passing on the Junction Tree representation of molecules (where rings are considered as nodes). At the same time, we observe ``CIN No-Rings'' still outperforms all other ring-agnostic baselines. We attribute such strong performance to the more natural and richer modelling of edge signals ($1$-cells): this model updates edge representations at each layer as a function of the present representations and those of the incident nodes ($0$-cells). As an additional confirmation of this hypothesis, we implemented an architecture which replicates the same structure as ``CIN No-Rings'', but replaces cellular message passing with GIN-E layers~\citep{hu2020pretraining}. These layers extend the message passing scheme in GIN by accounting for edge features. We refer to this model as ``GIN-E Custom''. Contrary to CIN, it does not update edge representations and performs readout only at the node level. As expected, we observed that ``GIN-E Custom'' is outperformed by all our models, including, in particular, ``CIN No-Rings''.

\end{document}